\newtheorem{theorem}{Theorem}
\newtheorem{lemma}[theorem]{Lemma}
\newtheorem{assumption}{Assumption}
\newtheorem{proposition}{Proposition}
\newtheorem{definition}{Definition}
\newtheorem{remark}{Remark}
\DeclareMathOperator*{\argmax}{\arg\max}
\newcommand{\indep}{\perp \!\!\! \perp}
\begin{document}

\title{Progressive Generalization Risk Reduction for Data-Efficient Causal Effect Estimation}

\author{Hechuan Wen}
\affiliation{%
  \institution{The University of Queensland}
  \city{Brisbane}
  \country{Australia}
}
\email{h.wen@uq.edu.au}

\author{Tong Chen}
\affiliation{%
  \institution{The University of Queensland}
  \city{Brisbane}
  \country{Australia}
}
\email{tong.chen@uq.edu.au}

\author{Guanhua Ye}
\affiliation{%
  \institution{Beijing University of Posts and Telecommunications, China}
  \city{}
  \country{}
}
\email{g.ye@bupt.edu.cn}

\author{Li Kheng Chai}
\affiliation{%
 \institution{Health and Wellbeing Queensland}
 \city{Brisbane}
 \country{Australia}}
 \email{likheng.chai@hw.qld.gov.au}

\author{Shazia Sadiq}
\affiliation{%
  \institution{The University of Queensland}
  \city{Brisbane}
  \country{Australia}
}
\email{shazia@eecs.uq.edu.au}

\author{Hongzhi Yin}
\authornote{Corresponding Author}
\affiliation{%
  \institution{The University of Queensland}
  \city{Brisbane}
  \country{Australia}
}
\email{h.yin1@uq.edu.au}

\renewcommand{\shortauthors}{Wen et al.}

\begin{abstract}
  Causal effect estimation (CEE) provides a crucial tool for predicting the unobserved counterfactual outcome for an entity. As CEE relaxes the requirement for ``perfect'' counterfactual samples (e.g., patients with identical attributes and only differ in treatments received) that are impractical to obtain and can instead operate on observational data, it is usually used in high-stake domains like medical treatment effect prediction. Nevertheless, in those high-stake domains, gathering a decently sized, fully labelled observational dataset remains challenging due to hurdles associated with costs, ethics, expertise and time needed, etc., of which medical treatment surveys are a typical example. Consequently, if the training dataset is small in scale, low generalization risks can hardly be achieved on any CEE algorithms. 
  
  Unlike existing CEE methods that assume the constant availability of a dataset with abundant samples, in this paper, we study a more realistic CEE setting where the labelled data samples are scarce at the beginning, while more can be gradually acquired over the course of training -- assuredly under a limited budget considering their expensive nature. Then, the problem naturally comes down to actively selecting the best possible samples to be labelled, e.g., identifying the next subset of patients to conduct the treatment survey. However, acquiring quality data for reducing the CEE risk under limited labelling budgets remains under-explored until now. To fill the gap, we theoretically analyse the generalization risk from an intriguing perspective of progressively shrinking its upper bound, and develop a principled label acquisition pipeline exclusively for CEE tasks. With our analysis, we propose the Model Agnostic Causal Active Learning (MACAL) algorithm for batch-wise label acquisition, which aims to reduce both the CEE model's uncertainty and the post-acquisition distributional imbalance simultaneously at each acquisition step. Extensive experiments are conducted on three datasets, where a clear empirical performance gain from MACAL is observed over state-of-the-art active learning baselines. The implementation repository is open-sourced at: \url{https://github.com/uqhwen2/MACAL}.
\end{abstract}


\begin{CCSXML}
<ccs2012>
   <concept>
       <concept_id>10010147.10010257</concept_id>
       <concept_desc>Computing methodologies~Machine learning</concept_desc>
       <concept_significance>500</concept_significance>
       </concept>
 </ccs2012>
\end{CCSXML}

\ccsdesc[500]{Computing methodologies~Machine learning}

\keywords{Causal Effect Estimation, Active Learning, Generalization Risk Reduction}

\maketitle

\section{Introduction}
Understanding causal effects to support decision-making in high-stake domains is crucial, where typical examples include randomized control trials in medication \cite{pilat2015exploring}, A/B testing for business decision-making \cite{kohavi2015online}, and the potential in advancing big data management \cite{nguyen2017argument,hung2017computing,nguyen2017retaining,yin2020overcoming}. As performing large-scale and statistically reliable human tests is prohibitively costly, algorithms for causal effect estimation (CEE) using passively observed data samples have become a promising solution \cite{johansson2016learning,yao2021survey,wang2024optimal}. In short, a CEE algorithm is trained with observational data to predict the counterfactual outcome for an entity, e.g., what the outcome will be if a patient received the other treatment, instead of the one already had.

To perform CEE, a common practice is to build a regression model that estimates a continuous effect value \cite{johansson2016learning,yao2021survey,shalit2017estimating,yao2018representation,chauhan2024dynamic}, which is trained on the observational data containing two groups of samples. The groups are formed based on the treatment\footnote{Depending on the context of applications, treatments can also be interpreted as interventions, services, or information provided to an individual.} imposed on each sample, where each sample consists of raw attributes drawn from a well-defined feature space (e.g., a patient's health indicators), and a label that corresponds to the observed outcome after receiving the binary treatment (e.g., blood sugar concentration after taking one diabetes medicine).  


For training a capable CEE model, a quality observational dataset with diverse and abundant samples is highly desirable. On the one hand, as in many other tasks, richer training data enables the model to better capture predictive patterns. On the other hand, this also helps maintain some pivotal CEE assumptions \cite{imbens2015causal} on the training data, where the positivity (a.k.a. overlapping) assumption is arguably a very fundamental one. Practically, as each distinct sample only receives one treatment, positivity requires statistically identical attribute distributions between two treatment groups, such that counterfactual predictions can be confidently made. 
Given that, the majority of CEE models \cite{shalit2017estimating, louizos2017causal,yao2018representation, wen2023variational} are trained on a fix-sized dataset with sufficient samples, where the positivity assumption can easily hold. However, such a setting oversimplifies the data availability in high-stake domains -- the major adopters of CEE. The challenge often lies in obtaining the ground truth label on the treatment outcome of each sample. For instance, though clinics record patients' health-related attributes when performing a treatment, the real post-treatment outcome can only be obtained through longitudinal surveys \cite{rosenbaum2005danish} over a long time period, and is subject to ethical concerns. Furthermore, in a business context, the effect of a treatment (e.g., a sales campaign) cannot be reliably quantified without ample expertise and evidence. As a result, the sufficiency of labelled training data in CEE is not always guaranteed, hindering the real-life practicality of existing CEE pipelines. 

As a response, in this paper, we subsume CEE under a more realistic setting: \textit{the availability of labelled samples in both treatment groups monotonically grows}. Essentially, this translates into an active learning (AL) paradigm \cite{settles2009active} for CEE tasks. Considering the expensive nature of labelling the treatment outcome of all samples, we allow a CEE model's training to start with a very small portion of labelled data within both treatment groups, then gradually and selectively extend to the remaining unlabelled samples by assigning post-treatment outcome labels.  
As a side effect of this more practical setting, a CEE model with low generalization risk is harder to obtain, especially at early training stages where the labelled dataset is small in scale, limiting the informativeness and compliance to positivity. 
With a standard AL algorithm \cite{wang2015active, gal2017deep, sener2018active, ren2021survey, zhan2022comparative}, during the progress of label acquisition, informative samples can be selected for labelling and enriching the training data. In scenarios where counterfactual predictions are not needed, such active label acquisition is proven useful \cite{ren2021survey}. However, in CEE, with the existence of two treatment groups, the direct adoption of AL will incur sub-optimal results within the given labelling budget. This is because the acquisition criterion is not designed to account for the crucial positivity assumption, thus failing to align the sample distributions between the two groups. Consequently, the generalizability of the actively trained CEE algorithm will be harmed by the ill-posed data distribution.  




Bearing this motivation, we aim to answer the important question: 
\textit{how to label the most informative samples in CEE tasks?} 
Assuming the label availability of the samples in both treatment groups, active learning for CEE should meet two desiderata: 1) maximize the positivity among the chosen samples to be labelled during dataset expansion; 2) improve the generalizability of the CEE model. Recently, this niche area of study has started drawing more attention, however, as we will discuss later, only a few models \cite{qin2021budgeted,jesson2021causal,addanki2022sample} are suited for CEE task with AL. In this paper, we propose an intriguing perspective to conduct efficient selective labelling exclusively for the CEE task under the AL paradigm. Unlike the other closely-related approaches \cite{qin2021budgeted}, we look directly into the theoretical analysis of the risk upper bound without loosening it, where we then propose a theory-inspired, simplified yet effective label acquisition criterion for batch-mode AL with paired samples. We summarize our contributions as follows:

\begin{itemize}
    \item We study the well-under-explored yet important and practical topic -- active learning for causal effect estimation, where the conventional active learning paradigm failed to obtain the optimal label acquisition scheme, and the existing studies are yet able to well solve the violation of positivity during the process of the label acquisition.
    
    \item We propose a theoretical framework for causal effect estimation under the active learning paradigm, where a more informative risk upper bound is decomposed and leveraged for algorithm design. Inspired by the proposed theory, we come up with a simplified yet effective label acquisition criterion, namely Model Agnostic Causal Active Learning (MACAL) for label acquisition by promoting the individual sample diversity in different treatment groups and penalizing the treatment pair dissimilarity. Also, the mathematically guaranteed risk convergence is given under certain conditions to justify the acquisition algorithm.
    
    \item We compete against numerous SOTA baselines by fixing the label acquisition criterion as the only variable during benchmarking. Extensive experiments are conducted on various combinations of different datasets and downstream CEE models, and demonstrable performance gain from MACAL is observed across all comparisons.
\end{itemize}

\section{Preliminaries}
\subsection{Causal Effect Estimation}

Under the potential outcome framework \cite{imbens2015causal}, the individual treatment effect (ITE) is expected to be estimated with the tabular dataset $\mathcal{D}=\{\textbf{x}_{i}, t_{i}, y_{i}\}_{i=1}^{N}$, where $\textbf{x}_i$, $t_i$, $y_i$ are respectively the raw feature variables, observed treatment, treatment outcome that correspond to the $i$-th individual. For simplicity, we consider the binary treatment $t$ of 1 and 0 to denote the different treatment statuses, respectively. The ground truth ITE for an individual with feature vector $\textbf{x}$ is defined as:
\begin{equation}\label{eq:true_effect}
   \tau(\textbf{x}) = \mathbb{E}[Y^{t=1} - Y^{t=0}| \textbf{x}],
\end{equation} where $Y^{t=1}$ and $Y^{t=0}$ are the unobserved potential outcomes with treatment $t=1$ and $t=0$ respectively. Generally, under the deep neural network learning framework \cite{shalit2017estimating, louizos2017causal}, the common practice is to transform the raw variable $\textbf{x}_{i}$ into the latent representation $\textbf{z}_i$ for individual $i$, then such representation is utilised for ITE prediction. To evaluate the performance of the CEE model, the generalization risk of the model denoted by $\epsilon_{\text{PEHE}}$, is defined in (\ref{eq:pehe}) according to the well-recognized literature \cite{hill2011bayesian}. The lower the value of $\epsilon_{\text{PEHE}}$, the better the performance of the predictor.

\begin{definition}
    \textit{The expected Precision in Estimation of Heterogeneous Effect (PEHE) of the CEE model $f=\{\phi, \Phi\}$ with squared loss metric $L(\cdot, \cdot)$ is defined as:}
\begin{equation}
    \epsilon_{\text{PEHE}}(f)=\int_{\mathcal{X}}L_{f}(\mathbf{x})p(\mathbf{x})d\mathbf{x},
    \label{eq:pehe}
\end{equation} where we denote $L(\hat{\tau}(\mathbf{x}), \tau(\mathbf{x}))$ as $L_{f}(\mathbf{x})$ for notation simplicity. The $\tau(\mathbf{x})$ is the ground truth treatment effect defined in (\ref{eq:true_effect}) and $\hat{\tau}(\mathbf{x})$ is its estimation.
\end{definition}

For clarity, we formally define the CEE problem as follows:

\begin{definition}[Causal Effect Estimation\label{definition:cee}]
    Given the dataset $\mathcal{D}$, the common pipeline is to train an estimator $f$ that can give the causal effect estimation $\hat{\tau}(\textbf{x})$ to be as accurate as possible to the ground truth $\tau(\textbf{x})$ for entity \textbf{x}, i.e., we aim to empirically minimize the evaluation metric $\epsilon_{\text{PEHE}}(f)$.
\end{definition}

To validate the CEE $\hat{\tau}(\textbf{x})$, three common assumptions from the causal inference literature are needed to lay the theoretical foundation. These assumptions are stated as follows:

\begin{assumption}[Stable Unit Treatment Value Assumption (SUTVA) \cite{imbens2015causal}\label{assumption:sutva}]
For any individual: (a) the potential outcomes for the individual do not vary with the treatment assigned to other individuals; and (b) there are no different forms or versions of each treatment that may lead to different potential outcomes.
\end{assumption}

\begin{assumption}[Unconfoundedness\label{assumption:unconf}]
The independence relation $\{Y^{t=0}, Y^{t=1}\}\indep t | \textbf{x}$ holds, where treatment assignment $t$ is independent to the potential outcomes $\{Y^{t=0}, Y^{t=1}\}$ given the covariate \textbf{x}. 
\end{assumption}

\begin{assumption}[Positivity\label{assumption:positivity}]
For every individual with feature covariate $\textbf{x}$, the treatment assignment mechanism obeys: $0<p(t=1|\textbf{x})<1$.
\end{assumption}

The causal effect identifiability stated in Proposition \ref{proposition:identifiability} is needed to finalize the validation of the estimation results. The proof of the proposition is provided in Appendix \ref{appendix:identifiability}.

\begin{proposition}[Identifiability\label{proposition:identifiability}]
The causal effect is identifiable if and only if the SUTVA, the unconfoundedness, and the positivity assumptions hold.
\end{proposition}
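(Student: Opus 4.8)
The plan is to establish the ``if and only if'' by showing that, under SUTVA and unconfoundedness, the positivity assumption is exactly the condition that lets us rewrite the counterfactual quantities $\mathbb{E}[Y^{t=1}\mid\mathbf{x}]$ and $\mathbb{E}[Y^{t=0}\mid\mathbf{x}]$ purely in terms of the observational distribution $p(\mathbf{x},t,y)$. For the ``if'' direction, I would first invoke SUTVA (Assumption \ref{assumption:sutva}) so that the observed outcome satisfies the consistency relation $y = t\,Y^{t=1} + (1-t)\,Y^{t=0}$, meaning the observed $y$ for a unit with $t=1$ equals its potential outcome $Y^{t=1}$ (and similarly for $t=0$). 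Then I would use unconfoundedness (Assumption \ref{assumption:unconf}), $\{Y^{t=0},Y^{t=1}\}\indep t\mid\mathbf{x}$, to swap the conditioning set: $\mathbb{E}[Y^{t=1}\mid\mathbf{x}] = \mathbb{E}[Y^{t=1}\mid\mathbf{x}, t=1] = \mathbb{E}[Y\mid\mathbf{x}, t=1]$, where the last step is consistency. The analogous identity holds for $t=0$, and subtracting gives $\tau(\mathbf{x}) = \mathbb{E}[Y\mid\mathbf{x},t=1] - \mathbb{E}[Y\mid\mathbf{x},t=0]$, a functional of observables only.

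The key observation is \emph{where} positivity enters: the conditional expectations $\mathbb{E}[Y\mid\mathbf{x},t=1]$ and $\mathbb{E}[Y\mid\mathbf{x},t=0]$ are only well-defined when the conditioning events have positive probability, i.e.\ when $p(t=1\mid\mathbf{x})>0$ and $p(t=0\mid\mathbf{x})>0$, which is precisely Assumption \ref{assumption:positivity}. So under all three assumptions, $\tau(\mathbf{x})$ is a well-defined function of the observational distribution for every $\mathbf{x}$ in the support of $p(\mathbf{x})$, hence identifiable. For the converse (``only if''), I would argue contrapositively: if positivity fails, there is a covariate value (or a positive-measure region) $\mathbf{x}_0$ with, say, $p(t=1\mid\mathbf{x}_0)=0$. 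Then the observational data contains no units with $\mathbf{x}=\mathbf{x}_0$ receiving treatment $1$, so $\mathbb{E}[Y^{t=1}\mid\mathbf{x}_0]$ is unconstrained by the data; one can exhibit two data-generating processes agreeing on the observational law $p(\mathbf{x},t,y)$ but differing in $\mathbb{E}[Y^{t=1}\mid\mathbf{x}_0]$, hence differing in $\tau(\mathbf{x}_0)$, which contradicts identifiability. Similarly, SUTVA and unconfoundedness are each necessary: dropping SUTVA breaks the consistency link between $Y$ and the potential outcomes (interference or multiple treatment versions make the mapping ill-defined), and dropping unconfoundedness permits a hidden confounder whose different settings keep $p(\mathbf{x},t,y)$ fixed while shifting $\tau(\mathbf{x})$ -- the classic non-identifiability construction.

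I expect the main obstacle to be the ``only if'' direction, specifically making the non-identifiability counterexamples rigorous rather than hand-wavy: one must explicitly construct, for each violated assumption, a pair of joint distributions over $(\mathbf{x}, t, Y^{t=0}, Y^{t=1})$ (or over the expanded space with a confounder, or with interference) that induce identical observational marginals yet distinct $\tau(\cdot)$. For the positivity violation this is the cleanest -- freely redefining the counterfactual branch on the unsupported region -- whereas for unconfoundedness one needs a concrete latent-variable model (a simple binary hidden confounder suffices) and for SUTVA one needs a careful statement of what ``identical observational data'' even means once the stable-unit structure is abandoned. I would keep these constructions minimal (small discrete examples), and note that the forward direction's only subtlety is the standard measure-theoretic caveat that the identification holds $p(\mathbf{x})$-almost everywhere. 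Everything else is a short chain of substitutions using consistency and the independence relation.
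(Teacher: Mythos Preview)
Your ``if'' direction is essentially identical to the paper's argument: split the expectation by linearity, use unconfoundedness to insert the treatment indicator into the conditioning set, use consistency (from SUTVA) to replace the potential outcome with the observed outcome, and invoke positivity so that the resulting conditional expectations $\mathbb{E}[y\mid\mathbf{x},t{=}1]$ and $\mathbb{E}[y\mid\mathbf{x},t{=}0]$ are well-defined. That is exactly the three-line chain the paper writes down.

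Where you differ is that the paper's proof \emph{stops there}: despite the biconditional in the statement, the paper never argues the ``only if'' direction. Your contrapositive plan (construct two data-generating processes with the same observational law but different $\tau(\cdot)$ whenever one assumption fails) is the standard way to make necessity rigorous, and your positivity-violation construction is the clean one. Just be aware that, taken literally, the ``only if'' is fragile: identifiability can hold under alternative or weaker assumptions (constant effects, instruments, front-door adjustment), so necessity should really be read as ``necessary absent further structural restrictions.'' The paper does not engage with this subtlety at all, so your proposal is in fact more complete than the paper's own proof; you could safely match the paper by presenting only the forward direction, or keep your converse with an explicit ``in the absence of additional assumptions'' qualifier.
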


\begin{figure*}[ht!]
  \centering
  \includegraphics[scale=0.65]{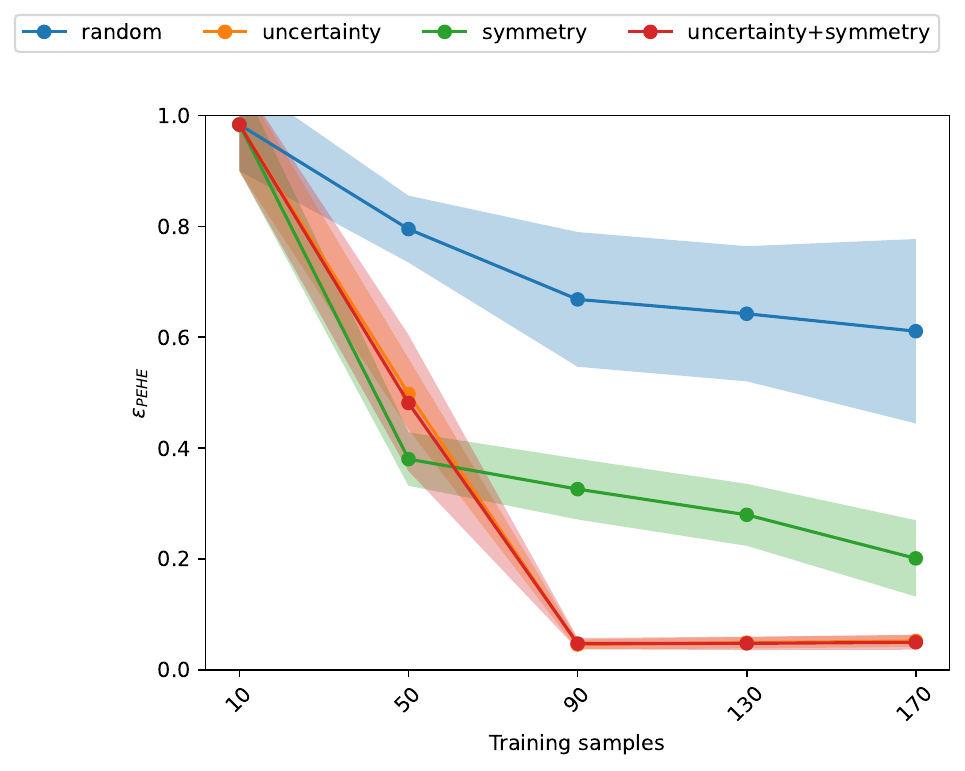}\\
  \vspace{-0.3cm}
  \subfigure[$\sqrt{\epsilon_{\text{PEHE}}}$\label{fig:epsilon}]{\includegraphics[width=0.325\textwidth]{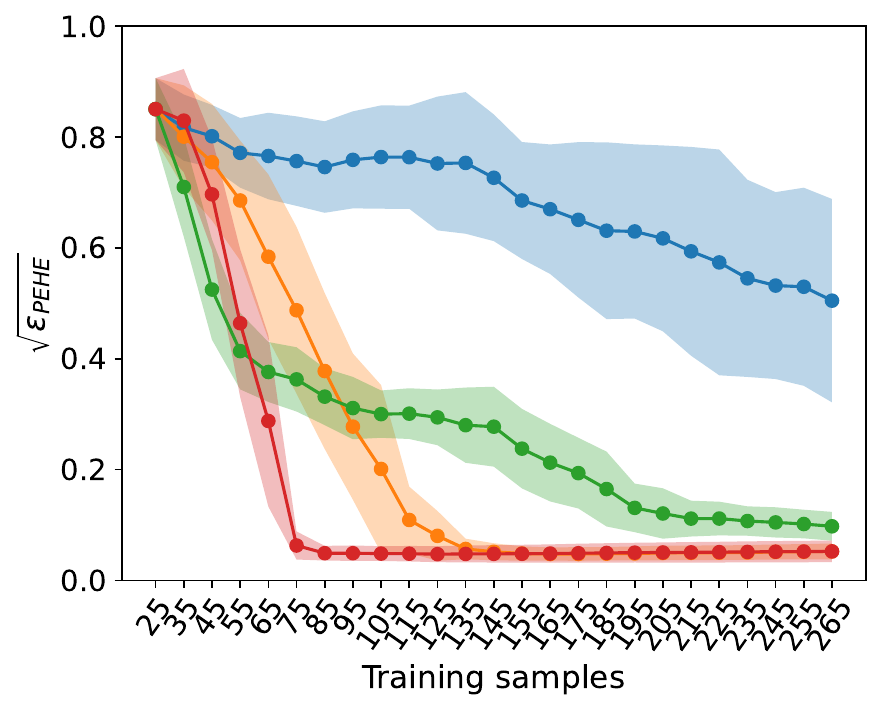}}
  \subfigure[Wasserstein Distance\label{fig:wass}]{\includegraphics[width=0.32\textwidth]{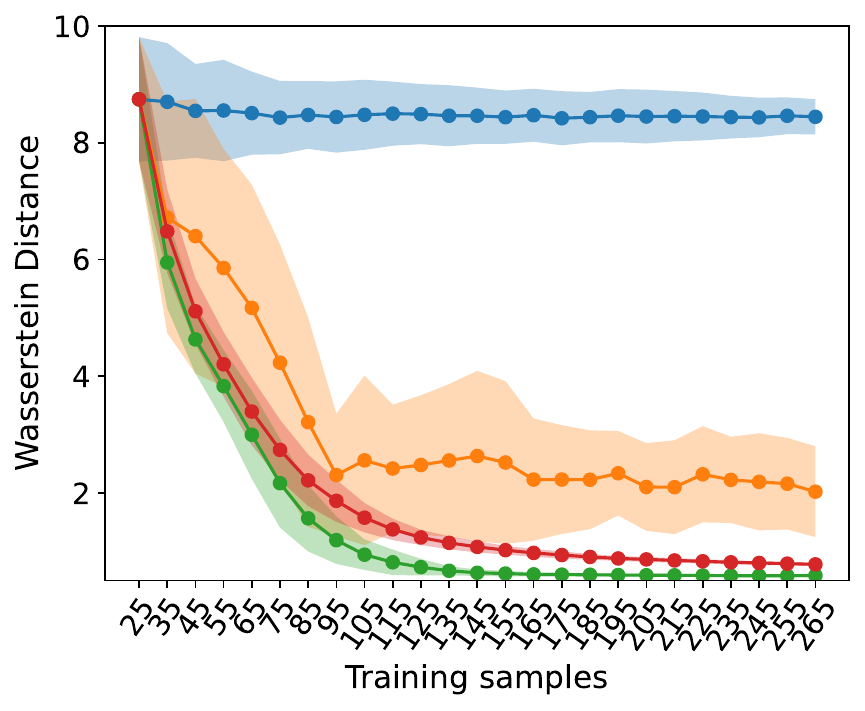}}
  \subfigure[Model Variance\label{fig:var}]{\includegraphics[width=0.33\textwidth]{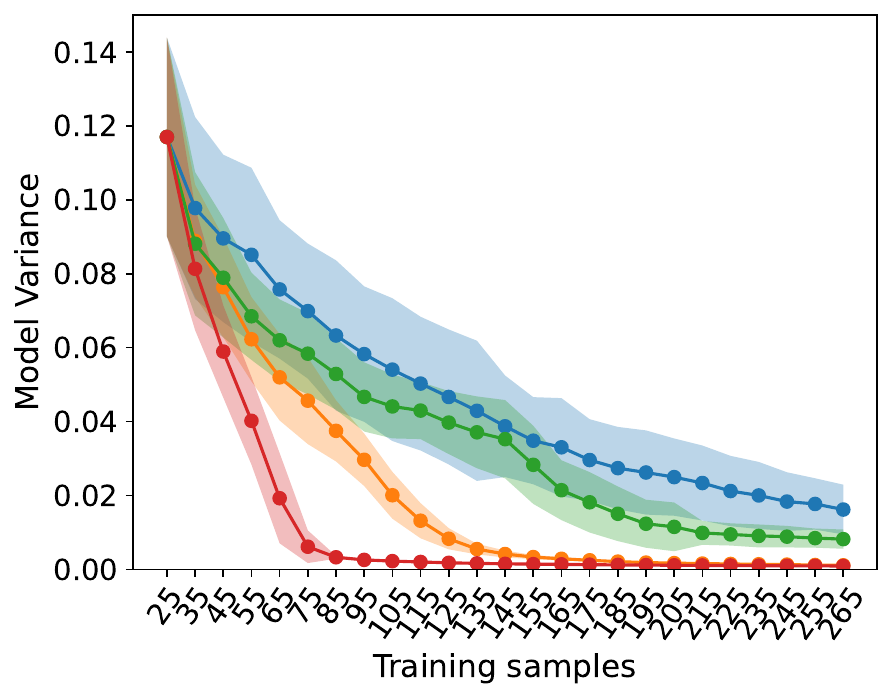}}
  \vspace{-0.3cm}
  \caption{Toy example indicating the importance of considering the reduction of both distributional discrepancy and model variance to help quickly achieve the lowest risk.}
  \label{fig:toy_dataset}
  \vspace{-0.3cm}
\end{figure*}

\subsection{Active Learning as the Challenge and Opportunity\label{section:optimal_ac_on_cee}}

When CEE meets active learning, the general logistics of the query steps become: 1) Let the CEE model $f$ get trained on the labelled training set $\mathcal{D}_{\text{train}}=\{\textbf{x}_{i}, t_{i}, y_{i}\}_{i=1}^{N_{\text{train}}}$. 2) Given the unlabelled pool set $\mathcal{D}_{\text{pool}}=\{\textbf{x}_{i}, t_{i}\}_{i=1}^{N_{\text{pool}}}$, the pre-defined label acquisition criterion (normally has trained model $f$ embedded, e.g., uncertainty-aware CEE model \cite{jesson2021causal}) examines through the pool set and returns a subset of it, i.e., $\Tilde{\mathcal{D}}$, for the oracle to label. 3) The labelled subset $\Tilde{\mathcal{D}}$ is added to the training set for which the CEE model $f$ can get updated before the upcoming querying round starts, then return to Step 1). Note, that the samples with attributes are already available in the pool set but without the labels, the process only attaches ground truth labels to them. Such a recursive procedure terminates mostly when the desired performance is reached or the labelling budget gets exhausted. Subsequently, let's take one step further from the conventional CEE problem defined in Definition \ref{definition:cee}, we form the research problem of CEE under the AL paradigm as follows:

\begin{definition}[Active Causal Effect Estimation\label{definition:acee}]
    Causal effect estimations with active learning aims to expand the current dataset with more informative samples such that the trained model's estimation risk $\epsilon_{\text{PEHE}}$ can be significantly reduced before the exhaustion of the labelling budget.
\end{definition}

It is noted that the Assumption \ref{assumption:positivity} regarding positivity is a vulnerable one in the real-world scenario for the CEE research field. The risk of deviating from such an assumption and thus leading to the unidentifiable causal effect has been widely discussed in \cite{jesson2020identifying,wen2023predict}. As a result of the active causal effect estimation defined in Definition \ref{definition:acee}, if the selective labelling process keeps introducing more imbalance (e.g., a non-negative crucial part of the CEE risk upper bound \cite{shalit2017estimating}) to the current treatment groups' distributions, the use of AL to expand the labelled dataset will hardly help obtain optimal CEE model with significant estimation risk reduction. However, as we will discuss in the following, AL also sheds light on reducing the estimation risk by expanding the dataset if it is properly configured.

Assume that we initially have a large enough pool set $\mathcal{D}_{\text{pool}}$, but labelling all the samples, i.e., obtaining all ITEs, is infeasible due to the considerable cost of time and capital. Ideally, there exists a smallest optimal subset $\mathcal{D}_{\text{opt}}$ where the positivity assumption holds across the sample space $\mathcal{X}$. Additionally, treatment groups' distributions, $p^{t=1}_{\text{opt}}$ and $p^{t=0}_{\text{opt}}$, are identical such that the distributional discrepancy measured by integral probability metric (IPM) is statistically zero, i.e., $\text{IPM}(p^{t=1}_{\text{opt}}, p^{t=0}_{\text{opt}})=0$. Given a sparse warm-up set (at $i=0$ query step), the current distributions of the treatment groups, i.e., $p^{t=1}_{i=0}$ and $p^{t=0}_{i=0}$, are realistically not the same, i.e.,  $\text{IPM}(p^{t=1}_{i=0}, p^{t=0}_{i=0})=\mathcal{I}_{i=0}\neq0$. During the recursive selective labelling process, a growing number of samples are added into the training set $\mathcal{D}_{\text{train}}$ -- \textit{the challenge is}, the current disparity between different treatment groups' distributions can be amplified, e.g., $\mathcal{I}_{i=10}\gg\mathcal{I}_{i=0}$ after 10 uncontrolled query steps, thus further countering the positivity even with more data. Meanwhile, the opportunity is, with proper acquisition setup, we can not only reduce the imbalance in the training set after every query step, i.e., $\mathcal{I}_{i+1}\ll\mathcal{I}_{i}$, but also quickly converge to the optimal set by using the smallest budget, e.g., for each of the treatment group $t$, $\text{IPM}(p^{t}_{\text{opt}}, p^{t}_{i=\text{I}})\rightarrow0$ after I iterations, reaching the lowest risk. 

Therefore, keep reconciling the positivity assumption during the active learning process plays a crucial role in obtaining a lower risk for CEE. In what follows, we detail our label acquisition design in every query step to fulfil this principle.

\section{Methodology\label{section:methodology}}

\subsection{Theory and Practice\label{section:theory_practice}}

In this paper, we focus on batch-mode active learning (BMAL). The reason for conducting batch-mode active learning is to acquire more samples at one query step for the oracle to label them, thus reducing the frequency of retraining the model in case the model training is costly. In the following, we propose a maximum risk upper bound reduction theorem for CEE with active learning, and the main proof of the general theorem -- Theorem \ref{theorem:1} is provided in Appendix \ref{appendix:theorem_1}, followed by the sub-proofs for each of the convergence analysis in Appendix \ref{proof:convergence_var} and \ref{proof:convergence_wass}.

\begin{theorem}\label{theorem:1}
    With budget $\mathcal{M}$, the maximum risk upper bound reduction $\Delta_{\mathcal{B}_{\text{overall}}}$ is achieved at the termination of the entire I data query steps given that the generalization risk upper bound shrinkage $\Delta_{\mathcal{B}_{i}}$ is maximized at each query step $\forall i$, i.e.:
    \begin{equation}
        \begin{split}
            \argmax_{\Tilde{\mathcal{D}}_{\text{overall}}}~&\Delta_{\mathcal{B}_{\text{overall}}}=\bigcup_{i=1}^{I}\argmax_{\Tilde{\mathcal{D}}_{i}}\Delta_{\mathcal{B}_{i}}\\
            &s.t.~ |\Tilde{\mathcal{D}}_{\text{overall}}|\leq\mathcal{M},
        \end{split}
    \end{equation}
    where $\Tilde{\mathcal{D}}_{\text{overall}}$ is the overall acquired data, $\Tilde{\mathcal{D}}_{i}$ is the acquired batch at $i$-th query step, and $\Delta_{\mathcal{B}_{i}}=\sum_{t\in\{0,1\}}\Delta_{\text{Var}_{i}}^{t}(\Tilde{\mathcal{D}}_{i})+C_{\phi}\Delta_{\text{IPM}_{i}}(\Tilde{\mathcal{D}}_{i})$. The convergence rate of the risk upper bound has the following guaranteed behaviours under certain circumstances:
    \begin{enumerate}[label=\roman*)] 
    \item When variance reduction $\sum_{t\in\{0,1\}}\Delta_{\text{Var}_{i}}^{t}(\Tilde{\mathcal{D}}_{i})$ becomes the dominant part of the risk upper bound, the risk convergence is lower-bounded by $\Omega(\beta^{i})$ with constant $\beta\in[0,1)$.
    \item While, with dominant constant $C_{\phi}$, the risk convergence is upper-bounded by $\mathcal{O}(\frac{1}{i+\gamma_{0}})$ with constant $\gamma_{0}\in\mathbb{R}^{+}$.
    \end{enumerate}
\end{theorem}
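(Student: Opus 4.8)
The plan is to split the argument into two essentially independent pieces: (a) an additive, telescoping decomposition of the overall bound reduction that turns the per-step greedy rule into the global optimum under the budget, and (b) a separate analysis of the two convergence regimes driven by which term of the bound dominates.

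\textbf{Telescoping of the bound.} I would start from the standard representation-based upper bound on $\epsilon_{\text{PEHE}}$ for $f=\{\phi,\Phi\}$, which splits into a factual-error term, the two treatment-group predictive-variance terms $\sum_{t\in\{0,1\}}\text{Var}^{t}$, and the $C_{\phi}$-weighted imbalance term $C_{\phi}\,\text{IPM}(p^{1},p^{0})$. Writing $\mathcal{B}_{i}$ for the value of this bound after the $i$-th query round and setting $\Delta_{\mathcal{B}_{i}}:=\mathcal{B}_{i-1}-\mathcal{B}_{i}$, I would argue that the factual-risk piece is pinned down by the training procedure on the retained labels, so the only part of the bound that moves between consecutive rounds is the variance and IPM pieces; hence $\Delta_{\mathcal{B}_{i}}=\sum_{t\in\{0,1\}}\Delta_{\text{Var}_{i}}^{t}(\Tilde{\mathcal{D}}_{i})+C_{\phi}\Delta_{\text{IPM}_{i}}(\Tilde{\mathcal{D}}_{i})$, and, telescoping over rounds, $\Delta_{\mathcal{B}_{\text{overall}}}=\mathcal{B}_{0}-\mathcal{B}_{I}=\sum_{i=1}^{I}\Delta_{\mathcal{B}_{i}}$.

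\textbf{Greedy equals globally optimal.} The acquired data is a disjoint union $\Tilde{\mathcal{D}}_{\text{overall}}=\bigcup_{i=1}^{I}\Tilde{\mathcal{D}}_{i}$ with $\sum_{i}|\Tilde{\mathcal{D}}_{i}|=|\Tilde{\mathcal{D}}_{\text{overall}}|\le\mathcal{M}$, and fixing the per-round batch size makes the budget constraint separable across rounds. Since the objective from the previous step is the sum $\sum_{i}\Delta_{\mathcal{B}_{i}}$ and, conditioned on the state (training set plus fitted model) entering round $i$ -- which is fully determined by the earlier selections -- the $i$-th summand is a function of $\Tilde{\mathcal{D}}_{i}$ alone, the maximizer over the product feasible set is attained by maximizing each summand separately, which yields $\argmax_{\Tilde{\mathcal{D}}_{\text{overall}}}\Delta_{\mathcal{B}_{\text{overall}}}=\bigcup_{i=1}^{I}\argmax_{\Tilde{\mathcal{D}}_{i}}\Delta_{\mathcal{B}_{i}}$. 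To make this fully rigorous I would state it for a fixed schedule of batch sizes and argue by induction on $i$, so that ``maximize at each step'' refers unambiguously to the sequence of states visited by the greedy policy.

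\textbf{The two convergence regimes.} For part (i), assume $V_{i}:=\sum_{t\in\{0,1\}}\text{Var}_{i}^{t}$ dominates $\mathcal{B}_{i}$. I would invoke the diminishing-returns property of the predictive variance viewed as a set function of the labelled pool (submodularity / bounded curvature of the information gain): greedily choosing the batch maximizing $\sum_{t}\Delta_{\text{Var}_{i}}^{t}$ then removes a constant fraction of the still-reducible variance each round, i.e.\ $V_{i}\le\beta\,V_{i-1}$ for a fixed $\beta\in[0,1)$; unrolling gives $V_{i}\le\beta^{i}V_{0}$, hence the geometric convergence (the $\Omega(\beta^{i})$ rate of the statement). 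For part (ii), assume $C_{\phi}$ is large enough that $C_{\phi}\,\text{IPM}(p_{i}^{1},p_{i}^{0})$ governs $\mathcal{B}_{i}$. Each acquired group-balancing sample perturbs the two empirical treatment-group measures by $\mathcal{O}(1/n_{i})$, so a matching / optimal-transport estimate gives $\text{IPM}(p_{i}^{1},p_{i}^{0})=\mathcal{O}(1/n_{i})$ once the criterion is closing the group mismatch; with constant batch size $b$ and warm-up size $n_{0}$ we have $n_{i}=n_{0}+bi$, so $\mathcal{B}_{i}=\mathcal{O}\!\big(1/(bi+n_{0})\big)=\mathcal{O}\!\big(1/(i+\gamma_{0})\big)$ with $\gamma_{0}=n_{0}/b\in\mathbb{R}^{+}$.

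\textbf{Expected main obstacle.} The crux is part (i): turning greedy variance maximization into a genuine constant-factor per-round contraction -- rather than the merely arithmetic $1/i$ decay one gets from naive sample counting -- requires a real structural hypothesis on the variance functional (submodularity of the information gain together with a curvature or eigenvalue-decay condition on the representation's Gram matrix), which I would isolate and prove as a separate lemma. A secondary difficulty is pinning down the role of the factual-error term in the telescoping step; I would argue it is non-increasing along the acquisition process and asymptotically negligible relative to the variance and IPM pieces, so it affects neither the decomposition nor either rate.
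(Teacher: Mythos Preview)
Your overall architecture---telescoping $\Delta_{\mathcal{B}_{\text{overall}}}=\sum_{i}\Delta_{\mathcal{B}_{i}}$, per-step separability to justify greedy, then two extreme regimes---matches the paper. The substantive differences are in how the factual error is handled and in the specific tools used for the two rates.

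\textbf{Factual error.} You treat the bound as having three independent pieces---a factual-error term, separate variance terms, and the IPM term---and then flag as a secondary obstacle that the factual-error piece must be shown ``non-increasing and asymptotically negligible.'' This is where your decomposition diverges from the paper's. Shalit's bound has only $\epsilon_{F}^{t=1}+\epsilon_{F}^{t=0}$ and the IPM term; the paper applies a standard bias--variance--noise decomposition \emph{to each} $\epsilon_{F}^{t}$ (their Proposition~2), yielding $\epsilon_{F}^{t}=\mathbb{E}_{\mathcal{X}}[\text{Bias}^{t}]+\mathbb{E}_{\mathcal{X}}[\text{Var}^{t}]+\sigma_{\xi^{t}}^{2}$. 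In the difference $\mathcal{B}_{i-1}-\mathcal{B}_{i}$ the bias terms cancel exactly (the argument is that bias depends only on the model class, not on which data was acquired) and the noise variances cancel (same data-generating process). So the variance pieces \emph{are} the moving part of the factual error; there is no residual factual term to bound away. This dissolves your secondary difficulty rather than leaving it as an asymptotic nuisance.

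\textbf{Variance regime.} Your planned route via submodularity/bounded curvature of information gain is different from the paper's and is, as you note, the crux. The paper sidesteps it by committing to a Gaussian-process regressor and using the elementary monotonicity $\sigma^{2}_{m+m_{0}}(x_{*})\le\sigma^{2}_{m}(x_{*})$ (their Lemma~4). Under the working assumption that the most uncertain pool points carry the prior variance $\sigma_{f^{t}}^{2}$ and drop to zero once observed (noiseless GP), the per-round variance reduction is at least $b_{0}\sigma_{f^{t}}^{2}/N_{\text{pool}}$, while the current average variance is at most $\sigma_{f^{t}}^{2}$; the ratio gives a shrinkage $\ge 1/\omega$ with $\omega=N_{\text{pool}}/b_{0}$, hence $\beta=1-1/\omega$. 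Your submodularity lemma might yield the same contraction more generally, but it is not the paper's mechanism.

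\textbf{IPM regime.} Your perturbation-of-empirical-measures heuristic lands on the right order but the paper's argument is more concrete: fix the IPM to be $1$-Wasserstein, assume identical pairs are always available in the pool, and compute the empirical transport cost via its permutation representation. The newly acquired identical pairs contribute zero cost and the previous optimal matching on the remaining $N_{i-1}$ points stays optimal, so the shrinkage is exactly $S_{i}=b_{0}/(N_{i-1}+b_{0})$; the product $\prod_{n=1}^{i}(1-S_{n})$ telescopes to $\gamma_{0}/(\gamma_{0}+i)$ with $\gamma_{0}=N_{0}/b_{0}$, giving $\mathcal{O}(1/(i+\gamma_{0}))$.
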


\begin{remark}
    The bound shrinkage $\Delta_{\mathcal{B}_{i}}$ at $i$-th query step constitutes the variance difference and the distributional discrepancy difference, solely focusing on the reduction of one term would not contribute to optimal risk reduction for active causal effect estimation, while a proper combination of such two terms can lead to the optimal result.\label{rmk:implication}
\end{remark}

\textbf{Toy Dataset}: We design a fully synthetic 1-dimensional toy dataset and conduct experiments on four kinds of acquisition functions, i.e., Random, Uncertainty, Symmetry and Uncertainty + Symmetry (our proposed method MACAL), to illustrate the importance of considering both model variance and data distributional discrepancy reduction in each query step. The simulation of the toy dataset are described in Appendix \ref{appendix:toy_dataset}. 

In Figure \ref{fig:toy_dataset}, we present the empirical evaluation of the four methods in terms of the PEHE -- $\sqrt{\epsilon_{\text{PEHE}}}$, Wasserstein distance, and model variance respectively. As shown in Figure \ref{fig:wass} and Figure \ref{fig:var}, the Symmetry and Uncertainty acquisition strategies achieve the fastest reduction solely in distributional discrepancy and model variance respectively among the three naive methods. When mapping their performance into the empirical risk reduction shown in Figure \ref{fig:epsilon}, Symmetry has faster empirical risk reduction than Uncertainty in the early stage, but it saturates due to the incapability of capturing the informative uncertain samples in the late stage. While MACAL combines both aspects, it brings consistent and significant risk reduction before convergence, and it is the quickest one to achieve the lowest risk.

\subsection{Algorithm}

According to Theorem \ref{theorem:1}, we aim to optimize the upper bound shrinkage at each query step, additionally, it is well recognized that in BMAL, sample diversity in the acquired batch is crucial \cite{kirsch2019batchbald, jesson2021causal},  thus we modify the shrinkage without loss of generality to cater the BMAL and aim to maximize the following at $i$-th query step: 
\begin{equation}
\begin{split}
    \Tilde{\mathcal{D}}^{*}_{i} = \argmax_{\Tilde{\mathcal{D}}_{i}=\Tilde{\mathcal{D}}^{t=1}_{i}\cup\Tilde{\mathcal{D}}^{t=0}_{i}\subseteq\mathcal{D}_{\text{pool}}}~&\sum_{t\in\{0,1\}}\mathbb{H}(\Tilde{\mathcal{D}}^{t}_{i})\cdot\Delta_{\text{Var}_{i}}^{t}(\Tilde{\mathcal{D}}^{t}_{i})+\\
    &
    C_{\phi}\Delta_{\text{IPM}_{i}}(\Tilde{\mathcal{D}}_{i})\cdot\prod_{t\in\{0,1\}}\mathbb{H}(\Tilde{\mathcal{D}}^{t}_{i}),
    \label{eq:optimization}
\end{split}
\end{equation} where $\mathbb{H}(\cdot)$ measures the entropy of the set, and the union of the batches for each of the treatment groups, $\Tilde{\mathcal{D}}^{t=1}_{i}$ and $\Tilde{\mathcal{D}}^{t=0}_{i}$, renders the acquired batch $\Tilde{\mathcal{D}}_{i}$ at $i$-th query step.

Note, that the optimization in (\ref{eq:optimization}) is a combinatorial problem. For example, to label $|\Tilde{\mathcal{D}}_{i}|=N_{i}$ samples out of the pool samples $|\mathcal{D}_{\text{pool}}|=N_{\text{pool}}$ at a time, we face a combinatorial search space which takes $\mathcal{O}(\frac{N_{\text{pool}}!}{N_{i}!(N_{\text{pool}}-N_{i})!})$ time to get the optimum. The brute-force suffers from such time complexity is prohibitive as $N_{\text{pool}}$ goes up given fixed $1\ll N_{i}\ll N_{\text{pool}}$. Thus, instead of leveraging the prototype criterion in (\ref{eq:optimization}), we propose a model agnostic method to approximate the terms in (\ref{eq:optimization}) to reduce the NP-hard problem to one that can be solved in polynomial time. In the following, we analyse the optimization objective in (\ref{eq:optimization}) separately and combine them to conquer afterwards. 

\textbf{Diversity.} To deal with the diversity term $\mathbb{H}(\Tilde{\mathcal{D}}^{t}_{i})$ with combinatorial nature, the key step here is to select the sample which is most distinguished from the acquired data in the batch iteratively one at a time. We use the Euclidean distance $d(\cdot,\cdot)$ to measure the similarity between two points for label acquisition. For a batch selection on the treatment group $t$, we do $|\Tilde{\mathcal{D}}^{t}_{i}|$ times iteratively. Since $|\Tilde{\mathcal{D}}^{t}_{i}|\ll N^{t}_{\text{pool}}$, the time complexity of the batch acquisition is capped by $\mathcal{O}(N^{t}_{\text{pool}})$.

\begin{lemma}\label{lemma:convergence_var}
    Given the variance counted by Gaussian process regression model $f^{t}$ on treatment group $t$, by acquiring the most uncertain samples that have the maximum predictive variance $\sigma^{2}_{f^{t}}$, the slowest convergence rate of the model variance is lower-bounded by $\Omega(\beta^{i})$, where $0\leq\beta<1$.
\end{lemma}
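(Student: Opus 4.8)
\textbf{Proof proposal for Lemma~\ref{lemma:convergence_var}.}

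The plan is to invoke the classical convergence theory for Gaussian process (GP) regression under a greedy (uncertainty-sampling) point selection rule, and to translate the decay of the posterior variance into the claimed $\Omega(\beta^i)$ lower bound. First I would fix the treatment group $t$ and regard $f^t$ as a GP with a fixed stationary kernel $k$, so that after acquiring the first $i$ points the posterior variance at any candidate point $\mathbf{x}$ is $\sigma^2_{f^t,i}(\mathbf{x}) = k(\mathbf{x},\mathbf{x}) - \mathbf{k}_i(\mathbf{x})^\top (\mathbf{K}_i + \lambda \mathbf{I})^{-1} \mathbf{k}_i(\mathbf{x})$. The greedy acquisition step of MACAL selects $\mathbf{x}^*_{i+1} = \argmax_{\mathbf{x}\in\mathcal{D}_{\text{pool}}} \sigma^2_{f^t,i}(\mathbf{x})$, i.e. exactly the rule analysed in the GP-UCB / information-gain literature. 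I would then define the ``dominant'' quantity $V_i := \max_{\mathbf{x}} \sigma^2_{f^t,i}(\mathbf{x})$ (the worst-case remaining variance after $i$ queries), since this is the term that controls the variance component $\Delta^t_{\text{Var}_i}$ of the bound in Theorem~\ref{theorem:1}.

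The key step is a contraction argument: acquiring the maximum-variance point strictly reduces the variance everywhere, and for a kernel with nontrivial correlation length the reduction at the maximiser is a fixed fraction of the current maximum. Concretely, I would show that there is a constant $\beta\in[0,1)$, depending only on the kernel and the pool geometry, such that $V_{i+1} \le \beta\, V_i$; unrolling this recursion gives $V_i \le \beta^i V_0$, and since $V_i$ is also bounded below by zero and decreases by a controlled amount, the convergence rate is tight, i.e. $V_i = \Theta(\beta^i)$, so in particular it is lower-bounded by $\Omega(\beta^i)$ — matching the statement. The contraction constant can be obtained either (a) from the eigenvalue decay of the kernel Gram matrix (for kernels with exponentially decaying spectrum, such as the squared-exponential kernel, the marginal information gain decays geometrically), or (b) more elementarily, by noting that after conditioning on $\mathbf{x}^*_{i+1}$ the Schur-complement update subtracts $\sigma^2_{f^t,i}(\mathbf{x}^*_{i+1})\cdot \rho^2$ from the variance at a correlated point, where $\rho$ is a lower bound on the normalised correlation within the pool, which yields $\beta = 1-\rho^2$.

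I expect the main obstacle to be pinning down the constant $\beta$ rigorously without additional assumptions: the geometric rate genuinely requires either a spectral-decay hypothesis on the kernel or a finite/compact pool with bounded fill distance, and for a general kernel the worst-case variance can decay only polynomially. I would therefore state the lemma under the implicit standing assumption (already signalled in Theorem~\ref{theorem:1} by the phrase ``under certain circumstances'') that the GP uses a kernel with geometrically decaying spectrum — e.g. the RBF kernel — so that the eigenvalues $\lambda_j$ of the kernel operator satisfy $\lambda_j \le c\, r^j$ for some $r\in[0,1)$; then Mercer's theorem plus the greedy-selection bound of the Nyström/information-gain analysis gives $V_i \le \beta^i V_0$ with $\beta$ expressible in terms of $r$. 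The remaining steps — monotonicity of the posterior variance in the number of points, the Schur-complement formula for the conditional variance, and the fact that the greedy maximiser realises at least the average per-step information gain — are routine and I would relegate them to the appendix, citing standard GP references for the eigenvalue bound.
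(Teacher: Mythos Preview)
Your argument is coherent but takes a genuinely different route from the paper. The paper does \emph{not} track the worst-case variance $V_i=\max_{\mathbf{x}}\sigma^2_{f^t,i}(\mathbf{x})$ or invoke any spectral / information-gain machinery. Instead it works with the \emph{pool-averaged} variance $\frac{1}{N_{\text{pool}}}\sum_k \sigma^2_{N_{i-1}}(f^t(x_k))$ and uses an elementary counting step: the $b_0$ acquired points are assumed to sit at the variance ceiling $\sigma^2_{f^t}$ before acquisition and drop to $0$ afterwards (noiseless GP), while the remaining pool points are non-increasing by the standard monotonicity lemma (\cite{williams2000upper}). This gives a lower bound $\Delta_{\text{Var}_i}^t\ge b_0\sigma^2_{f^t}/N_{\text{pool}}$ on the absolute reduction; upper-bounding the current average variance by $\sigma^2_{f^t}$ then yields the shrinkage ratio $G_i\ge b_0/N_{\text{pool}}=1/\omega$, so $\beta=1-1/\omega$ and the product telescopes to $\beta^i$.

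The trade-off is clear. The paper's argument is much shorter and delivers an explicit $\beta$ with no hypothesis on the kernel spectrum; but it leans entirely on the clause in the lemma statement that the acquired points ``have the maximum predictive variance $\sigma^2_{f^t}$'', i.e.\ that the pool always contains points far enough from the training set to saturate the prior variance. Your spectral/Schur-complement contraction avoids that rather strong standing assumption --- it would still give a geometric rate once the pool is depleted of such extreme points --- at the cost of needing an RBF-type eigenvalue decay and leaving $\beta$ implicit. If you want to match the paper, drop the $V_i$ formulation, average over the pool, and exploit the ``max variance $=\sigma^2_{f^t}$, post-acquisition variance $=0$'' pair directly; the only external fact you need is the variance monotonicity lemma.
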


\textbf{Uncertainty.} To gain the maximum variance reduction over the sample space $\mathcal{X}$, labelling the most uncertain sample gives the highest variance reduction and the model variance can converge as depicted in Lemma \ref{lemma:convergence_var}, where the proof is provided in Appendix \ref{proof:convergence_var}.

\begin{figure*}[ht!]
    \centering
    \includegraphics[scale=0.9]{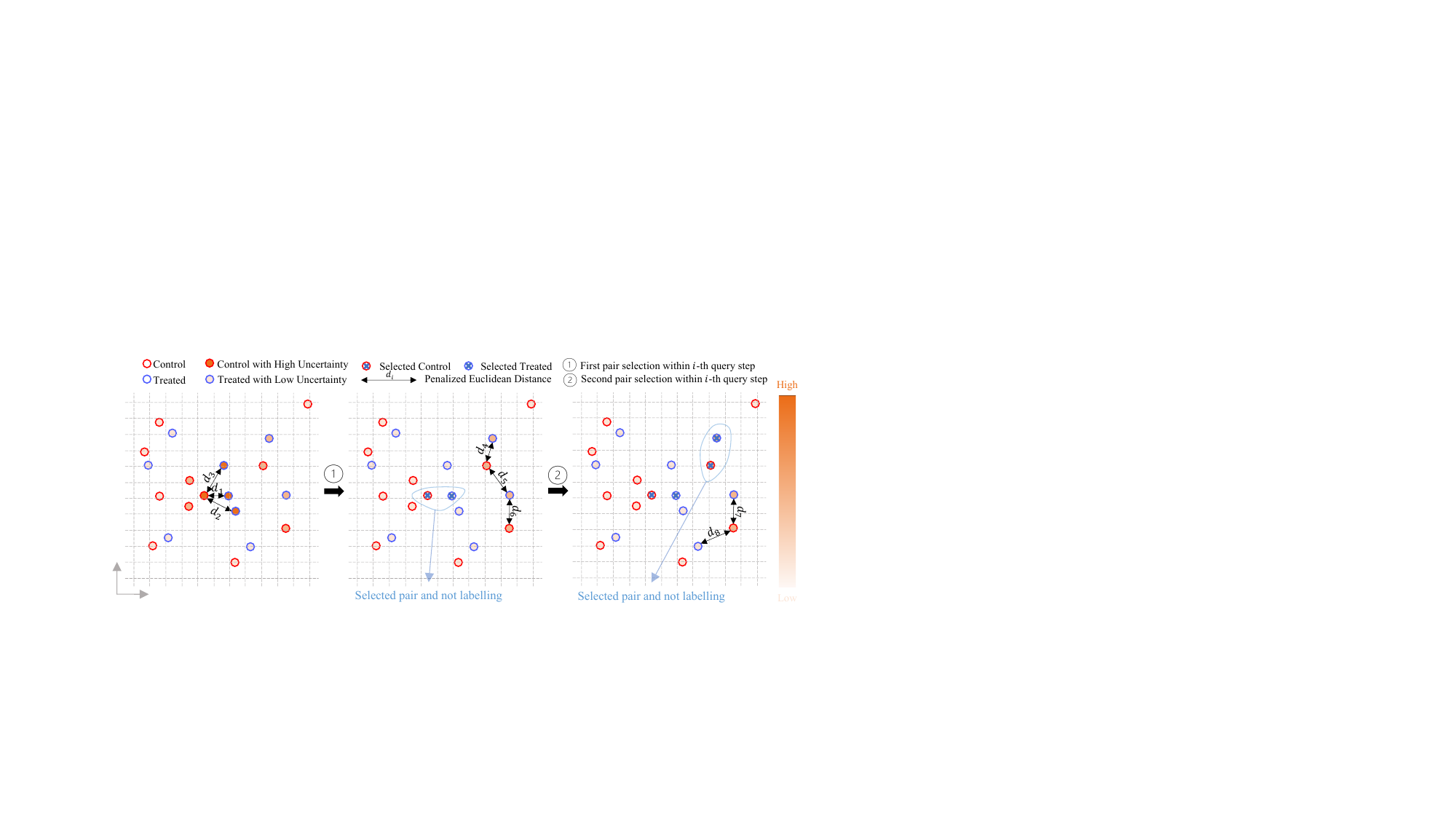}
    \vspace{-0.3cm}
    \caption{At $i$-th query step, the entire batch selection is divided into several pairs selection. Left: The most uncertain points are the candidates for selection, and the pair score is penalized by the distance between two points; Mid: The two most uncertain and closest points are selected (not labelled yet) and objectively bringing down other unlabelled points' uncertainty in proximity; Right: Select the next pair recursively until the batch is filled up.}
    \label{fig:selection_process}
\end{figure*}

However, the variance term is model-dependent, after one sample point is added into the acquired batch $\Tilde{\mathcal{D}}^{t}_{i}$, the model should ideally get retrained and update its confidence on the pool set for the next selection to maintain the batch diversity $\mathbb{H}(\Tilde{\mathcal{D}}^{t}_{i})$ and variance reduction $\Delta_{\text{Var}_{i}}^{t}(\Tilde{\mathcal{D}}^{t}_{i})$ at a high level, but retraining $|\Tilde{\mathcal{D}}^{t}_{i}|$ times is not cost-efficient. To overcome this issue, we perform an approximation for the variance term to leave it model-independent since we care less about its exact value but more about its relative magnitude for comparison, i.e., acquiring the most distinct point from the pool set $\mathcal{D}^{t}_{\text{pool}}$ with the highest Euclidean distance from its nearest neighbour in the training set (with previously acquired unlabelled samples integrated). Thus, we firstly calculate the minimum distance between every candidate sample from pool set $\mathcal{D}^{t}_{\text{pool}}$ and the acquired sample from training set $\mathcal{D}^{t}_{\text{train}}$, which results in a set of distance $\{d^{\text{min}}_{i}\}$ of size $N^{t}_{\text{pool}}$. This set of values embeds both diversity and uncertainty in terms of Euclidean distance since the higher the minimum value, the more distinct the candidate point from the training set. Subsequently, the maximum one is selected from the set as follows:
\begin{equation}
        (\Tilde{x}^{t}_{i})^{*} = \text{arg max}_{\Tilde{x}^{t}_{i}\in\mathcal{D}^{t}_{\text{pool}}} \text{min}_{x'_{i}\in\mathcal{D}^{t}_{\text{train}}}d(\Tilde{x}^{t}_{i}, x'_{i}).
        \label{eq:uncertainty_diversity}
\end{equation} Note, that the unlabelled sample $(\Tilde{x}^{t}_{i})^{*}$ should be added into the training set before the next selection starts because a similar or repetitive sample is redundant. It should also be noted that (\ref{eq:uncertainty_diversity}) does not necessarily return the most uncertain sample, but the larger distance from its in-sample nearest neighbour is positively correlated with higher uncertainty. Thus, we take this trade-off to approximately approach the NP-hard term $\mathbb{H}(\Tilde{\mathcal{D}}^{t}_{i})\Delta_{\text{Var}_{i}}^{t}(\Tilde{\mathcal{D}}^{t}_{i})$ by iterating $|\Tilde{\mathcal{D}}^{t}_{i}|$ times. By considering the enumeration through both of the treatment groups, the time complexity of one query step for (\ref{eq:uncertainty_diversity}) is $\mathcal{O}(\max\{(N^{t=1}_{\text{pool}})^2,(N^{t=0}_{\text{pool}})^2\})$, which is obviously capped by $\mathcal{O}(N_{\text{pool}}^2)$.

\textbf{Distributional Discrepancy.} 
In light of the above-mentioned iterative acquisition within a single query step, we can apply a similar mentality to avoid the combinatorial nature of the second term in (\ref{eq:optimization}) for the batch acquisition. To obtain a high-level reduction $\Delta_{\text{IPM}_{i}}(\Tilde{\mathcal{D}}_{i})$, an effective labelling in terms of reducing the imbalance would lead to symmetrical acquisition, namely labelling the identical sample from different treatment groups to make a pair. Thus the local distributional discrepancy (within the acquired dataset $\Tilde{\mathcal{D}}_{i}$) becomes zero if identical (or very similar) samples can be collected to counteract the violation of positivity locally. Subsequently, the accumulated global distributional discrepancy gains an asymptotic behaviour approaching zero as more symmetrical (or similar) samples are added into the training set (as also empirically observed in Figure \ref{fig:wass}). We propose Lemma \ref{lemma:convergence_wass} concerning the convergence rate with the proof provided in Appendix \ref{proof:convergence_wass}:

\begin{lemma}\label{lemma:convergence_wass}
    Given two empirical distributions $p^{t=1}$ and $p^{t=0}$ for different treatment groups, the distributional discrepancy given by 1-Wasserstein distance $W_{1}(p^{t=1},p^{t=0})$ has a convergence rate of $\mathcal{O}(\frac{1}{i+\gamma_{0}})$ if the identical samples from two groups can always be found throughout the query steps.
\end{lemma}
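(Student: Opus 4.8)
The plan is to exhibit an explicit recursion for the 1-Wasserstein distance across query steps and then solve (or bound) it. First I would fix notation: let $W_i := W_1(p^{t=1}_i, p^{t=0}_i)$ denote the discrepancy after the $i$-th query step, where $p^t_i$ is the empirical measure over the $n_i$ labelled samples in group $t$ after step $i$. I would assume, as in the statement, that at each step a batch is acquired that contains pairs of identical (or $\varepsilon$-close) samples, one per treatment group, so that the acquired sub-batch $\tilde{\mathcal D}_i$ contributes exactly zero (or $O(\varepsilon)$) local discrepancy. The key structural fact I would use is that the new empirical measure $p^t_{i}$ is a convex combination $p^t_{i} = \frac{n_{i-1}}{n_i} p^t_{i-1} + \frac{m_i}{n_i} q^t_i$, where $q^t_i$ is the empirical measure of the freshly added points in group $t$ and $m_i = n_i - n_{i-1}$ is the batch size per group. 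By the convexity of $W_1$ in each argument (the triangle inequality plus the fact that $W_1$ of two convex combinations with matched weights is bounded by the convex combination of the pairwise $W_1$'s), I get
\begin{equation}
    W_i \le \frac{n_{i-1}}{n_i}\, W_1(p^{t=1}_{i-1}, p^{t=0}_{i-1}) + \frac{m_i}{n_i}\, W_1(q^{t=1}_i, q^{t=0}_i) = \frac{n_{i-1}}{n_i}\, W_{i-1} + \frac{m_i}{n_i}\cdot 0,
\end{equation}
using the paired-acquisition hypothesis to kill the second term (or replace $0$ by $O(\varepsilon)$ in the approximate case).

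Next I would solve the recursion $W_i \le \frac{n_{i-1}}{n_i} W_{i-1}$, which telescopes to $W_i \le \frac{n_0}{n_i} W_0$. With a constant per-step batch size $m_i \equiv m$ we have $n_i = n_0 + m i$, hence
\begin{equation}
    W_i \le \frac{n_0}{n_0 + m i}\, W_0 = \frac{(n_0 W_0 / m)}{i + n_0/m},
\end{equation}
so setting $\gamma_0 := n_0/m \in \mathbb R^+$ and absorbing the numerator into the $\mathcal O(\cdot)$ constant yields $W_i = \mathcal O\!\left(\frac{1}{i+\gamma_0}\right)$, which is exactly the claimed rate. If batch sizes vary but are bounded below by some $m_{\min}>0$, the same conclusion holds with $\gamma_0 = n_0/m_{\min}$ after noting $n_i \ge n_0 + m_{\min} i$. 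I would also remark that this is a worst-case rate: if the old mass is itself shrinking in discrepancy the bound only improves, so the $\mathcal O$ statement is safe.

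I would then handle the two loose ends. The first is justifying the convexity/averaging inequality for $W_1$ in full: I would cite the standard fact (provable via the dual Kantorovich--Rubinstein formula, since for any $1$-Lipschitz $g$, $\int g\, d(\sum_k \lambda_k \mu_k) - \int g\, d(\sum_k \lambda_k \nu_k) = \sum_k \lambda_k (\int g\, d\mu_k - \int g\, d\nu_k) \le \sum_k \lambda_k W_1(\mu_k,\nu_k)$, then take the sup over $g$). The second is the case where only $\varepsilon$-close rather than exactly identical pairs can be found: then $W_1(q^{t=1}_i,q^{t=0}_i)\le\varepsilon$, the recursion becomes $W_i \le \frac{n_{i-1}}{n_i}W_{i-1} + \frac{m}{n_i}\varepsilon$, and summing the perturbation terms $\sum_{j\le i}\frac{m}{n_j}\cdot\frac{n_j}{n_i}\varepsilon = \frac{m i}{n_i}\varepsilon \to \varepsilon$ shows $W_i = \mathcal O(1/(i+\gamma_0)) + O(\varepsilon)$, i.e. the asymptotic floor is controlled by the pairing tolerance, consistent with the paper's "very similar samples" phrasing and Figure~\ref{fig:wass}.

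The main obstacle I anticipate is not the algebra of the recursion but pinning down the precise hypothesis under which the second term genuinely vanishes: "identical samples from two groups can always be found throughout the query steps" must be read as a statement about the \emph{acquired batch} $\tilde{\mathcal D}_i$ having a perfectly (or near-perfectly) matched empirical distribution across treatments, not about the global pool, and I would state this carefully as the operative assumption. A secondary subtlety is that the empirical $W_1$ also fluctuates due to finite-sample sampling noise around the population discrepancy; since the lemma is phrased purely in terms of the empirical distributions of the (deterministically chosen) labelled points, I would note that this Monte-Carlo term does not enter — the convergence is of the empirical-to-empirical distance driven by the averaging-down of the initial imbalance, and I would keep the proof at that level rather than invoking concentration of empirical measures.
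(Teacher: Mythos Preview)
Your proof is correct and follows the same overall route as the paper: derive the one-step recursion $W_i \le \frac{n_{i-1}}{n_i}\,W_{i-1}$, telescope it to $W_i \le \frac{n_0}{n_0+mi}\,W_0$, and read off $\gamma_0 = n_0/m$. The one methodological difference is in how the recursion itself is justified. The paper works on the primal side: it writes the discrete $W_1$ as an optimal-assignment cost, argues that the post-acquisition optimal permutation $\pi_i^*$ must match the newly added identical pairs (contributing zero) and coincide with $\pi_{i-1}^*$ on the remaining $N_{i-1}$ points, and from this obtains the shrinkage $S_i = b_0/(N_{i-1}+b_0)$ before telescoping. You instead use the dual Kantorovich--Rubinstein representation to get joint convexity of $W_1$ under convex combinations with matched weights, which yields the same inequality more cleanly and without assuming equal group sizes or the permutation form of optimal transport. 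Your treatment of the $\varepsilon$-close (rather than exactly identical) pairing, showing the floor is $O(\varepsilon)$, is an addition the paper does not make explicit.
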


Note, that under the deep learning framework \cite{shalit2017estimating, louizos2017causal}, the distributional discrepancy is calculated over the latent space with $\phi:\mathcal{X}\rightarrow\mathcal{Z}$, where $\phi$ is one-on-one mapping. Thus, samples that are identical or similar in the original space $\mathcal{X}$ should still preserve their semantic manifold in the latent space, such that labelling similar points over the raw space is the same as the one in latent space. To label a pair, we calculate the Euclidean distance between each sample from different treatment groups, and the optimal pair is selected with the smallest distance. This selection is flexible since it does not constrain the identical acquisition but the most similar pair. Thus, the time complexity to get the optimal pair takes $\mathcal{O}(N^{t=1}_{\text{pool}}\cdot N^{t=0}_{\text{pool}})$, which is capped by $\mathcal{O}(N_{\text{pool}}^2)$. At a single iteration, we acquire a pair as follows:
\begin{equation}
        \{(\Tilde{x}^{t=1}_{i},\Tilde{x}^{t=0}_{j})^{*}\} =\argmax_{\Tilde{x}^{t=1}_{i}\in\mathcal{D}^{t=1}_{\text{pool}},\Tilde{x}^{t=0}_{j}\in\mathcal{D}^{t=0}_{\text{pool}}} -d(\Tilde{x}^{t=1}_{i},\Tilde{x}^{t=0}_{j}).
        \label{eq:discrepancy}
\end{equation}

\textbf{MACAL.} In this paper, we combine the optimization in (\ref{eq:uncertainty_diversity}) and (\ref{eq:discrepancy}) altogether, a pair $(\Tilde{x}^{t=1},\Tilde{x}^{t=0})^{*}$ that maximizes the following term is selected from both treatment groups:
\begin{equation}
\begin{split}
    &(\Tilde{x}^{t=1},\Tilde{x}^{t=0})^{*} =\\
    &\argmax_{\Tilde{x}^{t=1}\in\mathcal{D}^{t=1}_{\text{pool}},\Tilde{x}^{t=0}\in\mathcal{D}^{t=0}_{\text{pool}}} \Sigma_{t\in\{0,1\}}\text{min}_{x'\in\mathcal{D}^{t}_{\text{train}}}d(\Tilde{x}^{t}, x')-\alpha\cdot d(\Tilde{x}^{t=1},\Tilde{x}^{t=0}).\label{eq:final_opt}
\end{split}
\end{equation} Note, that we require the batch size $|\Tilde{\mathcal{D}}_{i}|$ to be an even number, which is quite easy to satisfy. Thus, we do $|\Tilde{\mathcal{D}}_{i}|/2$ efficient iterations to obtain the batch in one query step. Also, we set the coefficient $\alpha$ to penalize the acquisition that violates the positivity assumption, this regularization constant is taken as a hyperparameter and its impact is further discussed in Appendix \ref{section:ablations} since it is hardly possible to compute the exact value for the bounded constant $C_{\phi}$ \cite{shalit2017estimating}. We visualize the dynamic selection process in Figure \ref{fig:selection_process} according to the selection criterion defined in (\ref{eq:final_opt}). For the case where one of the treatment pool sets is exhausted, the acquisition is down to only one side by simply updating the $\alpha=0$ since no counterpart can be acquired anymore. The full algorithm's pseudo code is provided in Appendix \ref{appendix:algorithm}.


In summary, MACAL promotes variance and discrepancy reduction by labelling diverse uncertain samples and it penalizes the dissimilarity of the paired samples via the Euclidean distance. To label a batch of samples of size $|\Tilde{\mathcal{D}}_{i}|$, we take $\mathcal{O}(N_{\text{pool}}^2)$ time complexity to obtain the optimal batch at each query step, which is significantly lower than the cost to solve the NP-hard problem by brute-force.

\section{Related Work}

\textbf{Active Learning.} The history of active learning can be traced back to over a century ago \cite{smith1918standard}, with such a long time progress till nowadays, it has become a frontier research branch of machine learning \cite{settles2009active, ren2021survey, zhan2022comparative}. The core of active learning is to make model performance cost-efficient, i.e., obtaining relatively low model risk with as few labelled samples as possible. Generally, the active learning approach can be portioned into three scenarios: query synthesis \cite{wang2015active}, stream-based \cite{fujii2016budgeted} and pool-based \cite{wu2018pool}. In this paper, we focus on pool-based active learning, especially on regression problems, where the uncertainty-based sampling \cite{gal2017deep}, density-based querying \cite{sener2018active}, and hybrid strategies \cite{ash2019deep} are three key acquisition methods under such setting. For instance, the information-theoretic based Bayesian Active Learning by Disagreement (BALD) \cite{gal2017deep} leverages the epistemic uncertainty to acquire unlabelled samples. Core-Set \cite{sener2018active} selects the greatest distance to its nearest neighbour in the hidden space. ACS-FW \cite{pinsler2019bayesian} is a hybrid between Core-Set and Bayesian approaches which balances the sample diversity and uncertainty in batch-mode acquisition. Note, that although the general active learning methods are not designated for CEE, benchmarking on these methods provides insightful results. 

\textbf{Causal Effect Estimation with Active Learning.} Some noticeable advances have been made in this area of research. \cite{sundin2019active} approximates the decision-making reliability via the estimated S-type error rate (the probability of the model inferring the sign of the treatment effect wrong) of the prediction model,  which is then used as the querying criterion. However, \cite{sundin2019active} focuses on estimating the correct sign of the treatment effect, which is different from the risk metric in our setting. For works focusing on the same risk metric, QHTE \cite{qin2021budgeted} integrates the Core-Set concept \cite{tsang2005core,sener2018active} to form a theoretical framework, for which a theory-based optimization is proposed. However, the QHTE relaxes the tightness of the bound given by Shalit et. al. \cite{shalit2017estimating}, where a covering radius $r=0$ from the relaxed bound cannot even obtain the original
tightness, while, we propose a more informative theory which does not undermine the tightness of the original bound. More importantly, QHTE does not consider the distribution imbalance during sample acquisition, which is what our method can prominently distinguish from. To fix the acquisition imbalance issue, Causal-BALD \cite{jesson2021causal} cut into the problem from the information-theoretic perspective, its most representative criterion $\mu\rho$BALD accounts for the overlapping by especially scaling the criterion with the inverse of counterfactual variance, leaving the acquisition toward pairing up similar samples if its counterfactual were missing, which is a non-trivial improvement from its base - $\mu$BALD (an uncertainty-based softmax-BALD method \cite{kirsch2021stochastic}). Our proposed method is distinguished from \cite{jesson2021causal} in several points. First, Causal-BALD relies on model-dependent variance estimation, e.g., the deep kernel learning model \cite{wilson2016deep,van2021feature}, while our criterion is model agnostic. Second, we cut into the problem from an intriguing perspective to maximize the upper bound shrinkage at each query step instead of the mutual information perspective. Finally, taking the inverse of the counterfactual variance is undesirable and causes numerical instability, while our method leverages the simple but effective addition and subtraction operations to form the label acquisition criterion. It is also noted that some existing literature \cite{deng2011active, addanki2022sample} uses active learning to take the initiative for efficient experimental trials design, i.e., the pool set does not include the treatment information but enforcing treatment after sample acquisition, which is different from our setting.

\begin{table}[t!]
\centering
\caption{Summary of the Acquisition Setup and Testing}\label{table:acquisition_summary}
\vspace{-0.2cm}
\begin{tabularx}{.48\textwidth}{X *{6}{>{\centering\arraybackslash}X}} 
\toprule
\multicolumn{1}{l}{Dataset} & \multicolumn{1}{c}{Start S.} & \multicolumn{1}{c}{Step S.} & \multicolumn{1}{c}{Queries} & \multicolumn{1}{c}{Pool S.} & \multicolumn{1}{c}{Val S.} & \multicolumn{1}{c}{Test S.} \\
\midrule
IHDP &10&10&46&470&75&202\\
IBM &50&50&50&9540&3180&6250\\
CMNIST &50&50&50&31500&10500&18000\\
\bottomrule
\end{tabularx}
\vspace{-0.4cm}
\end{table}

\section{Experiments\label{section:experiments}}

Due to the unique nature of the CEE tasks, the counterfactual effect is hardly observed in the real world. Thus, in this paper, we take the common practice to utilise the fully-synthetic and semi-synthetic datasets for algorithm evaluations.

\textbf{Dataset:} \textbf{IHDP} \cite{hill2011bayesian} - an imbalanced dataset based on 747 samples (among them 139 with treatment status $t=1$ and 608 with status $t=0$) and 25 covariates, with 100 times simulated treatment outcomes by \cite{hill2011bayesian}. \textbf{IBM} \cite{shimoni2018benchmarking} - uses a cohort of 100k individuals from the publicly available Linked Births and Infant Deaths Database with 177 real-world covariates. Each original simulation randomly takes 25k out of such 100k samples and the potential outcomes are simulated 10 times according to \cite{shimoni2018benchmarking}, we create more imbalance by omitting additional samples from treatment group $t=1$. \textbf{CMNIST} \cite{jesson2021quantifying} - is of size 60k adapted from MINIST \cite{lecun1998mnist} dataset. The input from CMNIST is the handwritten digit of size 28$\times$28, which is distinct from the previous tabular datasets. The potential outcomes are simulated 10 times and generated by projecting the digits into a 1-dimensional latent manifold as described in \cite{jesson2021quantifying}.

\textbf{Metric:} We use precision in estimation of heterogeneous effect (PEHE) \cite{shalit2017estimating}, a well-established metric with the empirical formulation: $\sqrt{\epsilon_{\text{PEHE}}}=\sqrt{\Sigma_{i=1}^{N}((y^{t=1}_{i}-y^{t=0}_{i})-\tau_{i})^{2}/N}$ for measuring the accuracy of the treatment effect estimation at the individual level. The lower the value of $\sqrt{\epsilon_{\text{PEHE}}}$, the better the performance.

\textbf{Baselines:} We set the Random method as the benchmark acquisition function, as this is the most naive method that selects the data purely at random. We also compare our proposed method against many SOTA baselines from the general AL research field, that is, BADGE \cite{ash2019deep}, BAIT \cite{ash2021gone}, and LCMD \cite{holzmuller2023framework}. We argue that a good comparison to these methods from the broad AL research is indispensable and this paper also fills the blank for such comparisons. Moreover, the most related work - QHTE \cite{qin2021budgeted}, and especially Causal-Bald \cite{jesson2021causal} is the designated algorithm proposed to deal with the active causal effect estimation. Thus, we compare three representative variants of the Causal-Bald, namely $\mu$BALD, $\rho$BALD, and $\mu\rho$BALD.

\textbf{Prediction Backbone:} \textbf{DUE-DNN}\cite{van2021feature}. It is one of the SOTA deep kernel learning frameworks with the multi-layer perceptron as the common feature extractor and two sparse Gaussian process regressions defined over the extracted latent features as the downstream estimators for different treatment groups' effect estimations. \textbf{DUE-CNN}\cite{van2021feature}. It is a variant of the DUE model especially catering for the image-as-input experiment. It has a similar structure as DUE-DNN besides the latent feature extractor being replaced by the convolutional neural network (CNN), e.g., the ResNet \cite{he2016deep} is embedded. The computation resources and hyperparameter selection are described in \ref{appendix:hyperparameters}.

\begin{figure}[h!]
  \centering
  \includegraphics[scale=0.22]{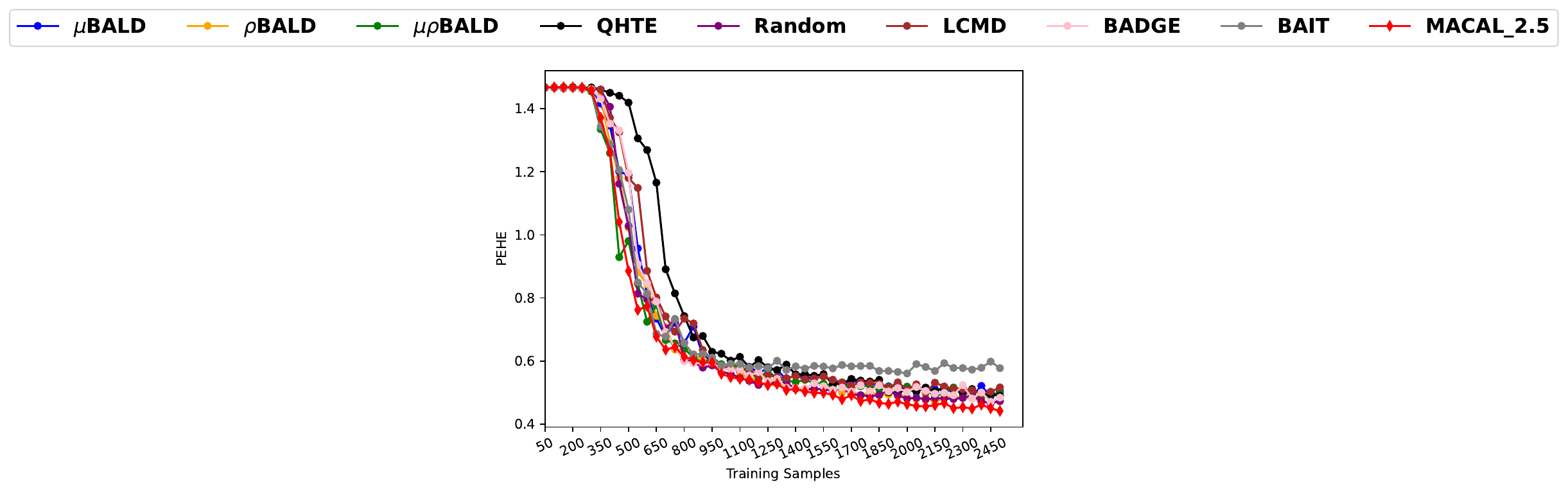}\\
  \subfigure[IHDP-CausalAL\label{fig:ihdp_causalal}]{\includegraphics[width=0.22\textwidth]{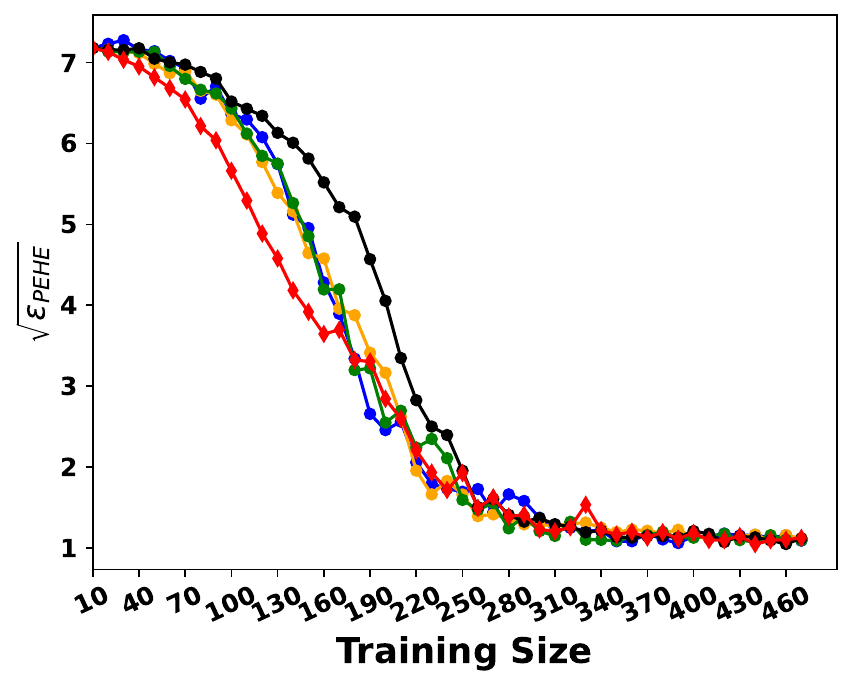}}
  \subfigure[IHDP-GeneralAL\label{fig:ihdp_generalal}]{\includegraphics[width=0.22\textwidth]{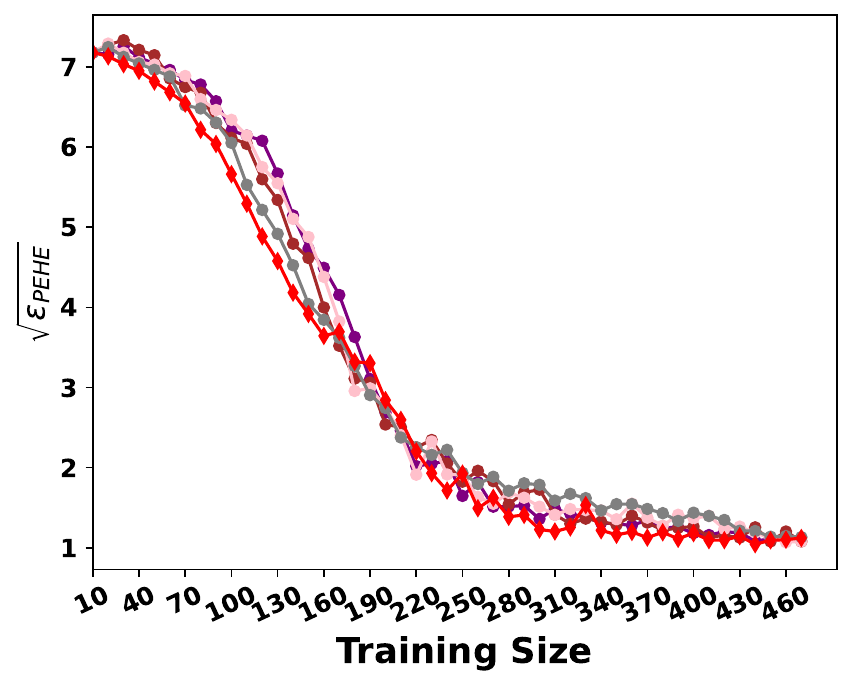}}
  \\
  \vspace{-0.2cm}
  \subfigure[IBM-CausalAL\label{fig:ibm_causalal}]{\includegraphics[width=0.22\textwidth]{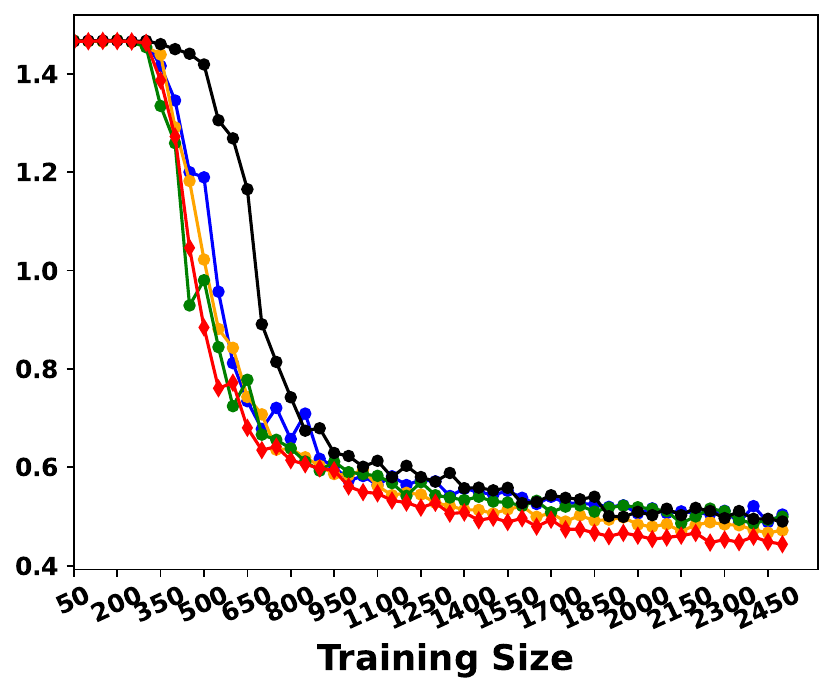}}
  \subfigure[IBM-GeneralAL\label{fig:ibm_generalal}]{\includegraphics[width=0.22\textwidth]{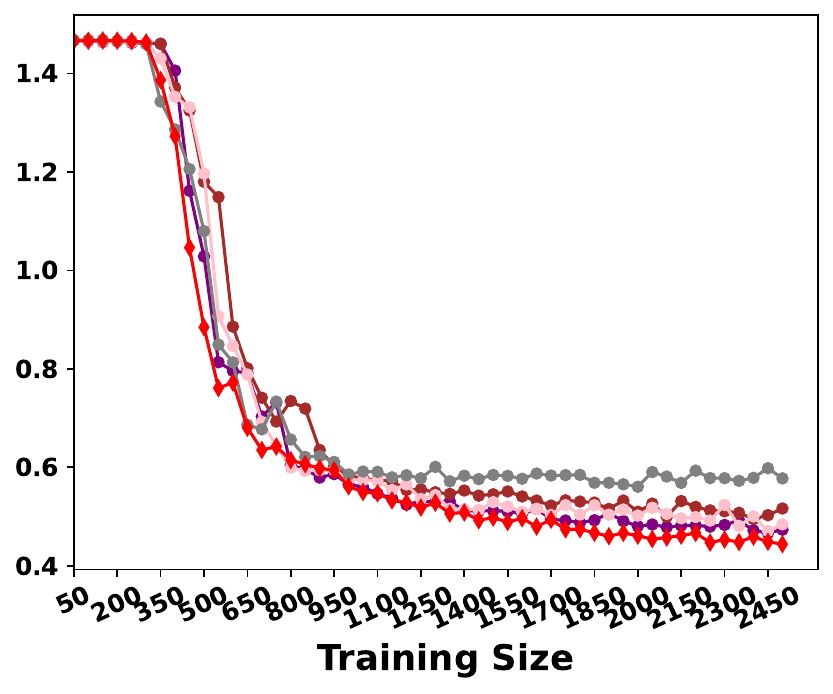}}\\
  \vspace{-0.2cm}
  \subfigure[CMNIST-CausalAL\label{fig:cmnist_cnn_causalal}]{\includegraphics[width=0.22\textwidth]{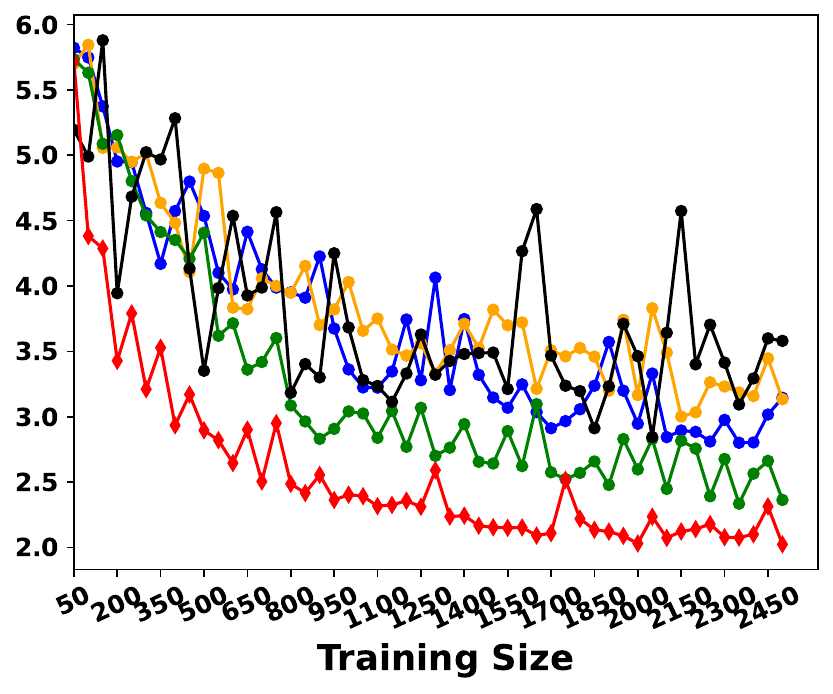}}
  \subfigure[CMNIST-GeneralAL\label{fig:cmnist_cnn_generalal}]{\includegraphics[width=0.22\textwidth]{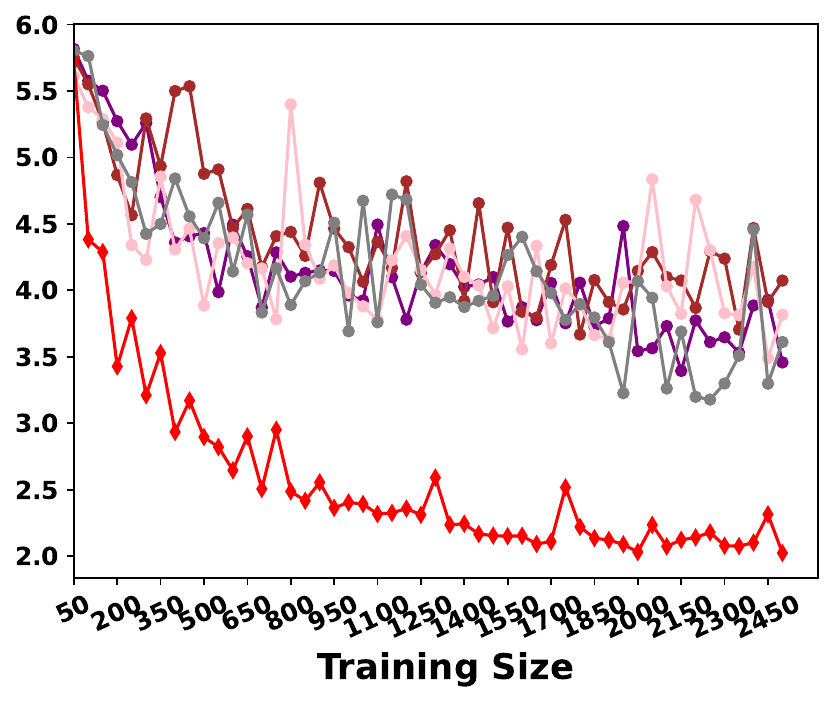}}
  \vspace{-0.2cm}
  \caption{Benchmarking with the available SOTAs on three datasets, i.e., IHDP (average with 100 simulations), IBM (average with 10 simulations) and CMNIST (average with 10 simulations). The first column concludes the comparisons against the baselines designated for active causal effect estimation. The second column shows the comparisons with the baselines from general active learning approaches. All of the results are given by the same downstream CEE model by DUE-DNN for IHDP \& IBM, and DUE-CNN for CMNIST.}
  \label{fig:pehe_evaluation}
  \vspace{-0.2cm}
\end{figure}    

\textbf{Acquisition setup:} We begin with a small Start Size (Start S.) to simulate the real-world scenario where only sparse labelled data can be accessed at the beginning. Then, a fixed Step Size (Step S.) is enforced at each query step, and the entire AL sessions (Queries), which consist of many query steps comes to an end when the model converges or the sample pool has been exhausted. The detail of the label acquisition setup is summarized in Table \ref{table:acquisition_summary}.

\begin{figure*}[ht!]
  \centering
  \subfigure[Random at Step 5 \label{fig:ihdp_random_5}]{\includegraphics[width=0.15\textwidth]{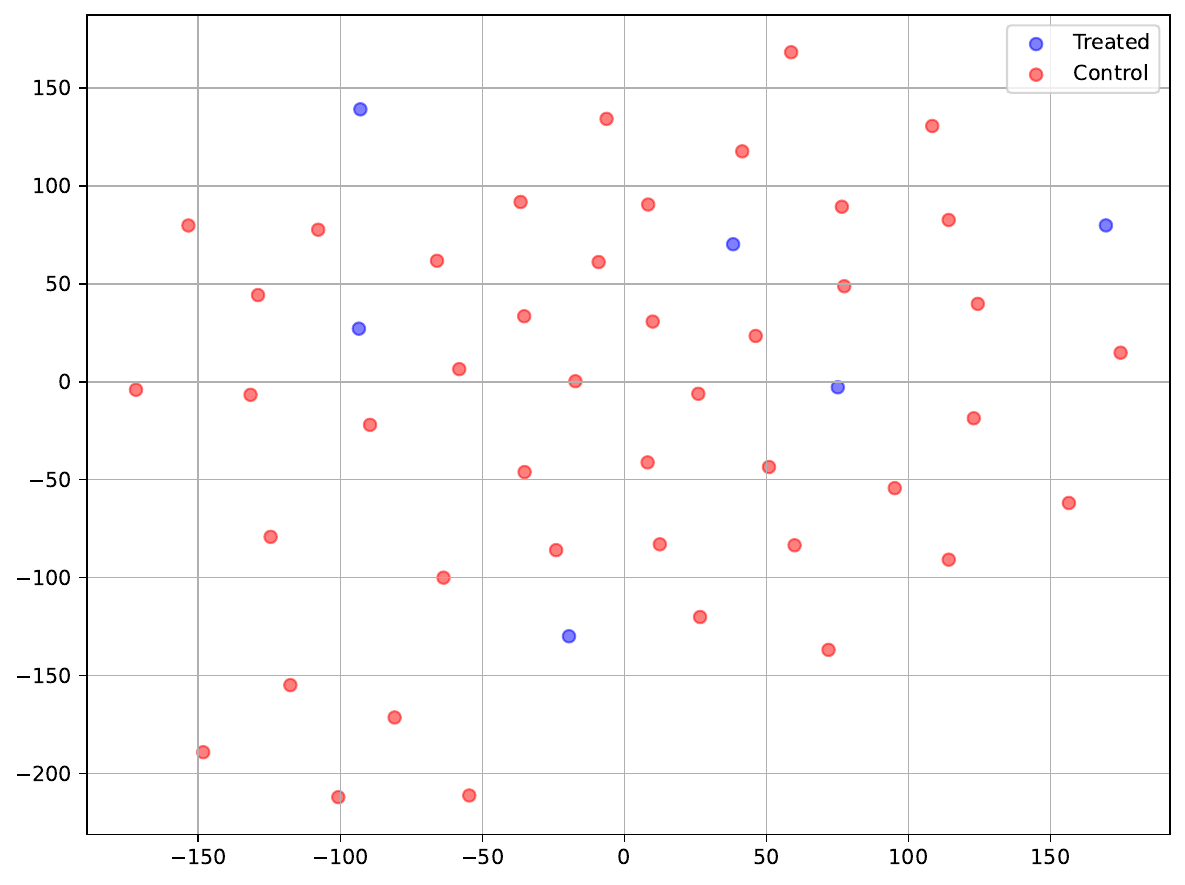}}
  \subfigure[Random at Step 15\label{fig:ihdp_random_15}]{\includegraphics[width=0.15\textwidth]{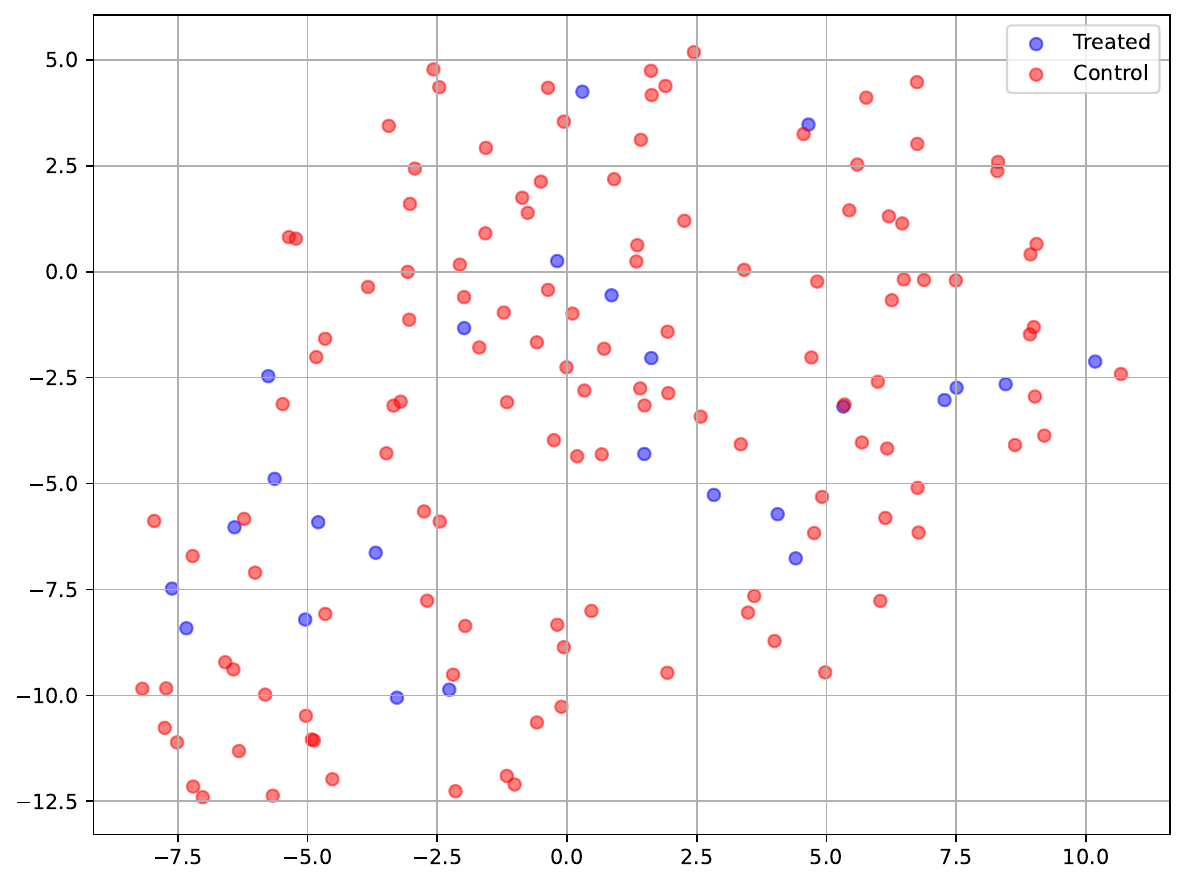}}
  \subfigure[Random at Step 10\label{fig:ibm_random_10}]{\includegraphics[width=0.15\textwidth]{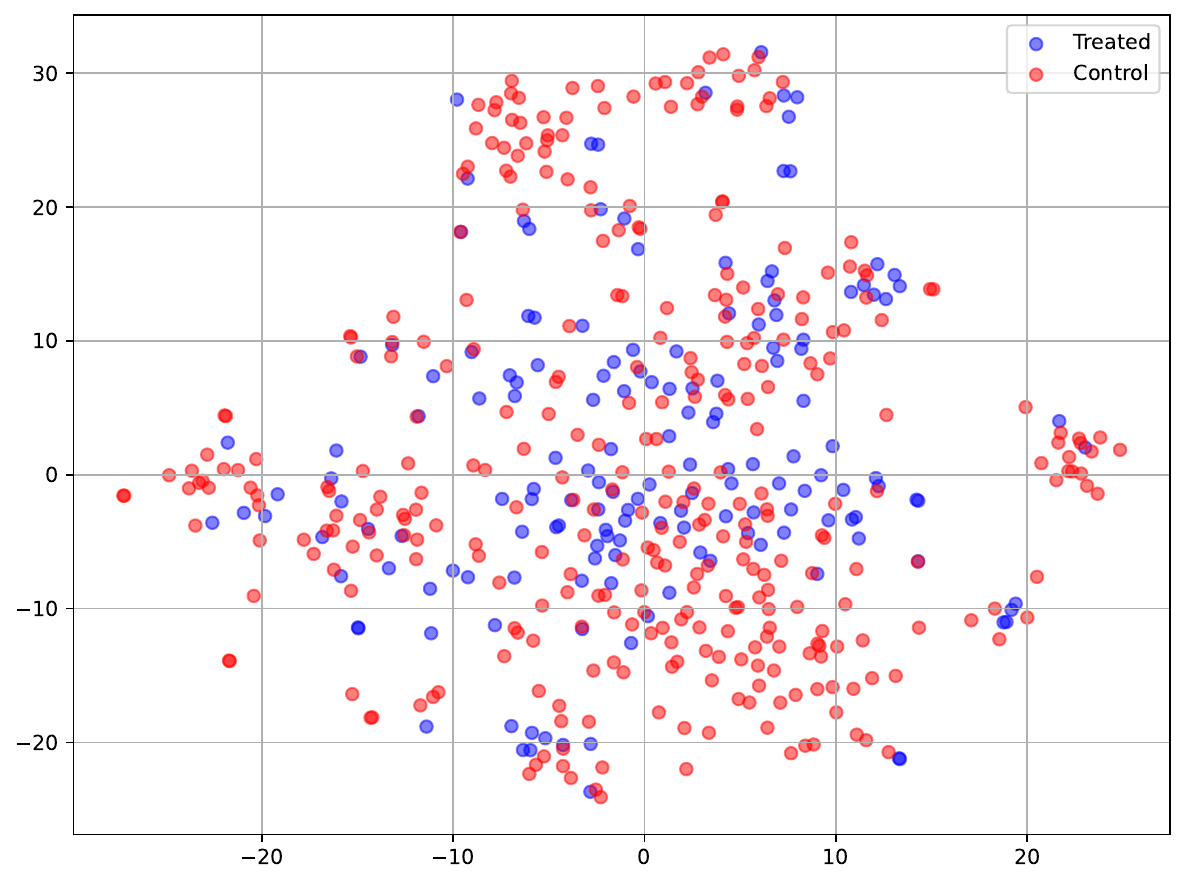}}
    \subfigure[Random at Step 30\label{fig:ibm_random_30}]{\includegraphics[width=0.15\textwidth]{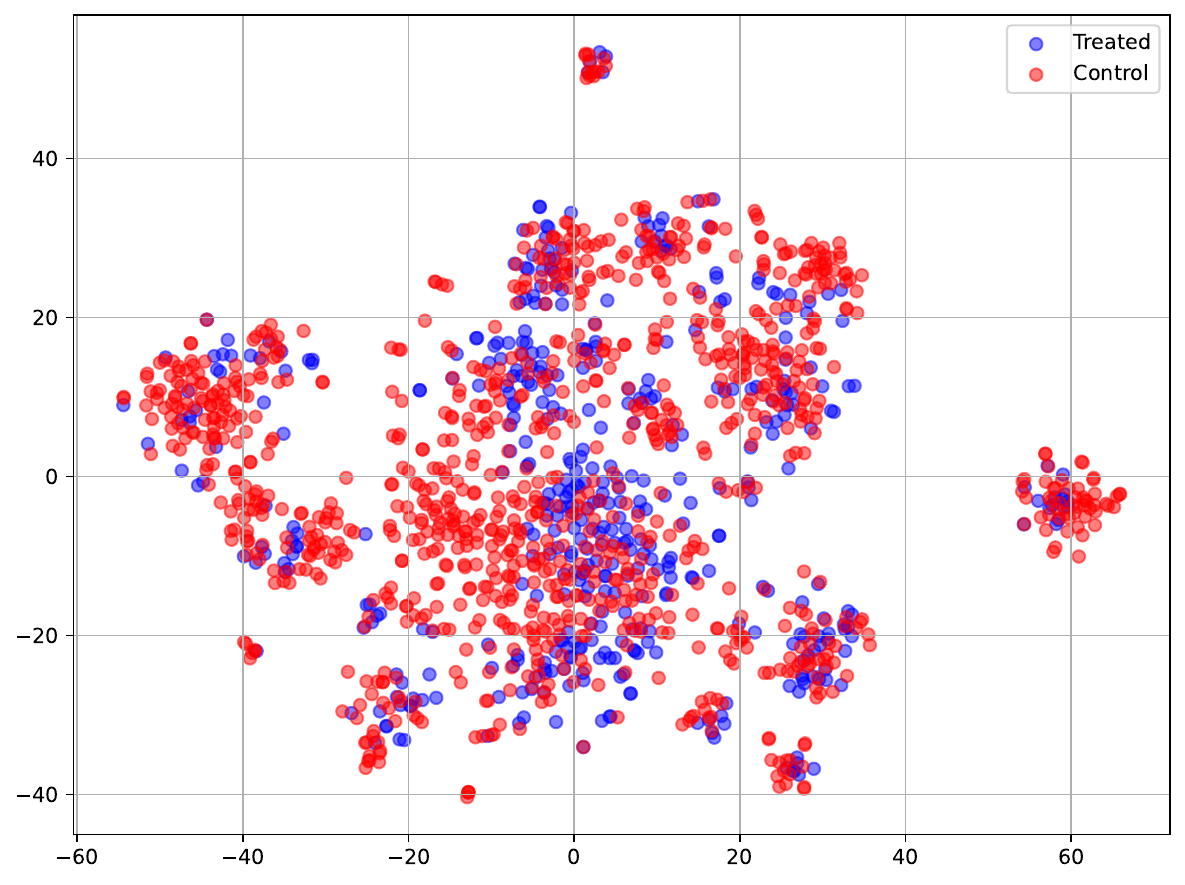}}
      \subfigure[Random at Step 10\label{fig:cmnist_random_10}]{\includegraphics[width=0.15\textwidth]{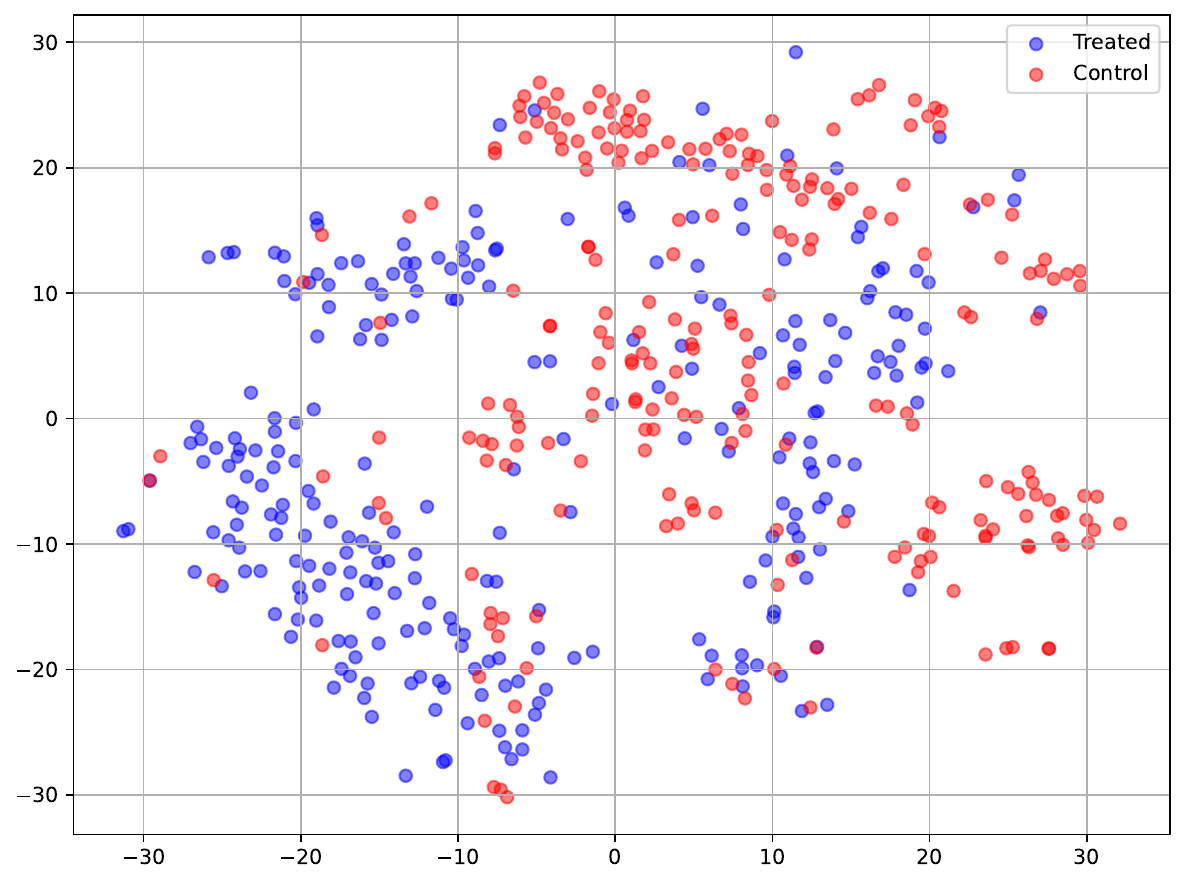}}
        \subfigure[Random at Step 30\label{fig:cmnist_random_30}]{\includegraphics[width=0.15\textwidth]{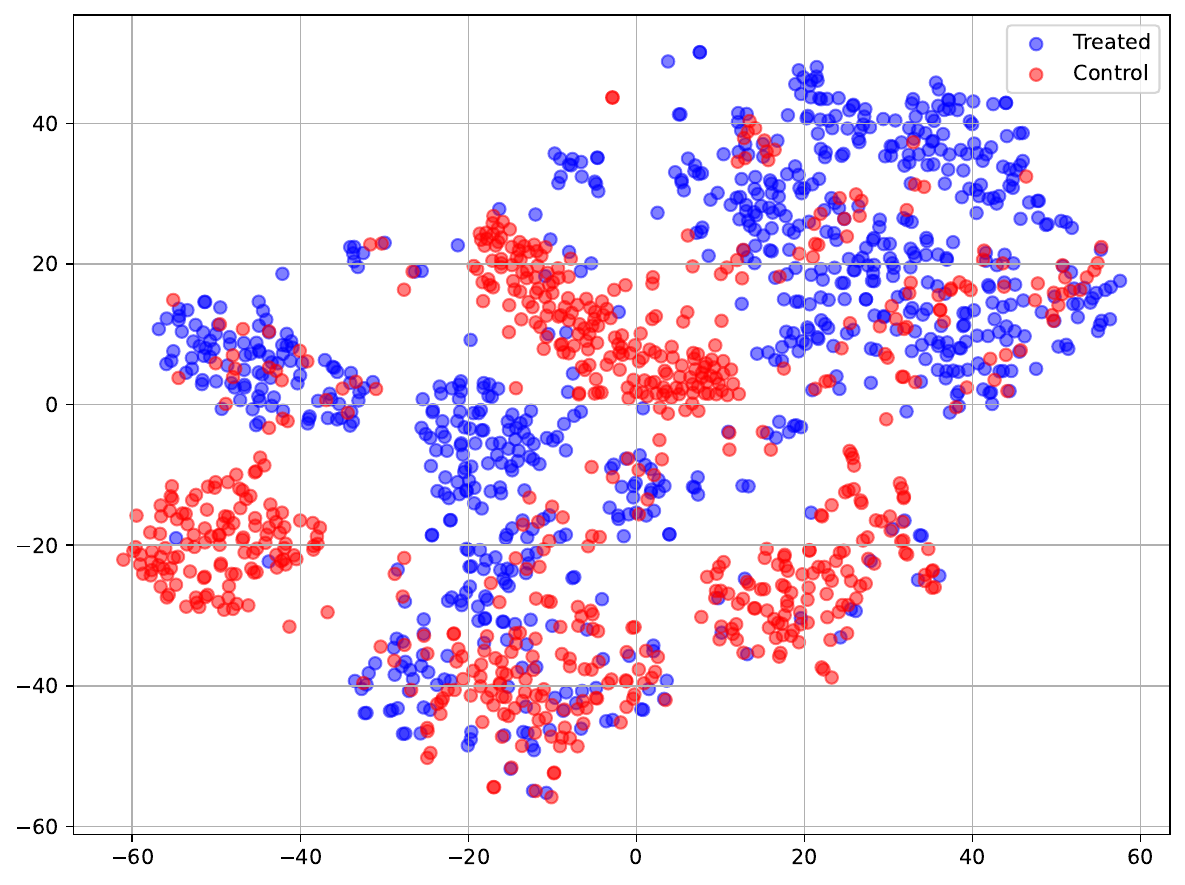}}\\
        \vspace{-0.2cm}
\subfigure[$\mu\rho$BALD at Step 5 \label{fig:ihdp_murho_5}]{\includegraphics[width=0.15\textwidth]{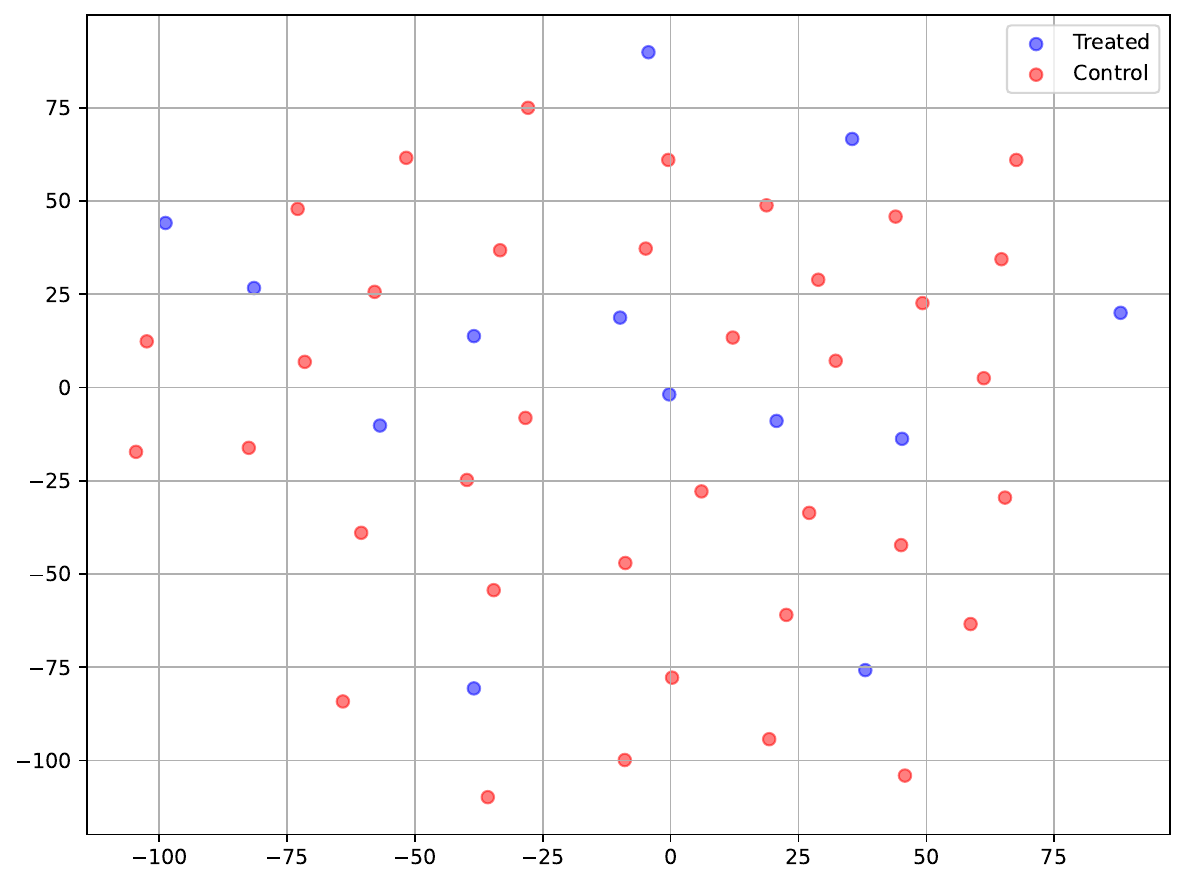}}
  \subfigure[$\mu\rho$BALD at Step 15\label{fig:ihdp_murho_15}]{\includegraphics[width=0.15\textwidth]{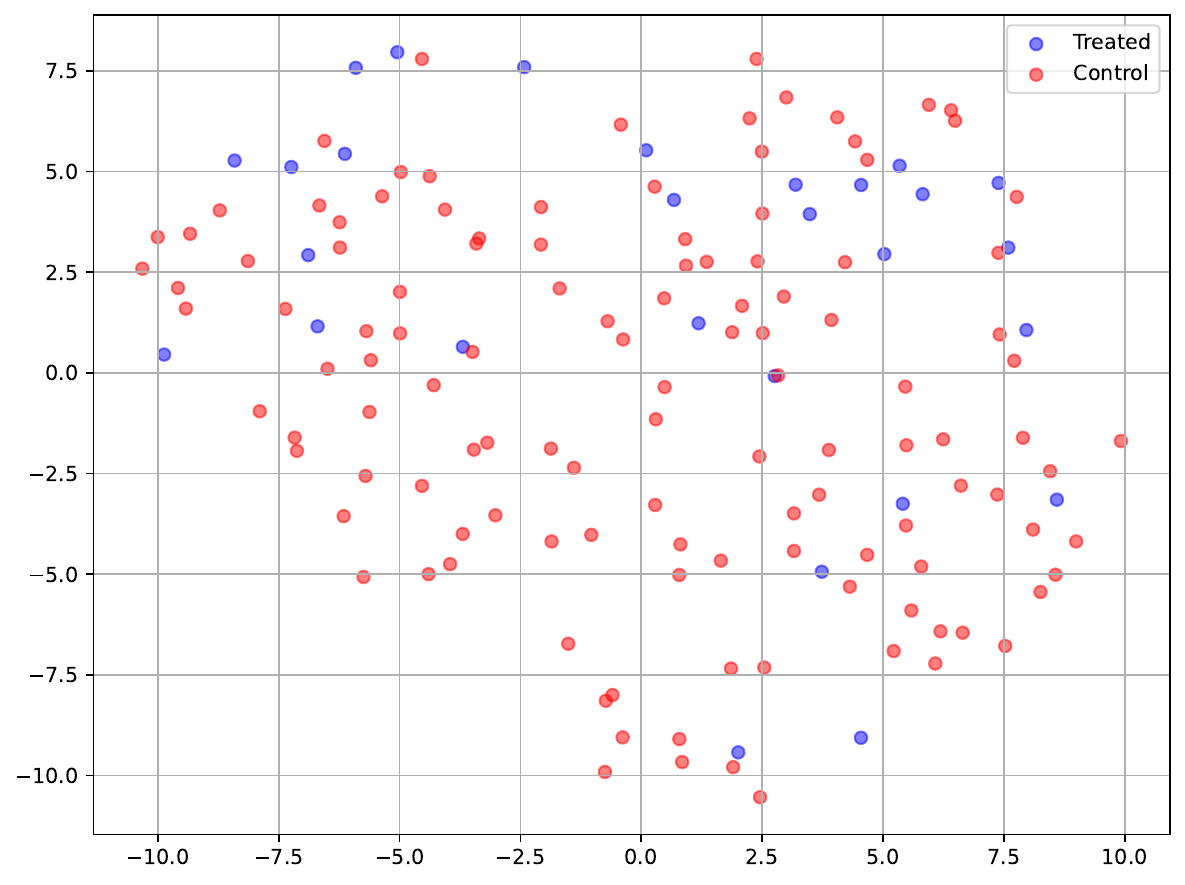}}
  \subfigure[$\mu\rho$BALD at Step 10\label{fig:ibm_murho_10}]{\includegraphics[width=0.15\textwidth]{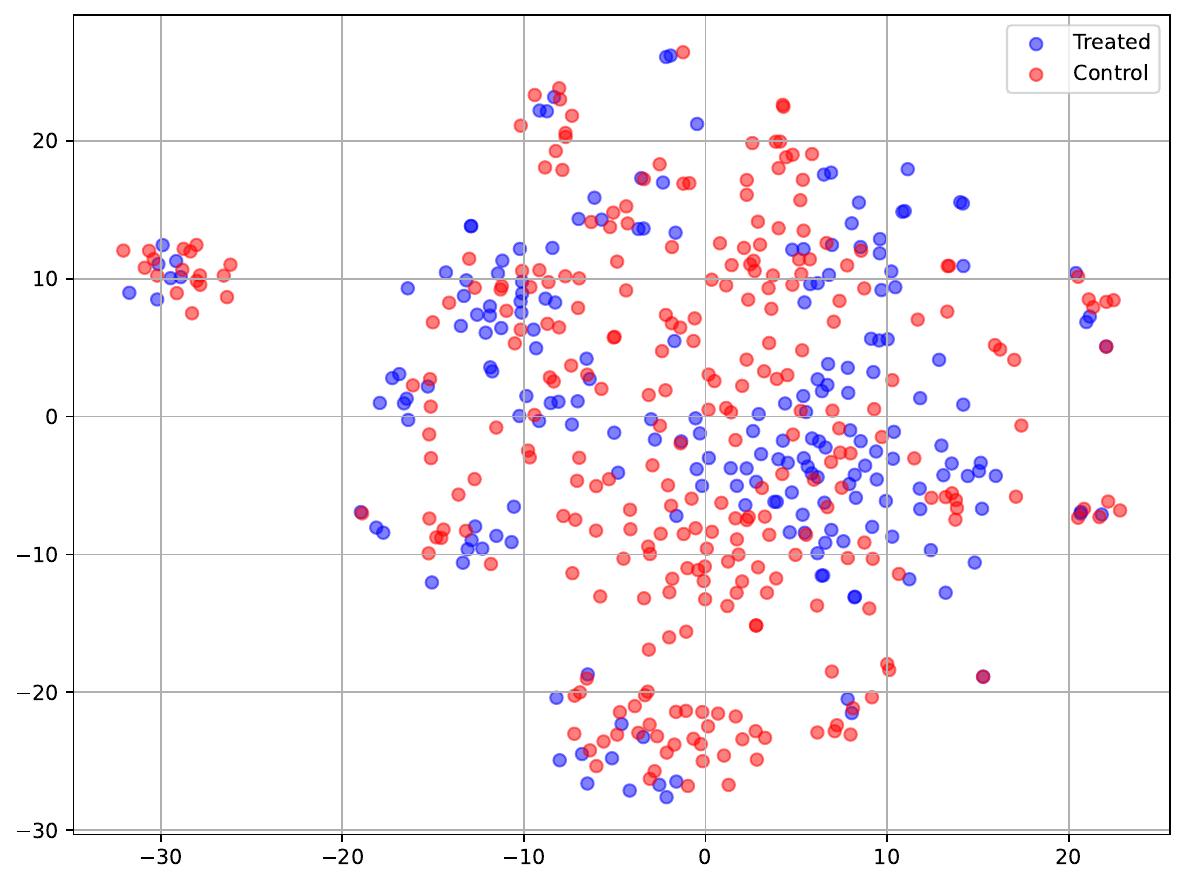}}
    \subfigure[$\mu\rho$BALD at Step 30\label{fig:ibm_murho_30}]{\includegraphics[width=0.15\textwidth]{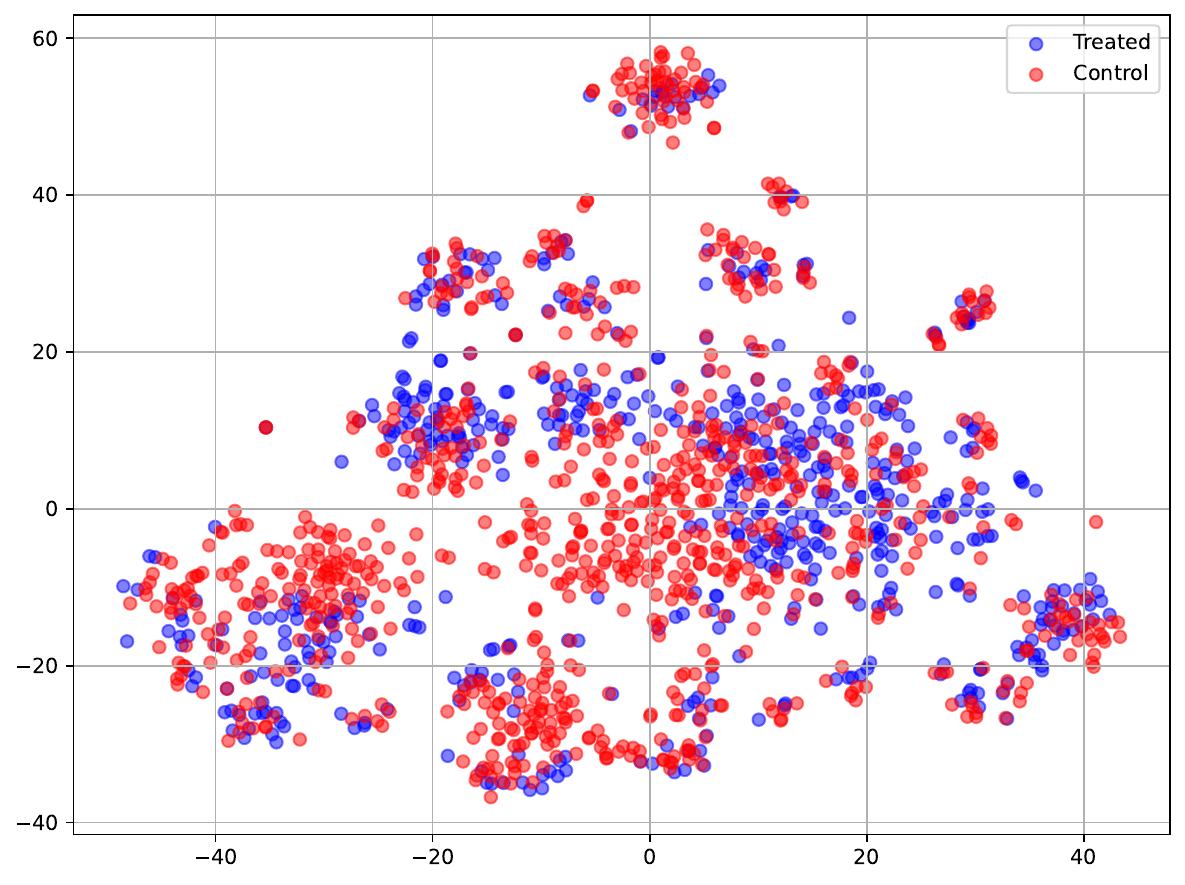}}
      \subfigure[$\mu\rho$BALD at Step 10\label{fig:cmnist_murho_10}]{\includegraphics[width=0.15\textwidth]{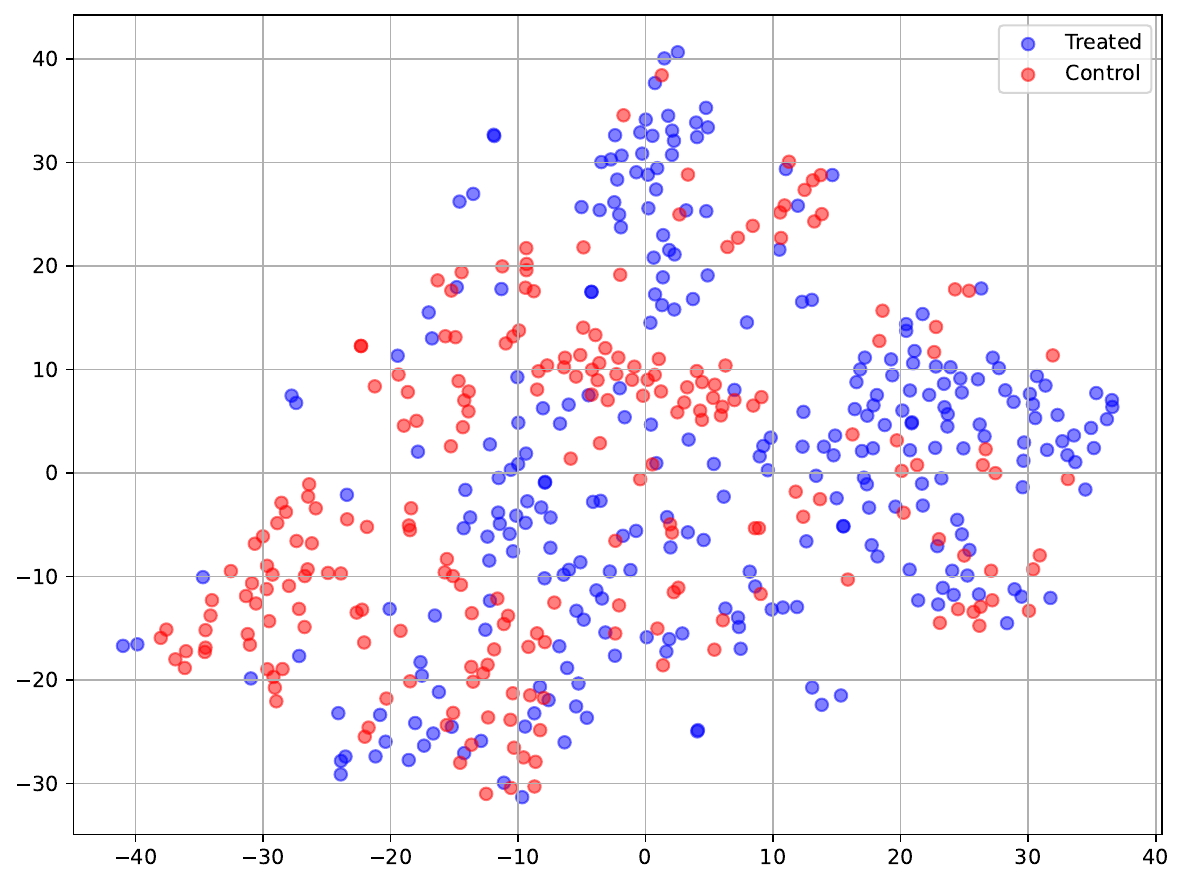}}
        \subfigure[$\mu\rho$BALD at Step 30\label{fig:cmnist_murho_30}]{\includegraphics[width=0.15\textwidth]{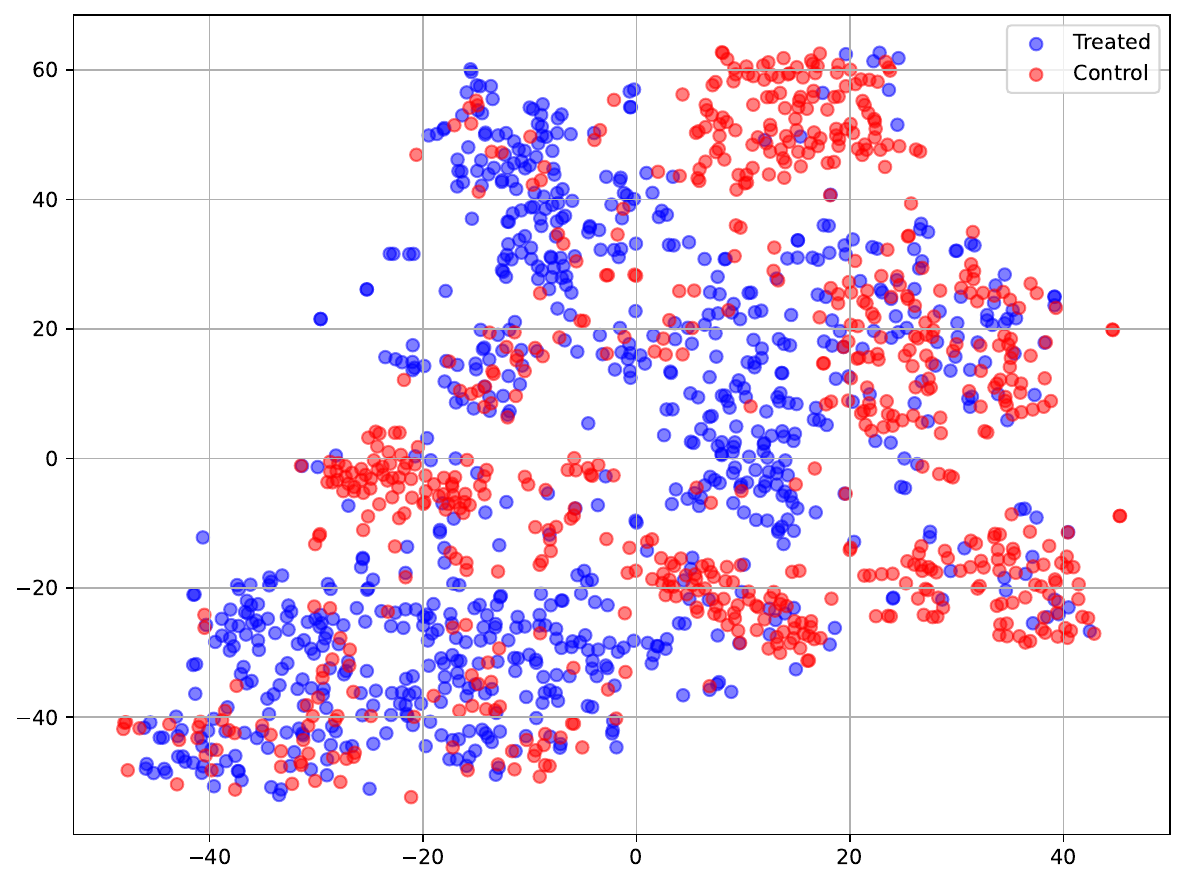}}\\
        \vspace{-0.2cm}
\subfigure[MACAL at Step 5 \label{fig:ihdp_macal_5}]{\includegraphics[width=0.15\textwidth]{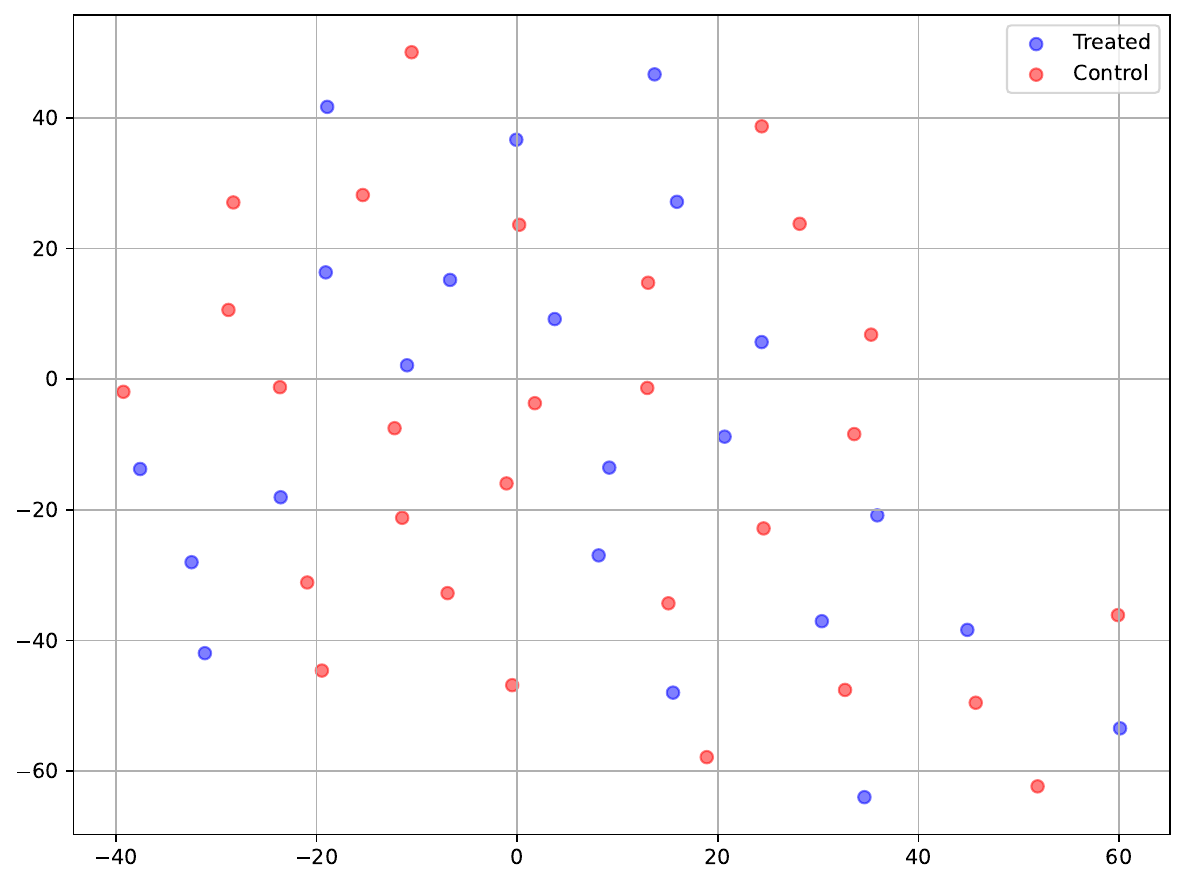}}
  \subfigure[MACAL at Step 15\label{fig:ihdp_macal_15}]{\includegraphics[width=0.15\textwidth]{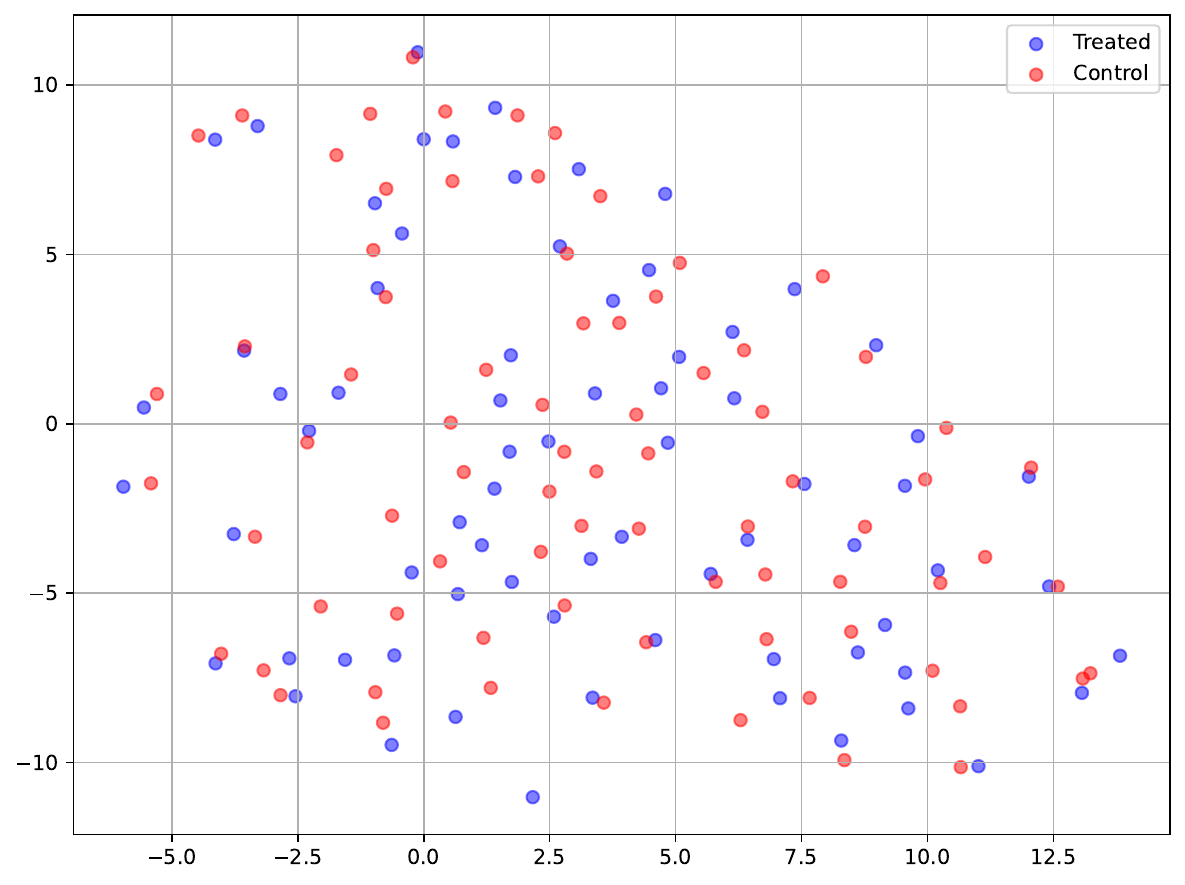}}
  \subfigure[MACAL at Step 10\label{fig:ibm_macal_10}]{\includegraphics[width=0.15\textwidth]{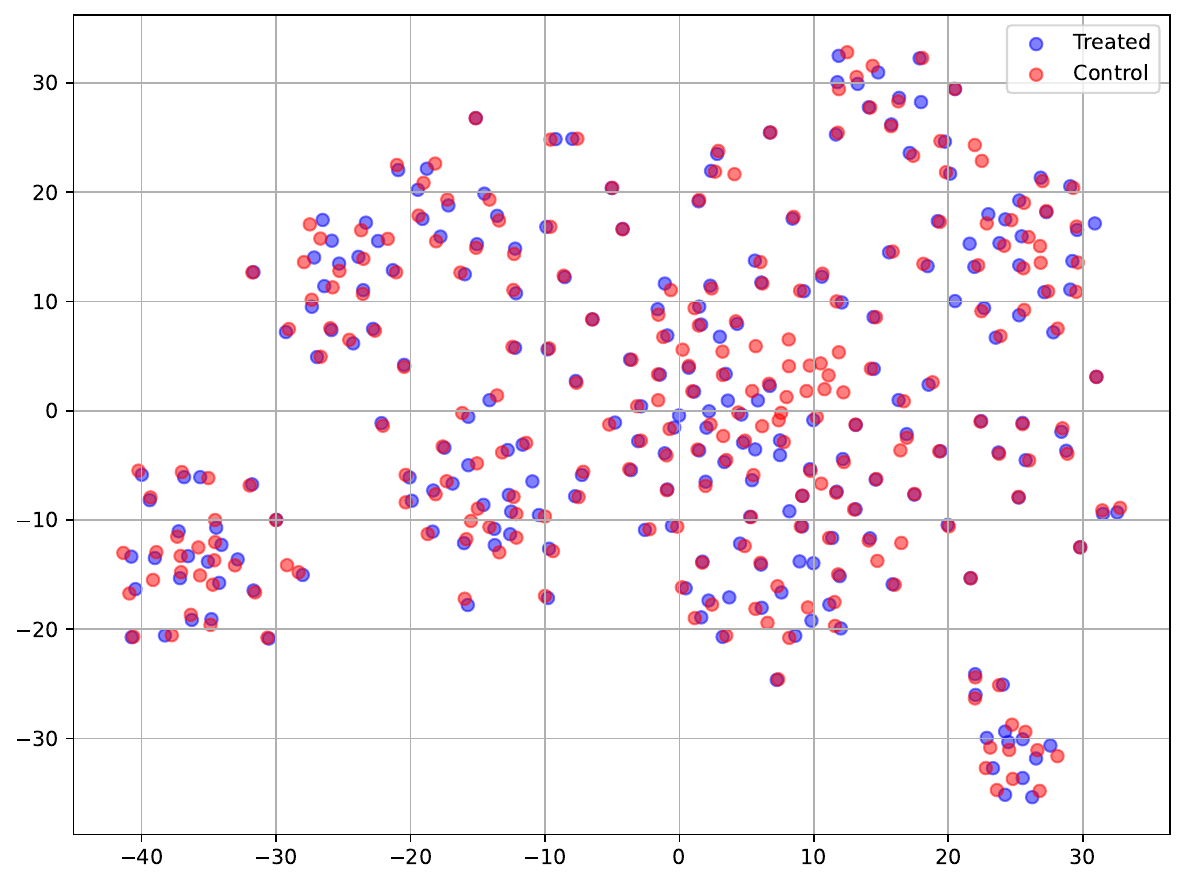}}
    \subfigure[MACAL at Step 30\label{fig:ibm_macal_30}]{\includegraphics[width=0.15\textwidth]{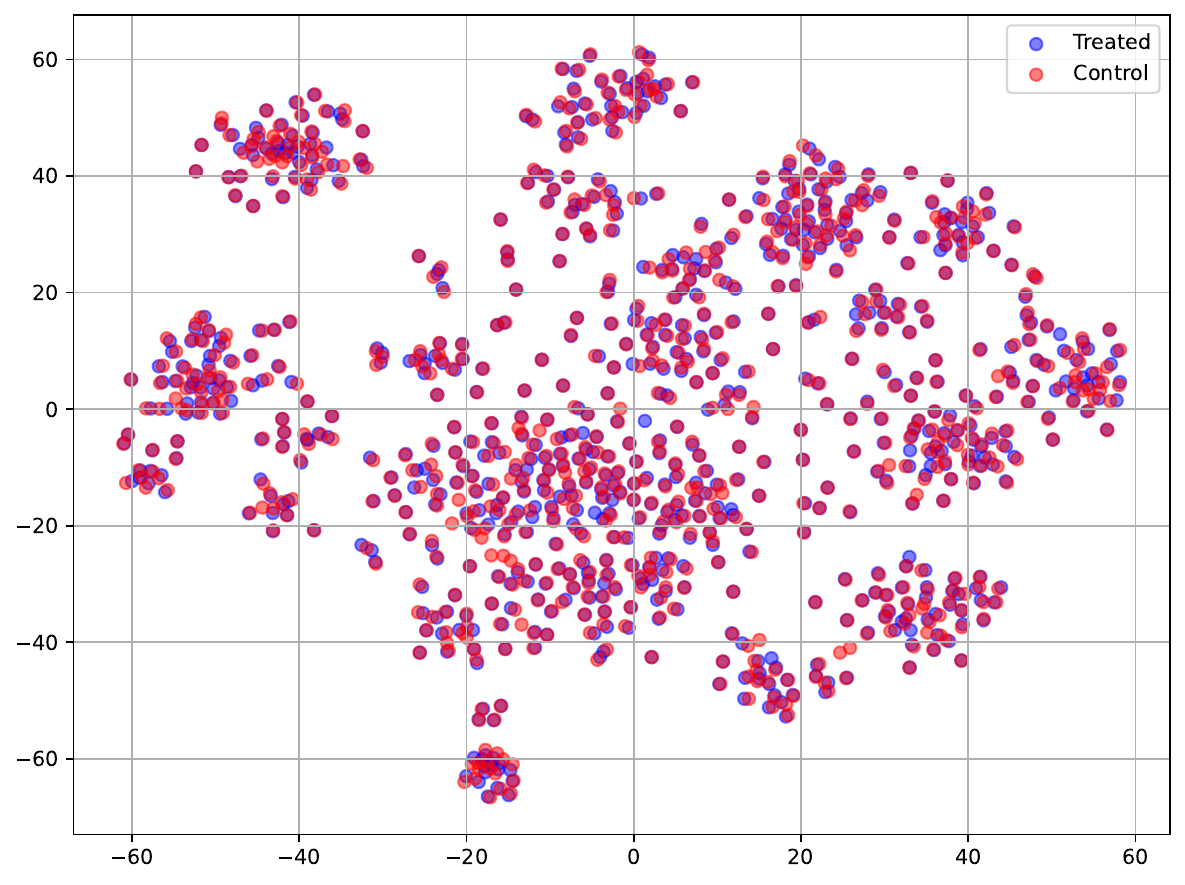}}
      \subfigure[MACAL at Step 10\label{fig:cmnist_macal_10}]{\includegraphics[width=0.15\textwidth]{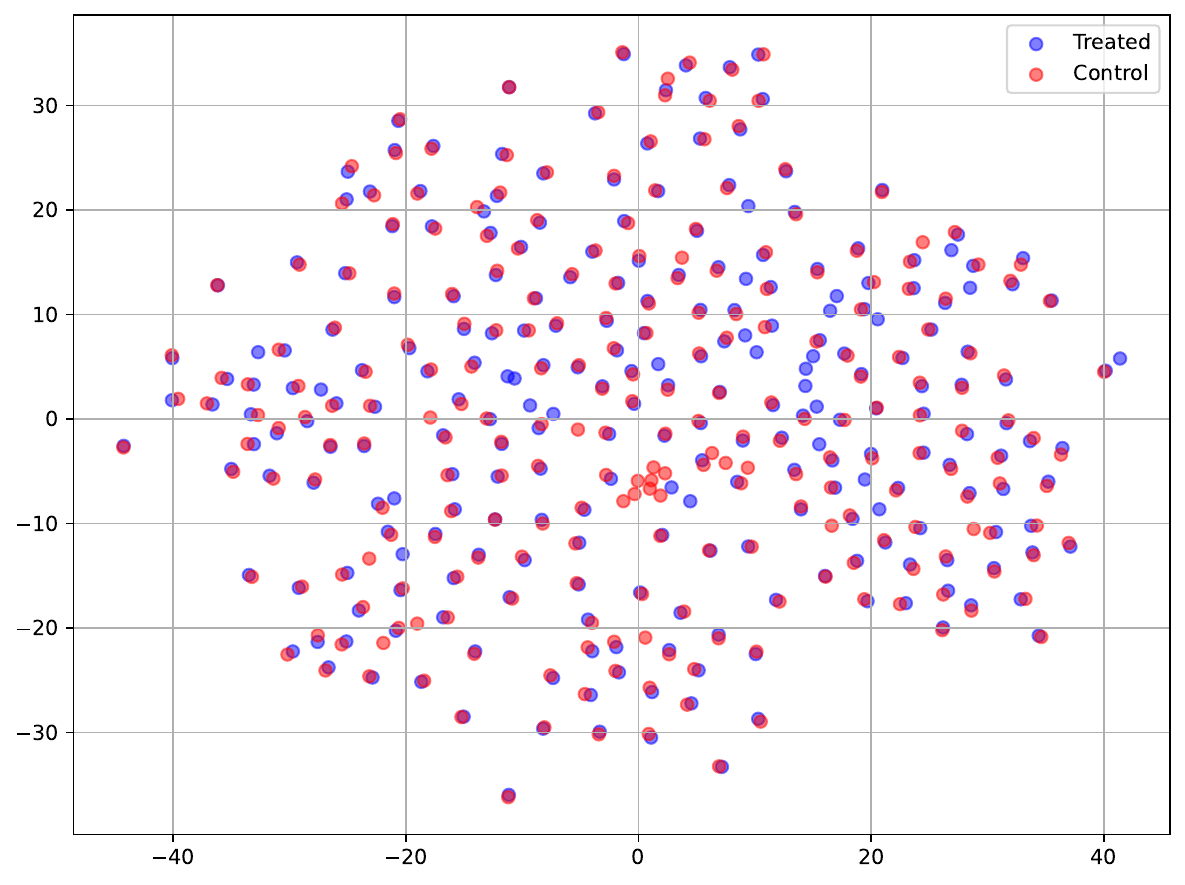}}
        \subfigure[MACAL at Step 30\label{fig:cmnist_macal_30}]{\includegraphics[width=0.15\textwidth]{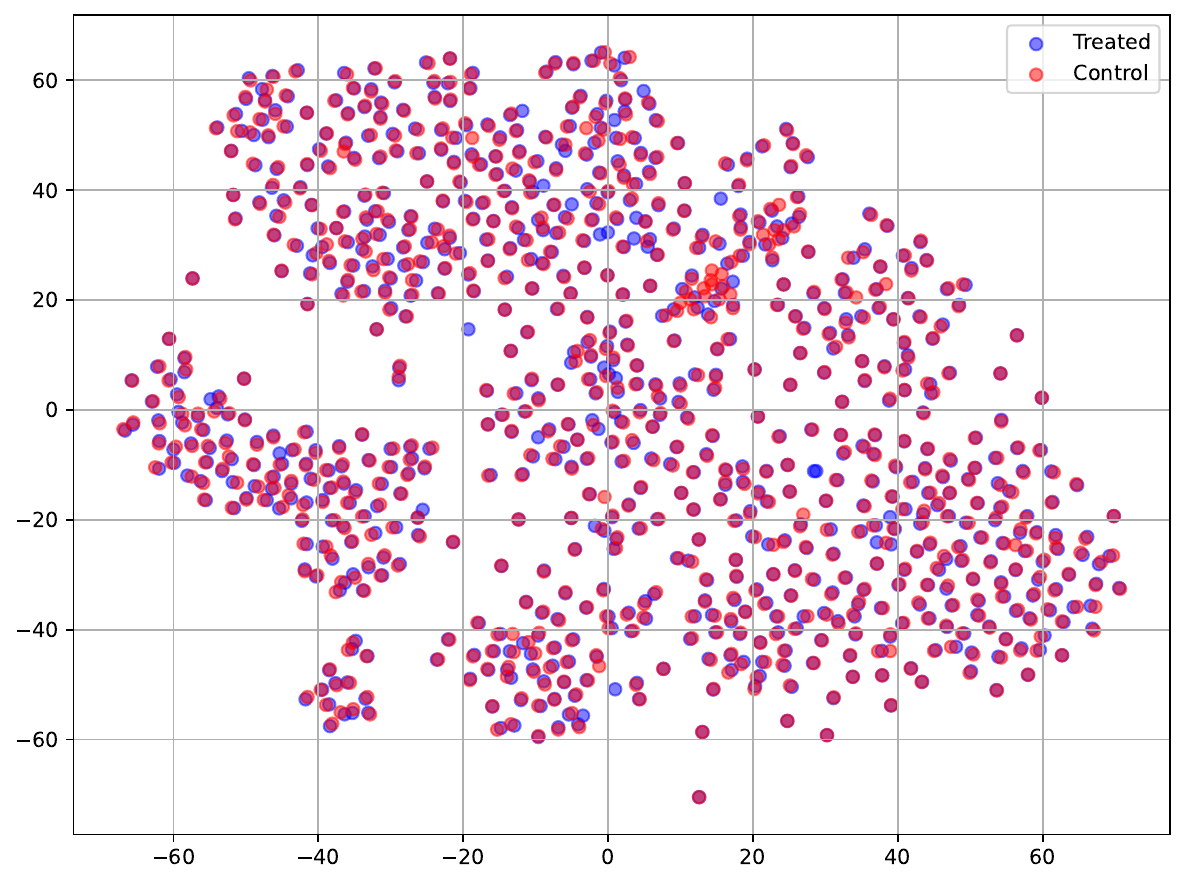}}
  \vspace{-0.3cm}
  \caption{Visualization of the post-acquisition dataset (IHDP: First two columns, IBM: Middle two columns, and CMNIST: Last two columns) via t-SNE for \textcolor{blue}{treatment group $t=1$}, \textcolor{red}{treatment group $t=0$}, and \textcolor{purple}{overlapping} for Random, $\mu\rho$BALD, and MACAL.}
  \label{fig:tsne}
  \vspace{-0.3cm}
\end{figure*}

\begin{figure*}[ht!]
  \centering
  \includegraphics[scale=0.3]{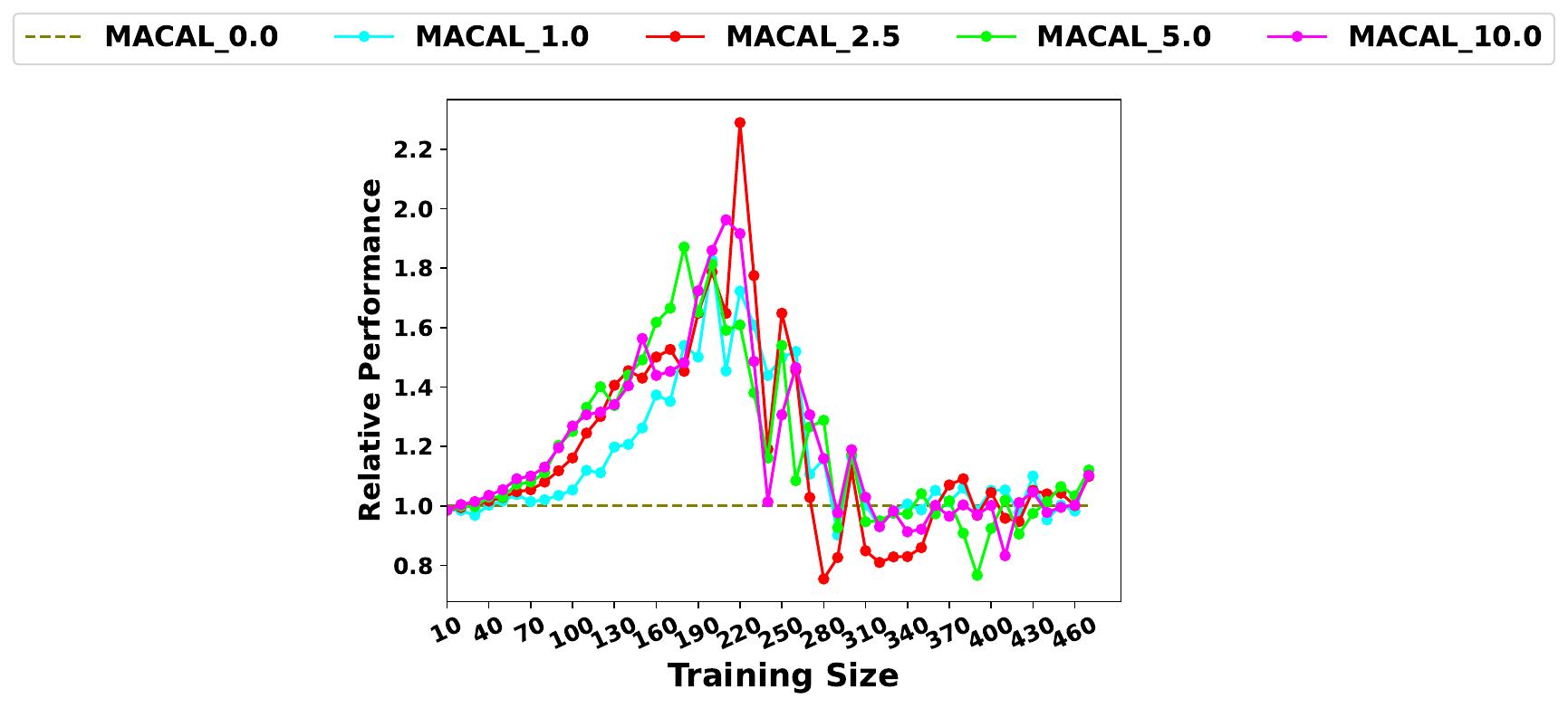}\\
  \vspace{-0.2cm}
  \subfigure[IHDP-DUE-DNN\label{fig:ihdp_alpha}]{\includegraphics[width=0.32\textwidth]{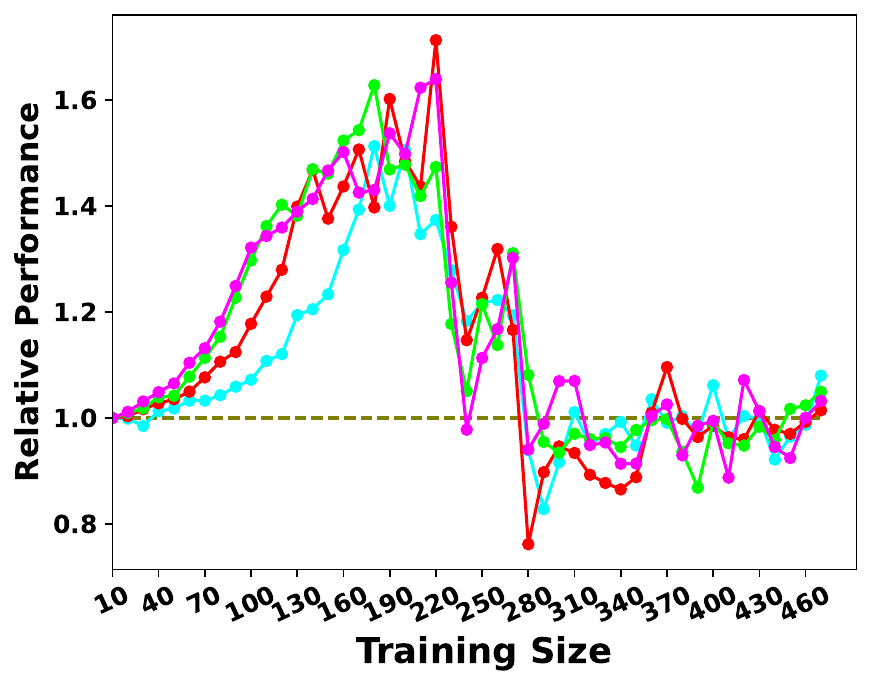}}
  \subfigure[IBM-DUE-DNN\label{fig:ibm_alpha}]{\includegraphics[width=0.3\textwidth]{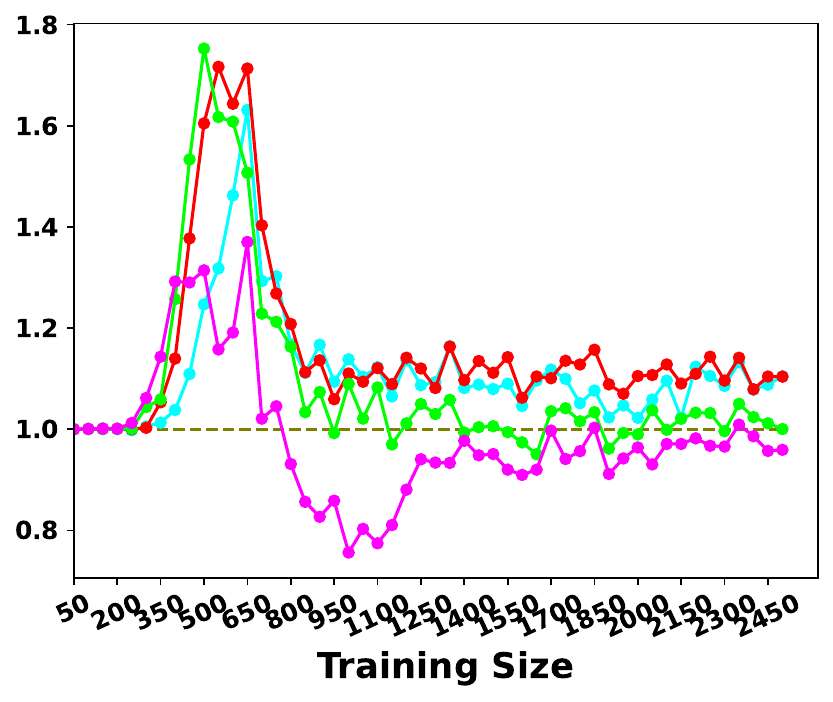}}
  \subfigure[CMNIST-DUE-CNN\label{fig:cmnist_cnn_alpha}]{\includegraphics[width=0.3\textwidth]{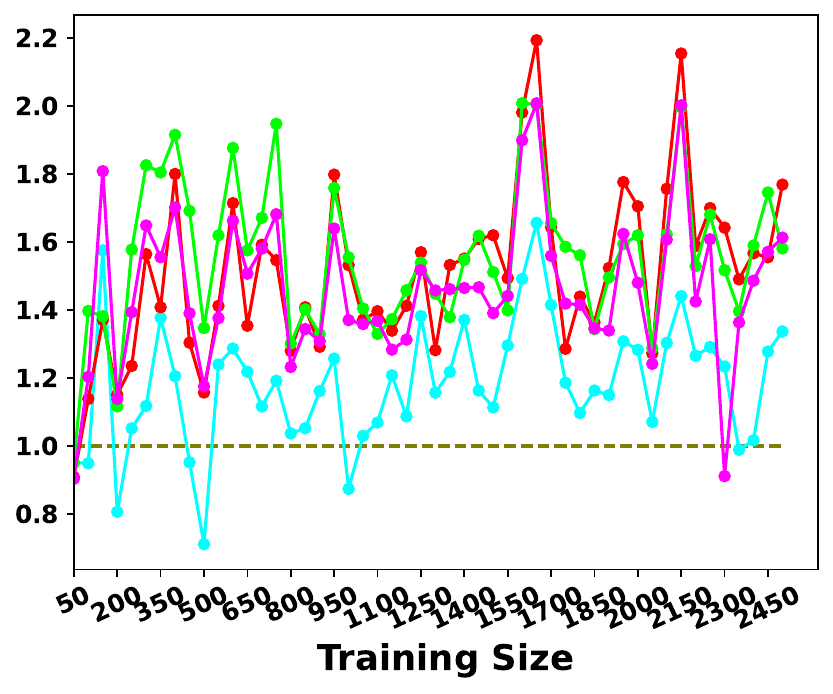}}
  \vspace{-0.2cm}
  \caption{Hyperparameter $\alpha$ representing various levels of symmetrical regularization for label acquisition. The relative performance of MACAL\_$\alpha_{i}$ is calculated as $\epsilon_{\text{PEHE},\alpha=0}$/$\epsilon_{\text{PEHE},\alpha=\alpha_{i}}$, the higher the better.}
  \label{fig:ablations_alpha}
  \vspace{-0.2cm}
\end{figure*} 

\subsection{Baseline Risk Evaluation}

Generally, across all figures, MACAL's performance set an empirical risk lower bound for all the other baselines. On the IHDP dataset, our proposed method obtains a lower risk till 160 training size (turning point). Then, it performs similarly to the other methods till the exhaustion of the pool set. This phenomenon is explainable due to the fact that the samples from treatment group $t=1$ get exhausted at the turning point, and MACAL can only acquire the samples from the treatment group ($t=0$) without benefiting from the reduction of distributional discrepancy by labelling similar pairs. It is also arguable that when deploying the general AL methods, e.g., BADGE, directly into active causal effect estimation, it is interesting to see that even the Random method can perform similarly to these SOTAs. We overall see a better performance of causal AL methods, e.g., Causal-BALD, and MACAL, over the general AL methods by additionally considering reducing the violation of positivity during label acquisition.

Moreover, none of the current SOTAs focusing on active causal effect estimation can consistently outperform the other methods from the general active learning research field across all the datasets. QHTE mostly underperforms because it only focuses on selecting the point that has the maximum distance from its closest neighbour in the current training set without meaningful constraints on post-acquisition imbalance on raw features. Also, even though $\mu\rho$BALD is the most representative method from \cite{jesson2021causal} incorporating the imbalance penalization in its query criterion, such indirect regularization via counterfactual uncertainty is not as optimal as ours. Because MACAL directly reduces the post-acquisition distributional imbalance by acquiring more similar pairs based on raw features. It is also noted that the division-form criterion of Causal-BALD can embed immense variation (the performance of all its variants fluctuates drastically) in estimations as shown in Figure \ref{fig:cmnist_cnn_causalal}, while our proposed simple addition-form criterion is significantly more stable. 

\subsection{Acquisition Visualization}

To give a direct comparison of acquisition quality, we visualize the results in Figure \ref{fig:tsne} by projecting the post-acquisition training set from three acquisition criteria, i.e., Random, $\mu\rho$BALD, and MACAL, on each dataset at two different query stages into the 2-dimensional latent space via t-SNE \cite{van2008visualizing}. From Figure \ref{fig:ihdp_random_5} to \ref{fig:cmnist_random_30} across three datasets, we observe that the Random draw from the original distribution inherently expresses a strong violation of positivity across three different datasets, such that we barely see large overlapping regions. While, $\mu\rho$BALD shown in Figure \ref{fig:ihdp_murho_5} to \ref{fig:cmnist_murho_30} looks slightly better than the Random method by being more spread out, but it still cannot well resolve the violation of the positivity issue at large scale, rendering a significantly imbalanced label acquisition for different treatment groups. As for MACAL, we observe an exceptional acquisition result from Figure \ref{fig:ihdp_macal_5} to \ref{fig:cmnist_macal_30}, each of the samples from both of the treatment groups can mostly find its (close) counterfactual such that the violation of positivity is significantly reduced. The acquisition by MACAL also shows high diversity instead of clustering. Hence, the remarkable performance gap shown in Figure \ref{fig:cmnist_cnn_causalal} can also be explained in essence by the acquisition results shown in \ref{fig:cmnist_macal_10} and \ref{fig:cmnist_macal_30}. Additional visualizations for the other baselines, and on different datasets are accessible in Appendix \ref{appendix:visualization}.

\subsection{Symmetrical Regularization Study\label{section:ablations}}
We conduct extensive ablation experiments for $\alpha\in\{0, 1, 2.5, 5, 10\}$, a clear observation is that, even though the benefit of setting the symmetrical regularization is non-trivial, there is no single hyperparameter $\alpha$ that can consistently outperform all the others throughout the entire label acquisition process. Also, the stronger regularization, e.g., $\alpha=10$, delivers better performance at the early stage of the acquisition, but such an advantage cannot be maintained across the whole acquisition process. Interestingly, during the course of the acquisition, a decreasing coefficient empirically grants an increasing relative performance, e.g., $\alpha=2.5$ underperforms $\alpha=10$ at the early stage, but it outperforms $\alpha=10$ in the later stage. This is explainable because when the key set of the overlapping samples is mostly collected, there is less information can be obtained from acquiring the repetitive samples even though these are from the overlapping region. The criterion should bias its acquisition toward the uncertain non-overlapping area to gain more information to reduce the risk of the model.

\section{Conclusion}

In this paper, we study the well-under-explored yet important and practical active causal effect estimation problem and construct a theoretical framework from a novel and intriguing perspective, i.e., decompose a more informative risk upper bound without loosening it and give mathematically guaranteed risk convergence analysis under certain conditions. Therefore, in theory we maximize the decomposed terms at each query step in order to minimize the generalization risk. Subsequently, we derive a theory-inspired simplified yet effective label acquisition algorithm, i.e., MACAL, which considers the joint reduction of the model's variance and post-acquisition distributional imbalance via a simplified yet effective label acquisition criterion. Moreover, reaching data-efficient labelling is never an NP-hard problem via MACAL, and thus the optimum can be obtained in polynomial time with $\mathcal{O}(N^{2})$. It is generally demonstrated that our proposed method consistently outperforms the other baselines across all the datasets with a non-trivial performance gain.

\begin{acks}
    This work is supported by the Australian Research Council under the streams of Future Fellowship (No. FT210100624), Discovery Early Career Researcher Award (No. DE230101033), Industrial Transformation Training Centre (No. IC200100022), Discovery Project (No. DP240101108 and No. DP240101814), and Linkage Project (No. LP230200892). Partial support from Health and Wellbeing Queensland is gratefully acknowledged. HW would like to additionally thank the invaluable support from Carrie Chen.
\end{acks}

\newpage
\bibliographystyle{ACM-Reference-Format}
\bibliography{sample-sigconf}

\appendix

\setcounter{assumption}{0}
\setcounter{proposition}{0}
\setcounter{theorem}{0}

\newpage
\section{Theory\label{appendix:theory}}

\subsection{Convergence Behaviour of Risk Upper Bound\label{appendix:theorem_1}}

\begin{theorem}\label{appendixtheorem:1}
    With budget $\mathcal{M}$, the maximum risk upper bound reduction $\Delta_{\mathcal{B}_{\text{overall}}}$ is achieved at the termination of the entire I data query steps given that the generalization risk upper bound shrinkage $\Delta_{\mathcal{B}_{i}}$ is maximized at each query step $\forall i$, i.e.:
    \begin{equation}
        \begin{split}
            \argmax_{\Tilde{\mathcal{D}}_{\text{overall}}}~&\Delta_{\mathcal{B}_{\text{overall}}}=\bigcup_{i=1}^{I}\argmax_{\Tilde{\mathcal{D}}_{i}}\Delta_{\mathcal{B}_{i}}\\
            &s.t.~ |\Tilde{\mathcal{D}}_{\text{overall}}|\leq\mathcal{M},
        \end{split}
    \end{equation}
    where $\Tilde{\mathcal{D}}_{\text{overall}}$ is the overall acquired data, $\Tilde{\mathcal{D}}_{i}$ is the acquired batch at $i$-th query step, and $\Delta_{\mathcal{B}_{i}}=\sum_{t\in\{0,1\}}\Delta_{\text{Var}_{i}}^{t}(\Tilde{\mathcal{D}}_{i})+C_{\phi}\Delta_{\text{IPM}_{i}}(\Tilde{\mathcal{D}}_{i})$. The convergence rate of the risk upper bound has the following guaranteed behaviours under certain circumstances:
    \begin{enumerate}[label=\roman*)] 
    \item When variance reduction $\sum_{t\in\{0,1\}}\Delta_{\text{Var}_{i}}^{t}(\Tilde{\mathcal{D}}_{i})$ becomes the dominant part of the risk upper bound, the risk convergence is lower-bounded by $\Omega(\beta^{i})$ with constant $\beta\in[0,1)$.
    \item While, with dominant constant $C_{\phi}$, the risk convergence is upper-bounded by $\mathcal{O}(\frac{1}{i+\gamma_{0}})$ with constant $\gamma_{0}\in\mathbb{R}^{+}$.
    \end{enumerate}
\end{theorem}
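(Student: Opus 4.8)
The starting point is the PEHE upper bound of Shalit et al.~\cite{shalit2017estimating}: for $f=\{\phi,\Phi\}$ one has $\epsilon_{\text{PEHE}}(f)\le\mathcal{B}$, where $\mathcal{B}$ collects (i) the factual errors of the two treatment-arm heads, (ii) the imbalance penalty $C_{\phi}\,\text{IPM}(p^{t=1}_{\phi},p^{t=0}_{\phi})$ in representation space, and (iii) an additive constant independent of the labelled set. Under the Gaussian-process heads used here, each factual error splits into an irreducible bias term controlled solely by model capacity and a variance term $\text{Var}^{t}$ that is monotonically non-increasing in the number of labelled samples of arm $t$. Writing $\mathcal{B}_{i}$ for the value of this bound after $i$ query steps, the bias and the constant cancel in every difference, so $\mathcal{B}_{i-1}-\mathcal{B}_{i}=\sum_{t\in\{0,1\}}\Delta_{\text{Var}_{i}}^{t}(\Tilde{\mathcal{D}}_{i})+C_{\phi}\Delta_{\text{IPM}_{i}}(\Tilde{\mathcal{D}}_{i})=\Delta_{\mathcal{B}_{i}}$, exactly the quantity in the statement.

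First I would establish the decomposition of the overall reduction by telescoping: $\Delta_{\mathcal{B}_{\text{overall}}}=\mathcal{B}_{0}-\mathcal{B}_{I}=\sum_{i=1}^{I}(\mathcal{B}_{i-1}-\mathcal{B}_{i})=\sum_{i=1}^{I}\Delta_{\mathcal{B}_{i}}$. With the per-step batch sizes fixed and $\sum_{i}|\Tilde{\mathcal{D}}_{i}|\le\mathcal{M}$, any feasible acquisition obeys $\max\sum_{i}\Delta_{\mathcal{B}_{i}}\le\sum_{i}\max_{\Tilde{\mathcal{D}}_{i}}\Delta_{\mathcal{B}_{i}}$. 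The core of the argument is to show this inequality is tight for the step-wise greedy scheme: since $\Delta_{\mathcal{B}_{i}}$ depends on the history only through the current training set, and since both the per-arm variance reduction and the IPM reduction are monotone in that set, maximising $\Delta_{\mathcal{B}_{i}}$ at step $i$ does not shrink the reductions attainable at steps $i+1,\dots,I$; an induction on the query index then shows the greedy trajectory attains $\sum_{i}\max\Delta_{\mathcal{B}_{i}}$ and is therefore globally optimal, giving the claimed identity $\argmax_{\Tilde{\mathcal{D}}_{\text{overall}}}\Delta_{\mathcal{B}_{\text{overall}}}=\bigcup_{i=1}^{I}\argmax_{\Tilde{\mathcal{D}}_{i}}\Delta_{\mathcal{B}_{i}}$. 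This decoupling-across-steps is the main obstacle: greedy is not optimal for combinatorial subset selection in general, and the cleanest route I foresee is a backward induction on the ``reduction-to-go'' value function, whose optimiser is myopic precisely because of the separable, monotone structure of $\Delta_{\mathcal{B}_{i}}$.

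For the two convergence regimes I would specialise the telescoped expression and pass to the asymptotic value $\mathcal{B}_{\infty}$. (i) When $\sum_{t}\Delta_{\text{Var}_{i}}^{t}$ dominates (the imbalance term already negligible, e.g.\ because near-identical pairs have been collected or $C_{\phi}$ is small), $\mathcal{B}_{i}-\mathcal{B}_{\infty}$ is governed by the residual variance $\sum_{t}(\text{Var}_{i}^{t}-\text{Var}_{\infty}^{t})$; acquiring the maximum-predictive-variance points in each Gaussian-process head is exactly the setting of Lemma~\ref{lemma:convergence_var}, which yields a geometric rate, so the risk converges at rate $\Omega(\beta^{i})$ with $\beta\in[0,1)$. (ii) When $C_{\phi}$ dominates, $\mathcal{B}_{i}-\mathcal{B}_{\infty}$ is governed by $C_{\phi}\,\text{IPM}(p^{t=1}_{i},p^{t=0}_{i})$, which under the $1$-Wasserstein instantiation equals $C_{\phi}W_{1}(p^{t=1}_{i},p^{t=0}_{i})$; the symmetric pairing acquisition keeps finding (near-)identical counterparts across arms, so Lemma~\ref{lemma:convergence_wass} applies and gives the rate $\mathcal{O}(\frac{1}{i+\gamma_{0}})$ with $\gamma_{0}\in\mathbb{R}^{+}$. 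Combining the two cases with the telescoping identity completes the proof.
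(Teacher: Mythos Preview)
Your proposal follows the paper's proof closely: start from the Shalit et al.\ bound, apply the bias--variance--noise decomposition of the factual errors (the paper's Proposition~\ref{appendixprop:bias-variance}) so that bias and noise cancel across consecutive steps, telescope to obtain $\Delta_{\mathcal{B}_{\text{overall}}}=\sum_{i=1}^{I}\Delta_{\mathcal{B}_{i}}$, and then invoke Lemmas~\ref{lemma:convergence_var} and~\ref{lemma:convergence_wass} in the two asymptotic regimes.

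Two presentational differences are worth flagging. First, you treat the step from $\max\sum_{i}\Delta_{\mathcal{B}_{i}}$ to $\bigcup_{i}\argmax_{\Tilde{\mathcal{D}}_{i}}\Delta_{\mathcal{B}_{i}}$ as the main obstacle and sketch a backward-induction/monotonicity argument; the paper does not do this and simply asserts the equality immediately after telescoping, so your plan is more careful than what the paper actually provides (and, to be candid, the separability/monotonicity claim you invoke is not established in the paper either). Second, for the convergence regimes you phrase things via the residual $\mathcal{B}_{i}-\mathcal{B}_{\infty}$, whereas the paper defines the per-step shrinkage ratio $A_{i}=\Delta_{\mathcal{B}_{i}}/\mathcal{B}_{i-1}$ and then argues $A_{i}\approx G_{i}$ when $C_{\phi}$ is negligible and $A_{i}\lesssim S_{i}$ when $C_{\phi}$ is dominant, with a short chain of inequalities before handing off to the respective lemmas. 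Both routes reach the same conclusions through the same two lemmas, so the difference is cosmetic.
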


\begin{proof}
    Shalit et al. \cite{shalit2017estimating} provide the well-known upper-bound for the expected PEHE as:
    \begin{equation}
        \epsilon_{\text{PEHE}}
\leq2[\epsilon_{\text{F}}^{t=1}+\epsilon_{\text{F}}^{t=0}+C_{\phi}(\text{IPM}_{\mathcal{F}}(p^{t=1}_{\phi}, p^{t=0}_{\phi}))].
    \end{equation} 
    We denote such upper-bound at $i$-th query step as $\mathcal{B}_{i}$ with post-acquisition dataset $\mathcal{D}_{i}$ (which includes the labelled optimal batch $\Tilde{\mathcal{D}}^{*}_{i}$). By factual error decomposition in Proposition \ref{appendixprop:bias-variance}, we have:
    \begin{equation}
        \epsilon_{\text{F}}^{t}=\mathbb{E}_{\mathcal{X}}[(f^{t}(x) - \mathbb{E}[\hat{f}^{t}(x)])^{2}] + \mathbb{E}_{\mathcal{X}}[\mathbb{E}[(\hat{f}^{t}(x) - \mathbb{E}[\hat{f}^{t}(x)])^{2}]]+\sigma_{\xi^{t}}^{2}.
    \end{equation}
    
    We leave out the constant multiplier 2 in the original bound for notation simplicity during deduction (as the calculation for the shrinkage cancels off the constant), and derive the upper bound $\mathcal{B}_{i}$ at $i$-th query step in a brand-new form but with the same tightness as it is:
    \begin{subequations}
    \begin{align}
        \mathcal{B}_{i}=&\epsilon_{i,\text{F}}^{t=1}+\epsilon_{i,\text{F}}^{t=0}+C_{\phi}(\text{IPM}_{\mathcal{F}}(p^{t=1}_{i,\phi}, p^{t=0}_{i,\phi}))\\
        =&\sum_{t\in\{0,1\}}\mathbb{E}_{\mathcal{X}}\left[\text{Bias}^{t}_{i}[\hat{f}^{t}_{i}(x;\mathcal{D}_{i})]\right] + \\&\sum_{t\in\{0,1\}}\mathbb{E}_{\mathcal{X}}\left[\text{Var}^{t}_{i}[\hat{f}^{t}_{i}(x;\mathcal{D}_{i})]\right] +\\
        &\sum_{t\in\{0,1\}}\sigma^{2}_{\xi^{t}_{i}} + C_{\phi}\text{IPM}_{\mathcal{F}}(p^{t=1}_{i,\phi}, p^{t=0}_{i,\phi}),
    \end{align}
    \end{subequations} where $\text{Bias}^{t}_{i}[\hat{f}^{t}_{i}(x;\mathcal{D}_{i})]=(f^{t}_{i}(x) - \mathbb{E}[\hat{f}^{t}_{i}(x)])^{2}$, and $\text{Var}^{t}_{i}[\hat{f}^{t}_{i}(x;\mathcal{D}_{i})]=\mathbb{E}[(\hat{f}^{t}_{i} - \mathbb{E}[\hat{f}^{t}_{i}])^{2}]$. 
    
    Analogously, at ($i-$1)-th query step, namely $\mathcal{B}_{i-1}$ with dataset $\mathcal{D}_{i-1}$, we have:
    \begin{subequations}
    \begin{align}
        \mathcal{B}_{i-1}
        =&\sum_{t\in\{0,1\}}\mathbb{E}_{\mathcal{X}}\left[\text{Bias}^{t}_{i-1}[\hat{f}^{t}_{i-1}(x;\mathcal{D}_{i-1})]\right] + \\&\sum_{t\in\{0,1\}}\mathbb{E}_{\mathcal{X}}\left[\text{Var}^{t}_{i-1}[\hat{f}^{t}_{i-1}(x;\mathcal{D}_{i-1})]\right] + \\&\sum_{t\in\{0,1\}}\sigma^{2}_{\xi^{t}_{i-1}} + C_{\phi}\text{IPM}_{\mathcal{F}}(p^{t=1}_{i-1,\phi}, p^{t=0}_{i-1,\phi}).
    \end{align}
    \end{subequations}
    Subsequently, the shrinkage $\Delta_{\mathcal{B}_{i}}$ at $i$-th query step is defined as:
    \begin{subequations}
        \begin{align}
        &\Delta_{\mathcal{B}_{i}} = \mathcal{B}_{i-1} - \mathcal{B}_{i}\\
        =&\sum_{t\in\{0,1\}}\mathbb{E}_{\mathcal{X}}\left[\text{Bias}^{t}_{i-1}[\hat{f}^{t}_{i-1}(x;\mathcal{D}_{i-1})]-\text{Bias}^{t}_{i}[\hat{f}^{t}_{i}(x;\mathcal{D}_{i})])\right]+\\ 
        &\sum_{t\in\{0,1\}}\mathbb{E}_{\mathcal{X}}\left[\text{Var}^{t}_{i-1}[\hat{f}^{t}_{i-1}(x;\mathcal{D}_{i-1})]-\text{Var}^{t}_{i}[\hat{f}^{t}_{i}(x;\mathcal{D}_{i})]\right]+\\
        &\sum_{t\in\{0,1\}}(\sigma^{2}_{\xi^{t}_{i-1}}-\sigma^{2}_{\xi^{t}_{i}}) + C_{\phi}\left(\text{IPM}_{\mathcal{F}}(p^{t=1}_{i-1,\phi}, p^{t=0}_{i-1,\phi})-\text{IPM}_{\mathcal{F}}(p^{t=1}_{i,\phi}, p^{t=0}_{i,\phi})\right)\\
        =&0+\\
        &\sum_{t\in\{0,1\}}\underbrace{\mathbb{E}_{\mathcal{X}}\left[\text{Var}^{t}_{i-1}[\hat{f}^{t}_{i-1}(x;\mathcal{D}_{i-1})]-\text{Var}^{t}_{i}[\hat{f}^{t}_{i}(x;\mathcal{D}_{i})]\right]}_{\Delta_{\text{Var}_{i}}^{t}}+\\
        &0+C_{\phi}\underbrace{\left(\text{IPM}_{\mathcal{F}}(p^{t=1}_{i-1,\phi}, p^{t=0}_{i-1,\phi})-\text{IPM}_{\mathcal{F}}(p^{t=1}_{i,\phi}, p^{t=0}_{i,\phi})\right)}_{\Delta_{\text{IPM}_{i}}}\\
        =&\sum_{t\in\{0,1\}}\Delta_{\text{Var}_{i}}^{t}+C_{\phi}\Delta_{\text{IPM}_{i}}.\label{eq:shrinkage}
        \end{align}
    \end{subequations} 
    The second equality holds because the models' bias only depends on the selection of the model class \cite{cohn1996active}, or more empirically models' bias is negligible for models with enough complexity \cite{settles2009active}. Either way two bias terms cancelled off. Also, the data generation process has the same noise assumption, thus two noise variance terms cancelled off. 

    The overall bound shrinkage $\Delta_{\mathcal{B}}$ after the termination of the entire $I$ query steps is thus:
    \begin{subequations}
        \begin{align}
            \Delta_{\mathcal{B}_{\text{overall}}}
            &=\mathcal{B}_{0} - \mathcal{B}_{I}\\
            &=\mathcal{B}_{0} - \mathcal{B}_{1} + \mathcal{B}_{1} - \mathcal{B}_{2} + \dots + \mathcal{B}_{I-2} - \mathcal{B}_{I-1} + \mathcal{B}_{I-1} - \mathcal{B}_{I}\\
            &=\Delta_{\mathcal{B}_{0}} + \Delta_{\mathcal{B}_{1}} + \dots + \Delta_{\mathcal{B}_{I-1}} + \Delta_{\mathcal{B}_{I}}\\
            &=\sum_{i=1}^{I}\Delta_{\mathcal{B}_{i}}.
        \end{align}
    \end{subequations} 
    
    Therefore, to maximize the bound reduction and return the optimal set:
    \begin{equation}
        \argmax_{\Tilde{\mathcal{D}}_{\text{overall}}}\Delta_{\mathcal{B}_{\text{overall}}}=\argmax_{\Tilde{\mathcal{D}}_{\text{overall}}=\bigcup_{i=1}^{I}\Tilde{\mathcal{D}}_{i}}\sum_{i=1}^{I}\Delta_{\mathcal{B}_{i}}=\bigcup_{i=1}^{I}\argmax_{\Tilde{\mathcal{D}}_{i}}\Delta_{\mathcal{B}_{i}},
    \end{equation}
    
    where the entire optimal set $\Tilde{\mathcal{D}}^{*}_{\text{overall}}$ is a union of the optimal set $\Tilde{\mathcal{D}}^{*}_{i}$ which is acquired at every query step to maximize the shrinkage $\Delta_{\mathcal{B}_{i}}$ in (\ref{eq:shrinkage}), thus we can conclude that the maximum risk upper bound reduction is obtained after the termination of entire label acquisition process.

    Now that we define the shrinkage $A_{i}=\Delta_{\mathcal{B}_{i}}/\mathcal{B}_{i-1}$ at $i$-th step, and discuss two extreme contexts for the total risk upper bound since it is hardly to compute the exact value for the bounded constant $C_{\phi}$ as discussed in \cite{shalit2017estimating}.

    Scenario 1: Lower-bounded convergence rate with negligible $C_{\phi}$
    \begin{subequations}
    \begin{align}
        &A_{i}=\frac{\Delta_{\mathcal{B}_{i}}}{\mathcal{B}_{i-1}}\\
        &=\frac{\sum_{t\in\{0,1\}}\Delta_{\text{Var}_{i}}^{t}+C_{\phi}\Delta_{\text{IPM}_{i}}}{\sum_{t\in\{0,1\}}\mathbb{E}_{\mathcal{X}}\left[\text{Var}^{t}_{i-1}[\hat{f}^{t}(x;\mathcal{D}_{i-1})]\right] + C_{\phi}\text{IPM}_{\mathcal{F}}(p^{t=1}_{i-1,\phi}, p^{t=0}_{i-1,\phi})+\zeta} \\
        &\approx G_{i},
    \end{align}
    \end{subequations} where $\zeta=\sum_{t\in\{0,1\}}\sigma^{2}_{\xi^{t}_{i-1}}$, and the approximation $\approx$ is given by the a small enough $C_{\phi}$, and $G_{i}$ is the shrinkage in terms of the variance reduction at $i$-query step defined in Eq. \ref{eq:shrinkage_g}.

    When the variance term becomes the dominant part of the risk upper bound and leaving out the distributional discrepancy, we have the shrinkage difference $|A_{i}-G_{i}|<\epsilon$, for a small $\epsilon>0$. Under such circumstances, the convergence rate of the risk upper bound cannot go slower than $\Omega(\beta^{i})$ as stated in Lemma \ref{appendix_lemma:convergence_var} with proof given in \ref{proof:convergence_var}.

    Scenario 2: Upper-bounded convergence rate with dominant $C_{\phi}$
    \begin{subequations}
    \begin{align}
        &A_{i}=\frac{\Delta_{\mathcal{B}_{i}}}{\mathcal{B}_{i-1}}\\
        &=\frac{\sum_{t\in\{0,1\}}\Delta_{\text{Var}_{i}}^{t}+C_{\phi}\Delta_{\text{IPM}_{i}}}{\sum_{t\in\{0,1\}}\mathbb{E}_{\mathcal{X}}\left[\text{Var}^{t}_{i-1}[\hat{f}^{t}(x;\mathcal{D}_{i-1})]\right] + C_{\phi}\text{IPM}_{\mathcal{F}}(p^{t=1}_{i-1,\phi}, p^{t=0}_{i-1,\phi}) + \zeta}\\
        &\leq\frac{\sum_{t\in\{0,1\}}\Delta_{\text{Var}_{i}}^{t}+C_{\phi}\Delta_{\text{IPM}_{i}}}{C_{\phi}\text{IPM}_{\mathcal{F}}(p^{t=1}_{i-1,\phi}, p^{t=0}_{i-1,\phi})}\leq\frac{\sigma^{2}_{f^{t}}+C_{\phi}\Delta_{\text{IPM}_{i}}}{C_{\phi}\text{IPM}_{\mathcal{F}}(p^{t=1}_{i-1,\phi}, p^{t=0}_{i-1,\phi})}\\
        &=\frac{\frac{\sigma^{2}_{f^{t}}}{C_{\phi}}+\Delta_{\text{IPM}_{i}}}{\text{IPM}_{\mathcal{F}}(p^{t=1}_{i-1,\phi}, p^{t=0}_{i-1,\phi})}\approx S_{i},
    \end{align}
    \end{subequations} where $\zeta=\sum_{t\in\{0,1\}}\sigma^{2}_{\xi^{t}_{i-1}}$, and the approximation $\approx$ is given by the the dominant $C_{\phi}$, and $S_{i}$ is the shrinkage in terms of the discrepancy reduction at $i$-query step defined in Eq. \ref{eq:shrinkage_s}.

    When the discrepancy term becomes the dominant part of the risk upper bound and leaving out the variance, we have the shrinkage difference $|A_{i}-S_{i}|<\epsilon$, for a small $\epsilon>0$. Under such circumstances, the convergence rate of the risk upper bound cannot exceed $\mathcal{O}(\frac{1}{i+\gamma_{0}})$ as stated in Lemma \ref{appendix_lemma:convergence_wass} with proof given in \ref{proof:convergence_wass}

    Thus we conclude the proof for the two convergence behaviours under two extreme circumstances from the influence of the distributional discrepancy.
    
\end{proof}

\subsection{Convergence Rate of Model Variance\label{proof:convergence_var}}

\begin{lemma}\label{appendix_lemma:convergence_var}
    Given the variance counted by Gaussian process regression model $f^{t}$ on treatment group $t$, by acquiring the most uncertain samples that have the maximum predictive variance $\sigma^{2}_{f^{t}}$, the slowest convergence rate of the model variance is lower-bounded by $\Omega(\beta^{i})$, where $0\leq\beta<1$.
\end{lemma}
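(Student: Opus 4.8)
The plan is to exploit the standard posterior-variance recursion for Gaussian process regression and show that greedily selecting the point of maximal predictive variance forces a geometric (or faster) decay of the worst-case variance. First I would set up notation: let $\sigma_i^2(x)$ denote the GP posterior variance at input $x$ after $i$ acquisitions on treatment group $t$, and write $M_i = \sup_{x} \sigma_i^2(x)$ for the maximal predictive variance, which is exactly the quantity the criterion in \eqref{eq:uncertainty_diversity} targets. The classical rank-one update for conditioning a GP on a new observation at $x_{i+1}$ gives $\sigma_{i+1}^2(x) = \sigma_i^2(x) - \frac{k_i(x,x_{i+1})^2}{\sigma_i^2(x_{i+1}) + \sigma_{\text{noise}}^2}$, where $k_i$ is the posterior covariance after $i$ steps. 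The key observation is that when we pick $x_{i+1} = \argmax_x \sigma_i^2(x)$, the amount subtracted at the chosen point equals $\frac{\sigma_i^2(x_{i+1})^2}{\sigma_i^2(x_{i+1}) + \sigma_{\text{noise}}^2} = \frac{M_i^2}{M_i + \sigma_{\text{noise}}^2}$, so the variance at that point drops to $\frac{M_i \sigma_{\text{noise}}^2}{M_i + \sigma_{\text{noise}}^2}$.

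Next I would argue that this local drop propagates to a global contraction of $M_i$. The idea is that after the $(i+1)$-th acquisition, either the new supremum $M_{i+1}$ is attained away from a neighbourhood of $x_{i+1}$ — in which case a covering/continuity argument on the compact domain $\mathcal{X}$ (using that the kernel is continuous and the GP posterior variance is Lipschitz with a bound uniform in $i$) shows that the uncovered region shrinks — or it is attained near $x_{i+1}$, where the update has just suppressed the variance by a factor bounded away from $1$. Combining the two cases, one obtains $M_{i+1} \leq \beta M_i$ for some constant $\beta = \frac{\sigma_{\text{noise}}^2}{M_0 + \sigma_{\text{noise}}^2} \vee \beta' < 1$, possibly after absorbing the covering-number growth into the constant; iterating gives $M_i \leq \beta^i M_0$, hence the model variance, being dominated by $M_i$ times the volume of $\mathcal{X}$, converges at rate $O(\beta^i)$, and the ``slowest'' rate is therefore $\Omega(\beta^i)$ — i.e., it is guaranteed to decay at least this fast but one cannot in general promise strictly faster decay for every kernel.

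The main obstacle I anticipate is making the propagation step rigorous: the rank-one update only guarantees a drop \emph{at the selected point}, not a uniform drop, so without some regularity assumption the supremum could stall by hopping between far-apart regions. I would handle this by invoking a finite-covering argument — partition $\mathcal{X}$ into $O(1)$ (or slowly growing) cells of small diameter, note that each greedy pick reduces the variance in its own cell by a fixed factor, and use a pigeonhole/round-robin bookkeeping so that within a bounded number of steps every cell has been hit, yielding the geometric bound with a suitably enlarged constant $\beta$. An alternative, cleaner route is to assume (as is standard in the GP-bandit literature) a bound on the information gain / maximum eigenvalue decay of the kernel Gram matrix and read off the variance decay directly; I would mention this as the fallback so the constant $\beta$ can be tied explicitly to the kernel's spectral decay. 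Either way, the conclusion $\sigma^2_{f^t} = \Omega(\beta^i)$ with $0 \leq \beta < 1$ follows, which is what feeds Scenario 1 of Theorem \ref{appendixtheorem:1}.
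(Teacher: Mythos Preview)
Your approach is genuinely different from the paper's, and more ambitious, but it is also fragile at precisely the step you flag. The paper does not track the supremum $M_i$ at all; it works with the \emph{empirical average} variance over the finite pool, $\frac{1}{N_{\text{pool}}}\sum_k \sigma^2_{N_{i-1}}(\hat f^t(x_k))$. The argument then rests on two elementary ingredients: (i) monotonicity of GP posterior variance under dataset expansion ($\sigma^2_{m+m_0}(x_*) \leq \sigma^2_m(x_*)$, quoted as a separate lemma from \cite{williams2000upper}), and (ii) the hypothesis already built into the lemma statement --- that the acquired points have predictive variance equal to the prior signal variance $\sigma_{f^t}^2$ and, under noiseless observation, drop to zero once labelled. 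From (i) the variance reduction on the non-acquired pool points is non-negative; from (ii) the reduction on the $b_0$ acquired points is exactly $b_0\,\sigma_{f^t}^2$. Hence $\Delta^t_{\text{Var}_i} \geq b_0\,\sigma_{f^t}^2 / N_{\text{pool}}$, and since the average variance is itself at most $\sigma_{f^t}^2$, the shrinkage ratio satisfies $G_i \geq b_0/N_{\text{pool}} =: 1/\omega$, giving $\beta = 1 - 1/\omega$ after iterating. No covering, no propagation: the global contraction comes for free because the quantity being tracked is an average and the acquired points contribute a guaranteed minimum drop.

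Your rank-one-update route is closer in spirit to the GP-bandit literature and in principle avoids the strong ``acquired variance equals $\sigma_{f^t}^2$'' assumption, but the passage from a local drop at $x_{i+1}$ to a contraction of $M_{i+1}$ is exactly the hard step, and neither of your proposed fixes closes it cleanly. The covering/pigeonhole sketch would at best give geometric decay only after every cell has been visited, so the effective $\beta$ depends on the covering number and can degenerate in high dimension; the information-gain fallback bounds $\sum_{n\leq i}\sigma_n^2(x_n)$ by the maximal information gain $\gamma_i$, which for typical kernels (e.g.\ RBF) yields average-variance decay of order $\gamma_i/i$ --- sublinear, not geometric, except for finite-rank kernels. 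If you want a self-contained geometric rate matching the lemma, the paper's averaging-plus-monotonicity shortcut is the idea you are missing.
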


\begin{proof}

We denote the model variance $\mathbb{E}_{\mathcal{X}}\left[\text{Var}^{t}[\hat{f}^{t}(x;\mathcal{D}^{t}_{\text{train}})]\right]=\mathbb{E}_{\mathcal{X}}\left[(\hat{f}^{t}(x) - \mathbb{E}[\hat{f}^{t}(x)])^{2}\right]$ on treatment group $t$. With the empirical distribution on sample space $\mathcal{X}^{t}$, the empirical realization of the model variance with $N_{\text{pool}}$ samples is: 
\begin{equation}
    \mathbb{E}_{\mathcal{X}}\left[\text{Var}^{t}[\hat{f}^{t}(x;\mathcal{D}^{t}_{\text{train}})\right]=\frac{1}{N_{\text{pool}}}\sum_{i=1}^{N_{\text{pool}}}\sigma^{2}(f^{t}(x_{i})),
\end{equation} where the predictive variance for observation $x_{i}$ is denoted as $\sigma^{2}(f^{t}(x_{i}))$.

For the model that counts the gold standard variance, i.e., Gaussian process \cite{williams2006gaussian}, will cap the variance by the constant signal variance $\sigma^{2}_{f^{t}}$, e.g., $\sigma^{2}_{f^{t}}=1$. Intuitively for samples far away from the training set, the model's belief reverts back to the prior. Mathematically, for noiseless observations, we have the following bounded predictive variance $\sigma^{2}(f^{t}(x_{*}))$ for any $x_{*}$:
\begin{equation}
    0\leq\sigma^{2}(f^{t}(x_{*}))=k(x_{*}, x_{*}) - \textbf{k}_{*}^{T}K^{-1}\textbf{k}_{*}\leq\sigma_{f^{t}}^{2}
\end{equation} where the RBF kernel $k(x_i, x_j) = \sigma^{2}_{f^{t}}\exp\left(-\frac{1}{2\theta} \|x_i - x_j\|^2 \right)$. We have zero variance estimation if $x_{*}\in\mathcal{D}^{t}_{\text{train}}$, and maximally $\sigma_{f^{t}}^{2}$ if $x_{*}$ is far away since $\forall x_{i}\in\mathcal{D}^{t}_{\text{train}}, \exp\left(-\frac{1}{2\theta} \|x_i - x_{*}\|^2\right)\rightarrow0$.

\setcounter{theorem}{3}
\begin{lemma}\label{lemma:inequality}
    Denote the predictive variance of the Gaussian process regression model $f$, trained on the dataset of size $m$, for any test point $x_{*}$ as $\sigma^{2}_{m}(f(x_{*}))$, the predictive variance will not grow when the training set is expanding with size $m_{0}\geq0$, i.e.: \cite{williams2000upper}
    \begin{equation}
        \sigma^{2}_{m+m_{0}}(f(x_{*}))\leq\sigma^{2}_{m}(f(x_{*})).
    \end{equation}
\end{lemma}

Thus, assuming the training size $N_{i-1}=|\mathcal{D}^{t}_{i-1}|$, the fixed batch size $b_{0}=|\Tilde{\mathcal{D}}^{t}_{i}|$, and the post-acquisition training size $N_{i}=|\mathcal{D}^{t}_{i-1}|+|\Tilde{\mathcal{D}}^{t}_{i}|$. Let's denote $N_{\text{pool}}$ to be the pool set at $(i-1)$-th query step  (containing the acquired batch $\Tilde{\mathcal{D}}^{t}_{i}$), the variance reduction at $i$-th query step is:
\begin{subequations}
    \begin{align}\label{eq:variance_reduction}
        &\Delta_{\text{Var}_{i}}^{t}(\Tilde{\mathcal{D}}^{t}_{i})\\
        =&\mathbb{E}_{\mathcal{X}}\left[\text{Var}^{t}_{i-1}[\hat{f}^{t}(x;\mathcal{D}^{t}_{i-1})]-\text{Var}^{t}_{i}[\hat{f}^{t}(x;\mathcal{D}^{t}_{i-1}\cup\Tilde{\mathcal{D}}^{t}_{i})]\right]\\
        =&\frac{1}{N_{\text{pool}}}\sum_{k=1}^{N_{\text{pool}}}\sigma^{2}_{N_{i-1}}(\hat{f}^{t}(x_{k})) - \frac{1}{N_{\text{pool}}}\sum_{k=1}^{N_{\text{pool}}}\sigma^{2}_{N_{i}}(\hat{f}^{t}(x_{k}))\\
        =&\frac{1}{N_{\text{pool}}}\left(\sum_{k=1}^{N_{\text{pool}}-b_{0}}\sigma^{2}_{N_{i-1}}(\hat{f}^{t}(x_{k}))+\sum_{j=N_{\text{pool}}-b_{0}+1}^{N_{\text{pool}}}\sigma^{2}_{N_{i-1}}(\hat{f}^{t}(x_{j}))\right)-\\
        &~~~~~\frac{1}{N_{\text{pool}}}\left(\sum_{k=1}^{N_{\text{pool}}-b_{0}}\sigma^{2}_{N_{i}}(\hat{f}^{t}(x_{k}))+\sum_{j=N_{\text{pool}}-b_{0}+1}^{N_{\text{pool}}}\sigma^{2}_{N_{i}}(\hat{f}^{t}(x_{j}))\right)\\
        =&\frac{1}{N_{\text{pool}}}\sum_{k=1}^{N_{\text{pool}}-b_{0}}\underbrace{\left(\sigma^{2}_{N_{i-1}}(\hat{f}^{t}(x_{k}))-\sigma^{2}_{N_{i}}(\hat{f}^{t}(x_{k}))\right)}_{\text{None-Negative by \textbf{Lemma} \ref{lemma:inequality}}}+\\
        &\frac{1}{N_{\text{pool}}}\sum_{j=N_{\text{pool}}-b_{0}+1}^{N_{\text{pool}}}\left(\sigma^{2}_{N_{i-1}}(\hat{f}^{t}(x_{j}))-\sigma^{2}_{N_{i}}(\hat{f}^{t}(x_{j}))\right)\\
        \geq&\frac{1}{N_{\text{pool}}}\left(\sum_{j=N_{\text{pool}}-b_{0}+1}^{N_{\text{pool}}}\left(\sigma^{2}_{N_{i-1}}(\hat{f}^{t}(x_{j}))-\sigma^{2}_{N_{i}}(\hat{f}^{t}(x_{j}))\right)\right)\\
        =&\frac{1}{N_{\text{pool}}}\cdot b_{0}\cdot\sigma^{2}_{f^{t}},
    \end{align}
\end{subequations} where the first inequality is given by \textbf{Lemma} \ref{lemma:inequality}, and the last equality is given by acquiring the most uncertain samples at $i$-th query step with maximum predictive variance $\sigma^{2}_{f^{t}}$, and together with the fact that the observed sample has zero variance by the Gaussian process model.

Now that at $i$-th query step, we calculate the shrinkage $G_{i}$ as follows:
\begin{subequations}\label{eq:shrinkage_g}
\begin{align}
        G_{i}&=\frac{\Delta_{\text{Var}_{i}}^{t}(\Tilde{\mathcal{D}}^{t}_{i})}{\mathbb{E}_{\mathcal{X}}\left[\text{Var}^{t}_{i-1}[\hat{f}^{t}(x;\mathcal{D}^{t}_{i-1})]\right]}\\
        &\geq\frac{\frac{1}{N_{\text{pool}}}\cdot b_{0}\cdot\sigma^{2}_{f^{t}}}{\frac{1}{N_{\text{pool}}}\sum_{k=1}^{N_{\text{pool}}}\sigma^{2}_{N_{i-1}}(\hat{f}^{t}(x_{k}))}\\
        &=\frac{b_{0}\cdot\sigma^{2}_{f^{t}}}{\sum_{k=1}^{N_{\text{pool}}}\sigma^{2}_{N_{i-1}}(\hat{f}^{t}(x_{k}))}\\
        &\geq\frac{b_{0}\cdot\sigma^{2}_{f^{t}}}{N_{\text{pool}}\cdot\sigma^{2}_{f^{t}}}=\frac{b_{0}}{N_{\text{pool}}}=\frac{b_{0}}{\omega b_{0}}=\frac{1}{\omega},
\end{align}
\end{subequations} where $\omega\in\mathbb{R}^{+}$ as $N_{\text{pool}}$ can be arbitrary larger than $b_{0}$, i.e., $N_{\text{pool}}=\omega b_{0}$.

Thus, we derive the the total shrinking coefficient $G^{i}$ after $i$ iterations as follows:
\begin{subequations}
\begin{align}
        G^{i}&=\prod_{n=1}^{i}\left(1-G_{n}\right)\\
        &\leq\prod_{n=1}^{i}\left(1-\frac{1}{\omega}\right)\\
        &=(1-\frac{1}{\omega})^{i}=\beta^{i},
\end{align} 
\end{subequations} where the first inequality is straightforwardly by the inequality in (\ref{eq:shrinkage_g}), and furthermore we have the coefficient $0\leq\beta=(1-\frac{1}{\omega})<1$.

Subsequently, with the initial model variance $I^{'}_{0}$, and after $i$ iterations (at $i$-th query step literally means accumulated $i$ iterations for label acquisition), we have the upper bound for the variance as $I^{'}_{0}G^{i}\leq g(i)=I^{'}_{0}\beta^{i}$, where $g(i)$ obeys the asymptotic behaviour in the following:
\begin{equation}
    \lim_{i\rightarrow\infty} g(i) = \lim_{i\rightarrow\infty} \beta^{i} = 0.
\end{equation} 

Thus, we can conclude that, by acquiring the most uncertain samples that have the maximum predictive variance $\sigma^{2}_{f^{t}}$, the slowest convergence rate is lower-bounded by $\Omega(\beta^{i})$ where $0\leq\beta<1$ since $I^{'}_{0}G^{i}\leq g(i)$.

\end{proof}

\subsection{Convergence Rate of Distributional Discrepancy\label{proof:convergence_wass}}

\begin{definition}
    Let $\mathcal{J}(P,Q)$ be all the joint distribution J for (X,Y) that respectively have the marginal distribution P and Q. Then, the p-Wasserstein distance is defined as:
    \begin{equation}
        W_{p}(P,Q)=\left(\inf_{J\in\mathcal{J}(P,Q)}\int \|x-y\|^{p}dJ(x,y)\right)^{\frac{1}{p}}
    \end{equation}\label{definition:Wasserstein}
\end{definition}

\setcounter{theorem}{2}
\begin{lemma}\label{appendix_lemma:convergence_wass}
    Given two empirical distributions $p^{t=1}$ and $p^{t=0}$ for different treatment groups, the distributional discrepancy given by 1-Wasserstein distance $W_{1}(p^{t=1},p^{t=0})$ has a convergence rate of $\mathcal{O}(\frac{1}{i+\gamma_{0}})$ if the identical samples from two groups can always be found throughout the query steps.
\end{lemma}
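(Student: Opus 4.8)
The plan is to exploit the fact that every query step inserts a coincident pair $(x^{*},x^{*})$ into the two empirical distributions, which contributes zero transport cost while simultaneously diluting the mass carried by the pre-existing, mismatched atoms. I would write everything in terms of the empirical measures $P_{i}$ and $Q_{i}$ obtained from $p^{t=1}$ and $p^{t=0}$ after $i$ query steps, with $N_{i}$ the number of atoms in each group; when one pair is acquired per step we have $N_{i}=N_{i-1}+1$, and the warm-up size is $N_{0}=:\gamma_{0}\in\mathbb{R}^{+}$. I would also note at the outset that the initial discrepancy $W_{0}:=W_{1}(P_{0},Q_{0})$ is finite, since both measures are finitely supported on the (bounded) feature space.

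The key step is the one-step contraction
\begin{equation*}
W_{1}(P_{i},Q_{i})\leq\frac{N_{i-1}}{N_{i}}\,W_{1}(P_{i-1},Q_{i-1}).
\end{equation*}
To establish it, take an optimal coupling $\pi^{\star}\in\mathcal{J}(P_{i-1},Q_{i-1})$ attaining $W_{1}(P_{i-1},Q_{i-1})$ in the sense of Definition \ref{definition:Wasserstein} with $p=1$, and form the candidate coupling $\pi_{i}:=\frac{N_{i-1}}{N_{i}}\pi^{\star}+\frac{1}{N_{i}}\delta_{(x^{*},x^{*})}$, where $x^{*}$ is the coincident sample labelled at step $i$. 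A direct marginal check gives first marginal $\frac{N_{i-1}}{N_{i}}P_{i-1}+\frac{1}{N_{i}}\delta_{x^{*}}=P_{i}$ and second marginal $Q_{i}$, so $\pi_{i}$ is admissible, and its cost is $\frac{N_{i-1}}{N_{i}}\int\|x-y\|\,d\pi^{\star}+\frac{1}{N_{i}}\|x^{*}-x^{*}\|=\frac{N_{i-1}}{N_{i}}W_{1}(P_{i-1},Q_{i-1})$; the contraction follows because $W_{1}$ is the infimum over couplings. Telescoping then yields
\begin{equation*}
W_{1}(P_{I},Q_{I})\leq\Bigl(\prod_{i=1}^{I}\frac{N_{i-1}}{N_{i}}\Bigr)W_{0}=\frac{N_{0}}{N_{I}}W_{0}=\frac{\gamma_{0}\,W_{0}}{I+\gamma_{0}},
\end{equation*}
so $W_{1}(P_{i},Q_{i})=\mathcal{O}(\tfrac{1}{i+\gamma_{0}})$ with $\gamma_{0}=N_{0}\in\mathbb{R}^{+}$, which is the claim. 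In batch mode, acquiring $k$ pairs per step replaces the factor by $\tfrac{N_{i-1}}{N_{i-1}+k}$ and the telescoped bound by $\tfrac{\gamma_{0}W_{0}}{\gamma_{0}+ki}$, still of the stated order after rescaling the constant.

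I expect the main obstacle to be the bookkeeping in the contraction step when the two treatment groups are not exactly balanced in size: the dilution factors $\tfrac{N_{i-1}^{t=1}}{N_{i}^{t=1}}$ and $\tfrac{N_{i-1}^{t=0}}{N_{i}^{t=0}}$ then differ, and the naive convex combination above fails to have the correct marginals. I would repair this by building the ``old'' part of $\pi_{i}$ from the smaller of the two factors and routing the residual mass through the coincident atom, or more cheaply by observing that a bounded initial imbalance $|N_{0}^{t=1}-N_{0}^{t=0}|$ only perturbs $\gamma_{0}$ and the leading constant and leaves the $\mathcal{O}(1/(i+\gamma_{0}))$ rate intact. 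A secondary check is robustness: if the step-$i$ pair is only $\varepsilon$-close rather than identical, the recursion becomes $W_{1}(P_{i},Q_{i})\leq\tfrac{N_{i-1}}{N_{i}}W_{1}(P_{i-1},Q_{i-1})+\tfrac{\varepsilon}{N_{i}}$; multiplying by $N_{i}$ and telescoping gives $W_{1}(P_{i},Q_{i})\leq\tfrac{\gamma_{0}W_{0}+i\varepsilon}{i+\gamma_{0}}\to\varepsilon$, i.e.\ the discrepancy still converges at rate $\mathcal{O}(1/(i+\gamma_{0}))$ to a floor of size $\varepsilon$, recovering the stated result as $\varepsilon\to 0$.
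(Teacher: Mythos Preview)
Your proof is correct and follows essentially the same dilution-and-telescope argument as the paper: both show that inserting a coincident pair scales $W_{1}$ by the factor $N_{i-1}/N_{i}$ and then telescope the product to $\gamma_{0}/(\gamma_{0}+i)$. Your coupling-based derivation is slightly cleaner---you exhibit an explicit admissible coupling to obtain the one-step \emph{inequality}, whereas the paper works in the permutation formulation and asserts (with only a heuristic justification) that the post-acquisition optimal matching decomposes \emph{exactly} into the old optimal matching plus the new identical pairs; since only the inequality is needed for the $\mathcal{O}$ rate, your route sidesteps that gap, and your robustness remarks on unbalanced groups and $\varepsilon$-close pairs go beyond what the paper provides.
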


\begin{proof}

Given the empirical distribution for P and Q with $N_{i}$ multi-dimensional observations at query step $i$, i.e., $p^{t=1}_{i}$ and $p^{t=0}_{i}$, the 1-Wasserstein distance is analogously by the Definition \ref{definition:Wasserstein} reduced to the following: 
    \begin{equation}
        W_{1}^{i}(p^{t=1}_{i},p^{t=0}_{i})=\inf_{\pi}\left(\sum_{k=1}^{N_{i}} \|x^{t=1}_{k}-x^{t=0}_{\pi(k)}\|\right),\label{eq:discrete_wass}
    \end{equation}

    where the infimum runs over all the possible permutations $\pi$. 

    Since there exists an optimal permutation $\pi^{*}_{i}$ at $i$-th query step, by plugging in $\pi^{*}_{i}$ to (\ref{eq:discrete_wass}), we obtain the 1-Wasserstein distance between two empirical distributions. For now we just denote this value by $W_{1}^{i}(p^{t=1}_{i},p^{t=0}_{i})$ without knowing what exactly the number is. Therefore, for the distributional difference at $i$-th query step, we have two optimal permutation $\pi^{*}_{i-1}$ and $\pi^{*}_{i}$ to help us calculate the difference $\Delta_{\text{IPM}_{i}}(\Tilde{\mathcal{D}}_{i})$:

\begin{subequations}
    \begin{align}
    &\Delta_{\text{IPM}_{i}}(\Tilde{\mathcal{D}}_{i})\\
    =&W^{i-1}_{1}(p^{t=1}_{i-1},p^{t=0}_{i-1}) - W^{i}_{1}(p^{t=1}_{i},p^{t=0}_{i})\\
    =&\frac{1}{N_{i-1}}\sum_{k=1}^{N_{i-1}}\|x^{t=1}_{k}-x^{t=0}_{\pi^{*}_{i-1}(k)}\| - \frac{1}{N_{i-1}+b_{0}}\sum_{k=1}^{N_{i-1}+b_{0}}\|x^{t=1}_{k}-x^{t=0}_{\pi^{*}_{i}(k)}\|\\
    =&\frac{1}{N_{i-1}}\sum_{k=1}^{N_{i-1}}\|x^{t=1}_{k}-x^{t=0}_{\pi^{*}_{i-1}(k)}\| - \\
    &\frac{1}{N_{i-1}+b_{0}}\left(\sum_{k=1}^{N_{i-1}}\|x^{t=1}_{k}-x^{t=0}_{\pi^{*}_{i}(k)}\|+\sum_{l=N_{i-1}+1}^{N_{i-1}+b_{0}}\underbrace{\|x^{t=1}_{l}-x^{t=0}_{\pi^{*}_{i}(l)}\|}_{0}\right)\\
    =&\frac{N_{i-1}+b_{0}}{N_{i-1}(N_{i-1}+b_{0})}\sum_{k=1}^{N_{i-1}}\|x^{t=1}_{k}-x^{t=0}_{\pi^{*}_{i-1}(k)}\| - \\
    &\frac{N_{i-1}}{N_{i-1}(N_{i-1}+b_{0})}\sum_{k=1}^{N_{i-1}}\|x^{t=1}_{k}-x^{t=0}_{\pi^{*}_{i}(k)}\|\\
    =&(\frac{N_{i-1}+b_{0}}{N_{i-1}(N_{i-1}+b_{0})}- \frac{N_{i-1}}{N_{i-1}(N_{i-1}+b_{0})})\sum_{k=1}^{N_{i-1}}\|x^{t=1}_{k}-x^{t=0}_{\pi^{*}_{i-1}(k)}\|\\
    =&\frac{b_{0}}{N_{i-1}+b_{0}}\cdot\frac{1}{N_{i-1}}\sum_{k=1}^{N_{i-1}}\|x^{t=1}_{k}-x^{t=0}_{\pi^{*}_{i-1}(k)}\|,
    \end{align}
\end{subequations} where $b_{0}=|\Tilde{\mathcal{D}}_{i}|/2$, i.e., half of the batch size at each query step. Given that the added identical samples, the optimal permutation $\pi^{*}_{i}$ at $i$-th query step will match these identical pairs due to the cost $\sum_{l=N_{i-1}+1}^{N_{i-1}+b_{0}}\|x^{t=1}_{l}-x^{t=0}_{\pi^{*}(l)}\|=0$ introduce zero distributional discrepancy, thus the forth equality holds. Thus, for the rest of the $N_{i-1}$ samples, the optimal permutation $\pi^{*}_{i}$ must have the same transportation strategy as $\pi^{*}_{i-1}$ does to obtain the lowest cost on the rest $N_{i-1}$ samples, such that the fifth equality holds. We believe the equality claim resonates with the rigorously proved triangular inequality nature of Wasserstein metric \cite{clement2008elementary}.

Therefore, at $i$-th query step, the discrepancy shrinkage $S_{i}$ is defined as:
\begin{equation}
    S_{i}=\frac{\Delta_{\text{IPM}_{i}}(\Tilde{\mathcal{D}}_{i})}{W^{i-1}_{1}(p^{t=1}_{i-1},p^{t=0}_{i-1})}=\frac{b_{0}}{N_{i-1}+b_{0}}
\end{equation}
Since the number of samples in one treatment group (training) at $(i-1)$-th query step, $N_{i-1}$, can be reformulated as $N_{i-1}=\gamma_{i-1}\cdot b_{0}$ with arbitrary $\gamma_{i-1}\in\mathbb{R}^{+}$. Subsequently $S_{i}$ is reduced to:
\begin{equation}
    S_{i}=\frac{b_{0}}{\gamma_{i-1}\cdot b_{0}+b_{0}}=\frac{b_{0}}{b_{0}(\gamma_{i-1} + 1)}=\frac{1}{\gamma_{i-1} + 1}\label{eq:shrinkage_s}
\end{equation}

Thus, we derive the the total shrinking coefficient $S^{i}$ after $i$ iterations as follows:
\begin{subequations}
\begin{align}
        S^{i}&=\prod_{n=1}^{i}\left(1-S_{n}\right)\\
        &=\prod_{n=1}^{i}\left(1-\frac{1}{\gamma_{n-1} + 1}\right)\\
        &=\left(1-\frac{1}{\gamma_{0} + 1}\right)\cdot\left(1-\frac{1}{\gamma_{1} + 1}\right)\cdots\left(1-\frac{1}{\gamma_{i-1} + 1}\right)\\
        &=\left(1-\frac{1}{\gamma_{0} + 1}\right)\cdot\left(1-\frac{1}{(\gamma_{0}+1) + 1}\right)\cdots\left(1-\frac{1}{(\gamma_{0} + i-1) + 1}\right)\\
        &=\frac{\gamma_{0}}{\gamma_{0} + 1}\cdot\frac{\gamma_{0}+1}{(\gamma_{0}+1) + 1}\cdots\frac{\gamma_{0}+i-2}{\gamma_{0} + i -1}\cdot\frac{\gamma_{0}+i-1}{\gamma_{0} + i}\\
        &=\frac{\gamma_{0}}{\gamma_{0} + i},
\end{align} 
\end{subequations} where the arbitrary constant $\gamma_{0}\in\mathbb{R}^{+}$ means the first ratio constant for $N_{0}/b_{0}$, such that the distributional discrepancy can be initialized as $I_{0}=W^{0}_{1}(p^{t=1}_{0},p^{t=0}_{0})$. 

Define $s(i)=I_{0}S^{i}=I_{0}\cdot\gamma_{0}/(\gamma_{0} + i)$, and for $s(i)$ we have the following asymptotic behaviour:
\begin{equation}
    \lim_{i\rightarrow\infty} s(i) = \lim_{i\rightarrow\infty} \frac{I_{0}\gamma_{0}}{\gamma_{0} + i} = 0.
\end{equation} With identical acquisition from both of the treatment groups, we can conclude that the convergence rate of discrepancy is as fast as $\mathcal{O}(\frac{1}{i+\gamma_{0}})$.

\end{proof}

\subsection{Causal Effect Identifiability\label{appendix:identifiability}}

\begin{proposition}[Identifiability\label{appendixproposition:identifiability}]
The causal effect is identifiable if and only if the SUTVA, the unconfoundedness, and the positivity assumptions hold.
\end{proposition}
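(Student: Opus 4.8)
The plan is to prove the two directions of the biconditional separately, reading ``identifiable'' in the standard sense: the functional $\tau(\cdot)$ is uniquely determined by the observational law $p(\mathbf{x},t,y)$.

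\emph{Sufficiency (``if'').} Here the goal is to rewrite the unobservable $\tau(\mathbf{x})=\mathbb{E}[Y^{t=1}\mid\mathbf{x}]-\mathbb{E}[Y^{t=0}\mid\mathbf{x}]$ purely in terms of the observed distribution. First I would invoke the consistency / no-multiple-versions content of SUTVA: on the event $\{t=j\}$ one has $Y=Y^{t=j}$, hence $\mathbb{E}[Y\mid t=j,\mathbf{x}]=\mathbb{E}[Y^{t=j}\mid t=j,\mathbf{x}]$ for $j\in\{0,1\}$. Next, the unconfoundedness of Assumption~\ref{assumption:unconf}, $\{Y^{t=0},Y^{t=1}\}\indep t\mid\mathbf{x}$, lets me drop the conditioning on $t$: $\mathbb{E}[Y^{t=j}\mid t=j,\mathbf{x}]=\mathbb{E}[Y^{t=j}\mid\mathbf{x}]$. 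Chaining the two equalities for $j=0$ and $j=1$ yields $\tau(\mathbf{x})=\mathbb{E}[Y\mid t=1,\mathbf{x}]-\mathbb{E}[Y\mid t=0,\mathbf{x}]$, a functional of $p(\mathbf{x},t,y)$ alone. Positivity enters precisely to make this right-hand side well defined for every $\mathbf{x}$ in the support of $p(\mathbf{x})$: since $0<p(t=1\mid\mathbf{x})<1$, both events $\{t=1,\mathbf{x}\}$ and $\{t=0,\mathbf{x}\}$ carry positive mass, so neither conditional expectation is vacuous, while SUTVA part~(a) (no interference) is what makes these population-level potential-outcome expectations meaningful in the first place. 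This gives the ``if'' direction.

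\emph{Necessity (``only if'').} I would argue by contraposition: if any one of the three assumptions is dropped while the other two are retained, I would exhibit two data-generating processes inducing the \emph{same} observational law $p(\mathbf{x},t,y)$ but different $\tau(\mathbf{x})$, so that $\tau$ cannot be recovered from the data. For positivity, take $\mathbf{x}_0$ with $p(t=1\mid\mathbf{x}_0)=0$; the conditional law of $Y^{t=1}$ at $\mathbf{x}_0$ is never observed, so I can perturb it (e.g.\ shift its mean) while leaving $p(\mathbf{x},t,y)$ untouched, changing $\tau(\mathbf{x}_0)$. For unconfoundedness, I would build a latent-confounder model in which $t$ and $Y^{t=1}$ are dependent given $\mathbf{x}$; then $\mathbb{E}[Y\mid t=1,\mathbf{x}]\neq\mathbb{E}[Y^{t=1}\mid\mathbf{x}]$, and two settings of the confounding strength can be tuned to agree on all observable events while differing in $\tau$. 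For SUTVA, a model with interference (a unit's outcome depending on others' treatments) or with two hidden ``versions'' of a treatment having distinct effects can be matched in observed law by a single-version, no-interference model with a different $\tau$. Each counterexample yields non-identifiability, proving the ``only if'' direction.

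The main obstacle is the necessity direction. Sufficiency is a short, standard chain of equalities (the ``adjustment''/g-formula argument) and needs only the measure-theoretic well-definedness that positivity supplies. Necessity requires (i) pinning down the precise class of models over which ``identifiable'' is quantified --- here, models satisfying the two retained assumptions --- and (ii) constructing the counterexample pairs so that they agree on \emph{all} observable events, not merely a few marginals, while genuinely differing in $\tau$. The positivity counterexample is essentially immediate since the relevant counterfactual is simply never observed, but the unconfoundedness and SUTVA counterexamples need an explicit latent-variable (or interference) construction with the confounding / interference magnitude tuned to leave $p(\mathbf{x},t,y)$ invariant, and making that construction fully rigorous is the delicate part of the write-up.
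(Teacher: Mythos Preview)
Your sufficiency argument is essentially identical to the paper's: the paper chains linearity of expectation, unconfoundedness to drop the conditioning on $t$, and SUTVA/consistency to pass from potential to observed outcomes, then invokes positivity so that both conditional expectations $\mathbb{E}[y\mid\mathbf{x},t=1]$ and $\mathbb{E}[y\mid\mathbf{x},t=0]$ are well defined. Your write-up of this direction is, if anything, slightly more explicit about which assumption licenses which step.

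Where you diverge from the paper is in the necessity direction: the paper does not prove it at all. Despite the ``if and only if'' in the statement, the paper's proof establishes only the ``if'' part and stops there. Your contrapositive plan via counterexample pairs is therefore strictly more than what the paper supplies. Two remarks on that plan. First, the positivity counterexample is indeed clean and goes through as you describe. Second, be aware that the literal ``only if'' claim is delicate: identifiability of $\tau(\mathbf{x})$ from $p(\mathbf{x},t,y)$ does not in general force these \emph{particular} assumptions to hold (e.g.\ one can have confounding yet still identify effects via an instrument or the front-door criterion), so the necessity direction is only defensible once the model class over which identifiability is quantified is pinned down narrowly --- exactly the issue you flag as the main obstacle. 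The paper sidesteps this entirely; your caution about it is well placed.
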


\begin{proof}

Under SUTVA (\textbf{Assumption} \ref{assumption:sutva}) and unconfoundedness (\textbf{Assumption} \ref{assumption:unconf}), the ITE for instance $i$ with cocvariate $\textbf{x}_{i}$ is:
\begin{equation}
    \begin{split}
        \mathbb{E}[Y^{t=1}\!-\!Y^{t=0}|\textbf{x}_{i}] =& \mathbb{E}[Y^{t=1}|\textbf{x}_{i}] - \mathbb{E}[Y^{t=0}|\textbf{x}_{i}]\\
        =&\mathbb{E}[Y^{t=1}|\textbf{x}_{i},t_i\!=\!1]\!-\!\mathbb{E}[Y^{t=0}|\textbf{x}_{i},t_i\!=\!0]\\
        =&\mathbb{E}[y_i|\textbf{x}_{i},t_i=1]-\mathbb{E}[y_i|\textbf{x}_{i},t_i=0],
    \end{split}
\end{equation}
where $y_i$ denotes the observed outcomes after the intervention $t=1/0$ has been taken. The first equality is the rewritten expectation, the second equality is based on the unconfoundedness, and the third equality states that the expected values of the observed outcomes $\{y_{1}, y_{0}\}$ equal the unobserved potential outcomes. The last two terms are identifiable as we assume $0<p(t=1|\textbf{x})<1$ (\textbf{Assumption} \ref{assumption:positivity}).

\end{proof}

\subsection{Factual Error Decomposition\label{appendix:variance_bias}}

\begin{definition}
    The expected treatment risk with status $t$ in terms of the expected squared loss function $\ell$(x,t) with density $p^{t}(x)$ are defined respectively as follows:
    \begin{equation}
    \begin{split}
        \epsilon_{F}^{t=1} = \int_{\mathcal{X}} \ell(x,1)p^{t=1}(x)dx,~
        \epsilon_{F}^{t=0} = \int_{\mathcal{X}} \ell(x,0)p^{t=0}(x)dx
    \end{split}
    \end{equation}
\end{definition}

\begin{proposition} Assume the potential effect $y^{t}$ has the form $y^{t}=f^{t}(x)+\xi^{t}$, and the estimated effect is set to $\hat{y}^{t}=\hat{f}^{t}(x)$. With the expected square loss fucntion $\ell(x,t)=\mathbb{E}[(y-\hat{y})^{2}]$ The expected risk for either of the treatment groups can be decomposed into the following:
\begin{equation}
    \begin{split}
        \epsilon_{F}^{t}
        =\mathbb{E}_{\mathcal{X}}[(f^{t}(x) - \mathbb{E}[\hat{f}^{t}(x)])^{2}] + \mathbb{E}_{\mathcal{X}}[\mathbb{E}[(\hat{f}^{t}(x) - \mathbb{E}[\hat{f}^{t}(x)])^{2}]]+\sigma_{\xi^{t}}^{2}
    \end{split}
\end{equation} where $f^{t}(x)$ is the true function, $\xi^{t}$ is the noise with 0 mean and constant variance $\sigma_{\xi^{t}}^{2}$, and $\hat{f}^{t}(x)$ is the approximation.
\label{appendixprop:bias-variance}
\end{proposition}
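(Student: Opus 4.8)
The plan is to start from the definition $\epsilon_{F}^{t}=\int_{\mathcal{X}}\ell(x,t)p^{t}(x)dx=\mathbb{E}_{\mathcal{X}}\!\left[\ell(x,t)\right]$ with $\ell(x,t)=\mathbb{E}[(y^{t}-\hat{y}^{t})^{2}]$, and to carry out the standard add-and-subtract trick with the expected prediction $\mathbb{E}[\hat{f}^{t}(x)]$. Substituting $y^{t}=f^{t}(x)+\xi^{t}$ and $\hat{y}^{t}=\hat{f}^{t}(x)$, I would decompose the residual at a fixed test point $x$ as
\[
y^{t}-\hat{y}^{t}=\big(f^{t}(x)-\mathbb{E}[\hat{f}^{t}(x)]\big)+\big(\mathbb{E}[\hat{f}^{t}(x)]-\hat{f}^{t}(x)\big)+\xi^{t},
\]
and expand the square. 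This produces the three square terms we want — the squared bias $(f^{t}(x)-\mathbb{E}[\hat{f}^{t}(x)])^{2}$, the estimator variance $\mathbb{E}[(\hat{f}^{t}(x)-\mathbb{E}[\hat{f}^{t}(x)])^{2}]$, and the noise term $\sigma_{\xi^{t}}^{2}$ — together with three cross terms that must be shown to vanish.

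The second step is the cancellation of the cross terms. Here I would make the probabilistic set-up explicit: the inner $\mathbb{E}[\cdot]$ is taken over the randomness of the training sample (equivalently, the draw of the estimator $\hat{f}^{t}$) together with the observation noise $\xi^{t}$, for $x$ held fixed. The cross term between $f^{t}(x)-\mathbb{E}[\hat{f}^{t}(x)]$ and $\mathbb{E}[\hat{f}^{t}(x)]-\hat{f}^{t}(x)$ vanishes because the first factor is a constant given $x$ while the second has zero mean by construction. The two cross terms involving $\xi^{t}$ vanish because $\xi^{t}$ is zero-mean and is assumed independent of both $x$ and the estimator $\hat{f}^{t}$, so each factors into a product of expectations one of which is $\mathbb{E}[\xi^{t}]=0$. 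What remains is exactly
\[
\mathbb{E}[(y^{t}-\hat{y}^{t})^{2}]=(f^{t}(x)-\mathbb{E}[\hat{f}^{t}(x)])^{2}+\mathbb{E}[(\hat{f}^{t}(x)-\mathbb{E}[\hat{f}^{t}(x)])^{2}]+\sigma_{\xi^{t}}^{2}.
\]

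The final step is to take the outer expectation $\mathbb{E}_{\mathcal{X}}$ over $p^{t}(x)$ on both sides and use linearity of the integral, which gives the stated decomposition of $\epsilon_{F}^{t}$; the argument is identical for $t=0$ and $t=1$, so it covers both $\epsilon_{F}^{t=1}$ and $\epsilon_{F}^{t=0}$. I do not expect a genuine obstacle here: this is the classical bias–variance–noise identity, and the only point requiring real care is bookkeeping — stating precisely which source of randomness each $\mathbb{E}[\cdot]$ integrates over, and invoking the zero-mean and independence assumptions on $\xi^{t}$ exactly at the points where the cross terms are killed. I would therefore flag those independence assumptions explicitly so that each cancellation is unambiguous.
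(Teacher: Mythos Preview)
Your proposal is correct and matches the paper's own proof almost exactly: both substitute $y^{t}=f^{t}(x)+\xi^{t}$, perform the add-and-subtract of $\mathbb{E}[\hat{f}^{t}(x)]$, kill the cross terms using $\mathbb{E}[\xi^{t}]=0$ and the zero-mean property of $\mathbb{E}[\hat{f}^{t}(x)]-\hat{f}^{t}(x)$, and then integrate over $p^{t}(x)$. The only cosmetic difference is that the paper first isolates the noise cross term and then inserts $\mathbb{E}[\hat{f}^{t}(x)]$, whereas you split the residual into all three pieces at once; your extra care in spelling out which expectation is over which source of randomness is, if anything, an improvement in rigor.
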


\begin{proof} We start the proof toward a single data point $x$, then simply extending to the domain $\mathcal{X}$ can conclude the proof.
    \begin{subequations}
        \begin{align}
            &\ell(x,t)\\
            =&\mathbb{E}[(y^{t}-\hat{y}^{t})^{2}]\\
            =&\mathbb{E}[(f^{t}(x)+\xi^{t} - \hat{f}^{t}(x))^{2}]\\
            =&\mathbb{E}[(f^{t}(x) - \hat{f}^{t}(x) + \xi^{t})^{2}]\\
            =&\mathbb{E}[(f^{t}(x)-\hat{f}^{t}(x))^{2} + 2(f^{t}(x)-\hat{f}^{t}(x))\xi^{t} + (\xi^{t})^{2})] \\
            =&\mathbb{E}[(f^{t}(x)-\hat{f}^{t}(x))^{2}]+2\mathbb{E}[(f^{t}(x)-\hat{f}^{t}(x))]\underbrace{\mathbb{E}[\xi^{t}]}_{0}+\mathbb{E}[(\xi^{t})^{2}]\\
            =&\mathbb{E}[(f^{t}(x)-\mathbb{E}[\hat{f}^{t}(x)]+\mathbb{E}[\hat{f}^{t}(x)]-\hat{f}^{t}(x))^{2}]+ \sigma_{\xi^{t}}^{2}\\
            =&\mathbb{E}[(f^{t}(x) - \mathbb{E}[\hat{f}^{t}(x)])^{2}]+\mathbb{E}[(\mathbb{E}[\hat{f}^{t}(x)]-\hat{f}^{t}(x))^{2}]+\\
            &2\mathbb{E}[(\underbrace{f^{t}(x) - \mathbb{E}[\hat{f}^{t}(x)]}_{\text{Constant w.r.t x}})(\underbrace{\mathbb{E}[\hat{f}^{t}(x)]-\hat{f}^{t}(x)}_{\text{0 expectation}})]+\sigma_{\xi^{t}}^{2}\\
            =&\underbrace{(f^{t}(x) - \mathbb{E}[\hat{f}^{t}(x)])^{2}}_{\text{Bias}} + \underbrace{\mathbb{E}[(\hat{f}^{t}(x) - \mathbb{E}[\hat{f}^{t}(x)])^{2}]}_{\text{Variance}} + \sigma_{\xi^{t}}^{2}
        \end{align}
    \end{subequations}

    Then, the expected factual loss across the domain $\mathcal{X}$ is:
    \begin{equation}
    \begin{split}
        &\int_{X}\ell(x,t)p(x)dx\\
        =&\int_{\mathcal{X}}\left((f^{t}(x) - \mathbb{E}[\hat{f}^{t}(x)])^{2} + \mathbb{E}[(\hat{f}^{t}(x) - \mathbb{E}[\hat{f}^{t}(x)])^{2}] + \sigma_{\xi^{t}}^{2}\right)p(x)dx\\
        =&\underbrace{\int_{\mathcal{X}}(f^{t}(x) - \mathbb{E}[\hat{f}^{t}(x)])^{2}p^{t}(x)dx}_{\text{Expected Bias}}+\\
        &\underbrace{\int_{\mathcal{X}}\mathbb{E}[(\hat{f}^{t}(x) - \mathbb{E}[\hat{f}^{t}(x)])^{2}]p^{t}(x)dx}_{\text{Expected Variance}} + \int_{\mathcal{X}}\sigma_{\xi^{t}}^{2}p^{t}(x)dx\\
        &=\mathbb{E}_{\mathcal{X}}[(f^{t}(x) - \mathbb{E}[\hat{f}^{t}(x)])^{2}] + \mathbb{E}_{\mathcal{X}}[\mathbb{E}[(\hat{f}^{t}(x) - \mathbb{E}[\hat{f}^{t}(x)])^{2}]]+\sigma_{\xi^{t}}^{2}
    \end{split}
    \end{equation}
\end{proof}

\subsection{Algorithm\label{appendix:algorithm}}

The full algorithm of MACAL is concluded in Algorithm \ref{appendixalg:macal} with additional consideration to the scenario where one of the treatment group's pool set is exhausted, such that the active learning process down to one-sided label acquisition without the accessibility to the other side.

\begin{algorithm*}[h!]
\caption{MACAL\label{appendixalg:macal}}
\begin{algorithmic}[1]
    \State \textbf{Input:} Initializing the training set $\mathcal{D}_{\text{train}}=\mathcal{D}^{t=1}_{\text{train}}\cup\mathcal{D}^{t=0}_{\text{train}}$, pool set $\mathcal{D}_{\text{pool}}=\mathcal{D}^{t=1}_{\text{pool}}\cup\mathcal{D}^{t=0}_{\text{pool}}$, batch size $S$, initial query step $Q=1$, symmetrical penalization $\alpha$, and maximum query step $Q_{\text{max}}$.
    \For{$Q<Q_{\text{max}}$}
    \State $\Tilde{\mathcal{D}}=\varnothing$
    \While{$|\Tilde{\mathcal{D}}|<S$} \hfill \(\triangleright\) \textcolor{gray}{\textit{Keep acquiring the samples if the batch is not filled up in each query step}}
    \If{$\mathcal{D}^{t=1}_{\text{pool}}\neq\varnothing \textbf{ and } \mathcal{D}^{t=0}_{\text{pool}}\neq\varnothing$} \hfill \(\triangleright\) \textcolor{gray}{\textit{If both treatment groups still have available unlabelled samples}}
    \State $\Tilde{x}^{t=1},\Tilde{x}^{t=0} \leftarrow\argmax_{(\Tilde{x}^{t=1},\Tilde{x}^{t=0})} \Sigma_{t\in\{0,1\}}\text{min}_{x'\in\mathcal{D}^{t}_{\text{train}}}d(\Tilde{x}^{t}, x')-\alpha d(\Tilde{x}^{t=1},\Tilde{x}^{t=0})$ \hfill \(\triangleright\) \textcolor{gray}{\textit{Acquire the sample pairs}}
    \State $\mathcal{D}^{t=1}_{\text{train}}, \mathcal{D}^{t=0}_{\text{train}}\leftarrow\mathcal{D}^{t=1}_{\text{train}}\cup\{\Tilde{x}^{t=1}\}, \mathcal{D}^{t=0}_{\text{train}}\cup\{\Tilde{x}^{t=0}\}$ \hfill \(\triangleright\) \textcolor{gray}{\textit{Add the unlabelled samples into the training sets}}
    \Statex \hspace{1.35cm} $\mathcal{D}^{t=1}_{\text{pool}},\mathcal{D}^{t=0}_{\text{pool}}\leftarrow\mathcal{D}^{t=1}_{\text{pool}}\backslash\{\Tilde{x}^{t=1}\}, \mathcal{D}^{t=0}_{\text{pool}}\backslash\{\Tilde{x}^{t=0}\}$ \hfill \(\triangleright\) \textcolor{gray}{\textit{Exclude the acquired samples from the pool sets}}
    \Statex \hspace{1.35cm} $\Tilde{\mathcal{D}}\leftarrow\Tilde{\mathcal{D}}\cup\{\Tilde{x}^{t=1},\Tilde{x}^{t=0}\}$ \hfill \(\triangleright\) \textcolor{gray}{\textit{Update the acquired batch}}
    \Else{} \hfill \(\triangleright\) \textcolor{gray}{\textit{If one of the treatment groups' pool set is exhausted}}
    \If{$\mathcal{D}^{t=1}_{\text{pool}}\neq\varnothing$} \hfill \(\triangleright\) \textcolor{gray}{\textit{If treatment group with $t=1$ is not exhausted}}
    \State $\Tilde{x}^{t=1} \leftarrow\argmax_{\Tilde{x}^{t=1}\in\mathcal{D}^{t=1}} \text{min}_{x'\in\mathcal{D}^{t=1}_{\text{train}}}d(\Tilde{x}^{t}, x')$
    \State $\mathcal{D}^{t=1}_{\text{train}}, \mathcal{D}^{t=1}_{\text{pool}}, \Tilde{\mathcal{D}}\leftarrow\mathcal{D}^{t=1}_{\text{train}}\cup\{\Tilde{x}^{t=1}\}, \mathcal{D}^{t=1}_{\text{pool}}\backslash\{\Tilde{x}^{t=1}\},\Tilde{\mathcal{D}}\cup\{\Tilde{x}^{t=1}\}$
    \Else{} \hfill \(\triangleright\) \textcolor{gray}{\textit{If treatment group with $t=0$ is not exhausted}}
    \State $\Tilde{x}^{t=0} \leftarrow\argmax_{\Tilde{x}^{t=0}\in\mathcal{D}^{t=0}} \text{min}_{x'\in\mathcal{D}^{t=0}_{\text{train}}}d(\Tilde{x}^{t}, x')$  
    \State $\mathcal{D}^{t=0}_{\text{train}}, \mathcal{D}^{t=0}_{\text{pool}}, \Tilde{\mathcal{D}}\leftarrow\mathcal{D}^{t=1}_{\text{train}}\cup\{\Tilde{x}^{t=0}\}, \mathcal{D}^{t=0}_{\text{pool}}\backslash\{\Tilde{x}^{t=0}\},\Tilde{\mathcal{D}}\cup\{\Tilde{x}^{t=0}\}$
    \EndIf
    \EndIf
    \EndWhile
    \State Reveal the acquired unlabelled samples in the training set via the Oracle. \hfill \(\triangleright\) \textcolor{gray}{\textit{Label all the acquired samples all at once}}
    \State $Q\leftarrow Q+1$ \hfill \(\triangleright\) \textcolor{gray}{\textit{Move to next query step}}
    \EndFor
    \State \textbf{Output:} $\mathcal{D}_{\text{train}}$.
\end{algorithmic}
\end{algorithm*}

\section{Additional Experiments and Setup}

\subsection{Additional Visualizations of Post-Acquisition Dataset\label{appendix:visualization}}

We visualize the post-acquisition dataset distribution via t-SNE for the five most representative models: MACAL, Random, LCMD, QHTE, and $\mu\rho$BALD, on all three datasets, i.e., CMNIST, IBM, and IHDP. Across all Figure \ref{appendix:tsne_cmnist}, \ref{appendix:tsne_ibm}, and \ref{appendix:tsne_ihdp}, we consistently observe that MACAL can significantly outperform the other methods in terms of acquiring the pairs to avoid the violation of positivity, and also expanding the data boundary for not being clustering at a small area to avoid repetitive samples. Interestingly, we notice that in the IHDP dataset, the pair acquisition by MACAL terminates at Step 15 as shown in Figure \ref{appendix:tsne_ibm_macal_15} since samples from the treatment group with $t=1$ are exhausted, such that, MACAL can only label the other treatment samples and leave a quite imbalanced dataset at Step 35 as shown in Figure \ref{appendix:tsne_ibm_macal_35}.

\begin{figure*}[h]
  \centering
  \subfigure[MACAL at Step 10]{\includegraphics[width=0.25\textwidth]{figures/appendix_tsne/truesim_2.5_9.pdf}}
  \subfigure[MACAL at Step 30]{\includegraphics[width=0.25\textwidth]{figures/appendix_tsne/truesim_2.5_29.pdf}}
  \subfigure[MACAL at Step 50]{\includegraphics[width=0.255\textwidth]{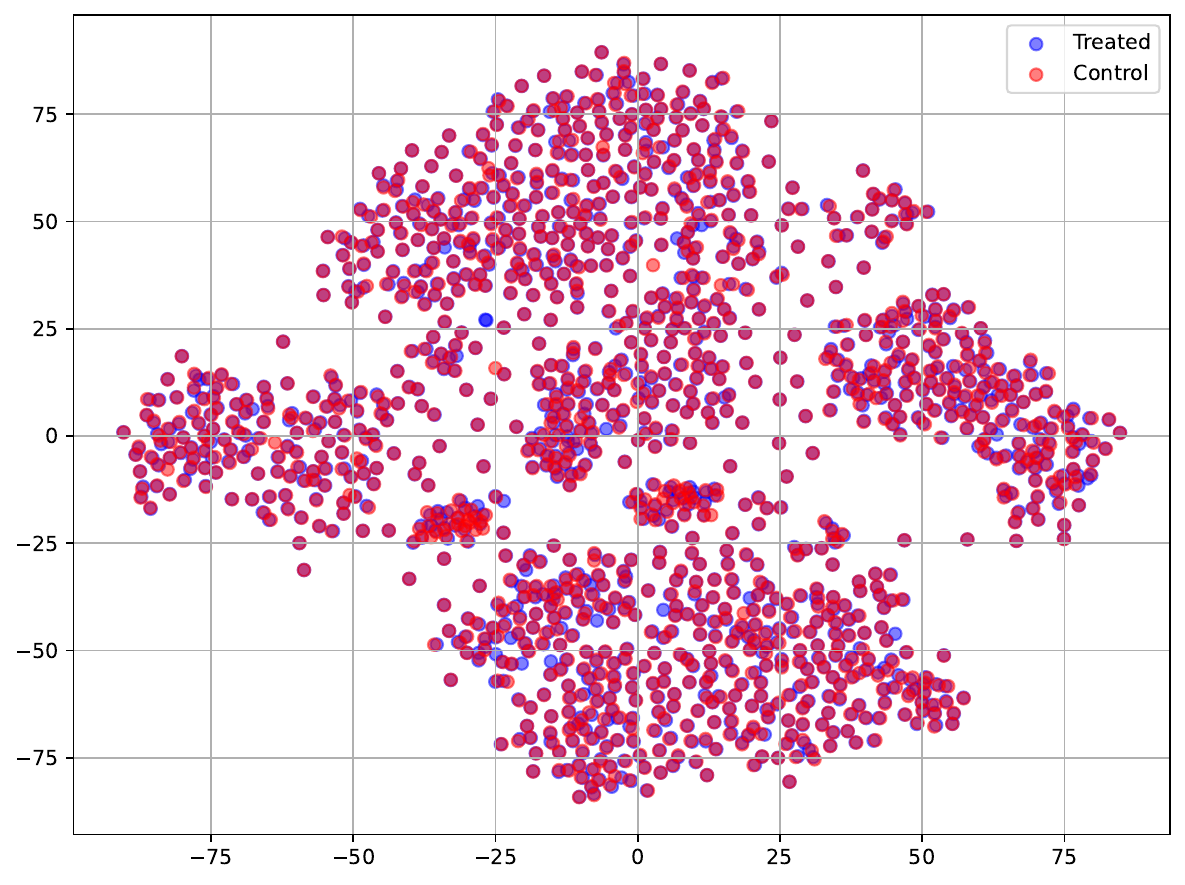}}\\
  \subfigure[Random at Step 10]{\includegraphics[width=0.25\textwidth]{figures/appendix_tsne/truerandom_9.pdf}}
  \subfigure[Random at Step 30]{\includegraphics[width=0.25\textwidth]{figures/appendix_tsne/truerandom_29.pdf}}
  \subfigure[Random at Step 50]{\includegraphics[width=0.255\textwidth]{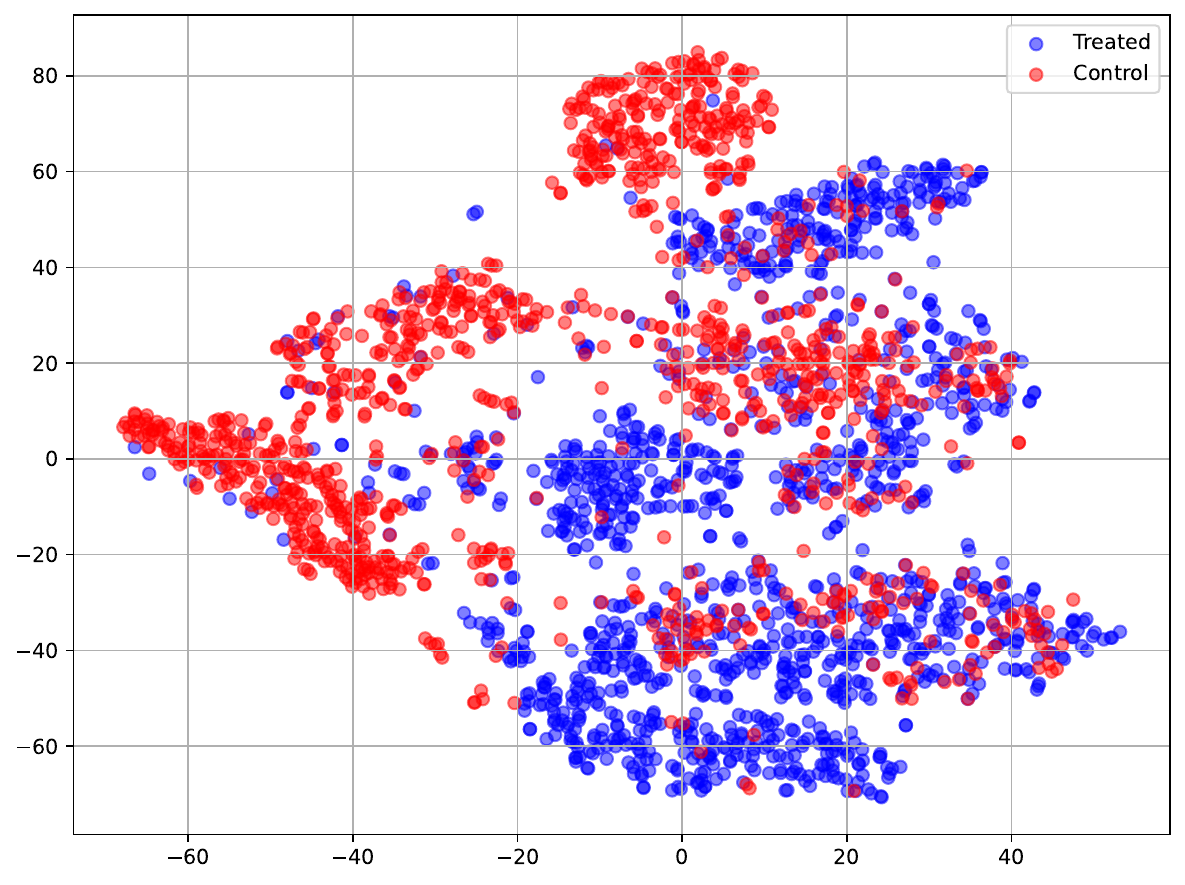}}\\
  \subfigure[LCMD at Step 10]{\includegraphics[width=0.25\textwidth]{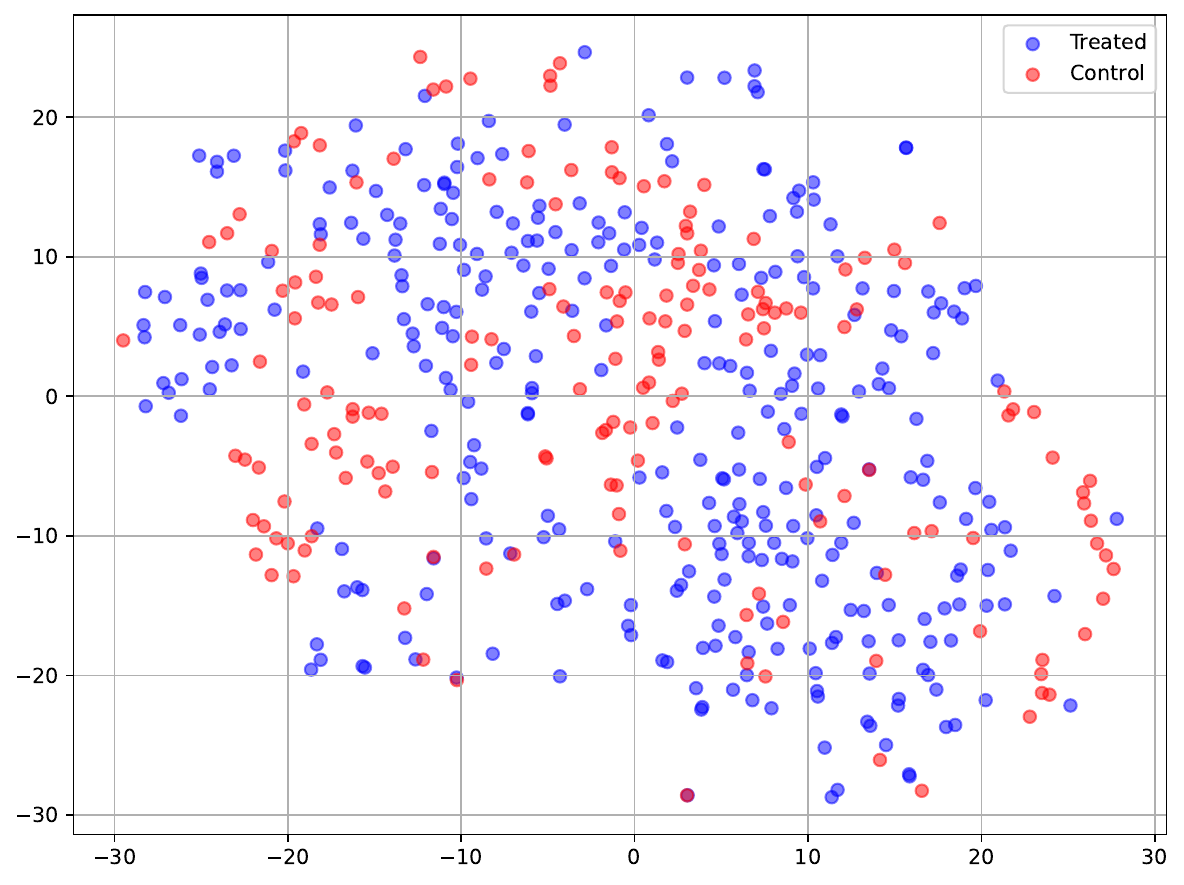}}
  \subfigure[LCMD at Step 30]{\includegraphics[width=0.25\textwidth]{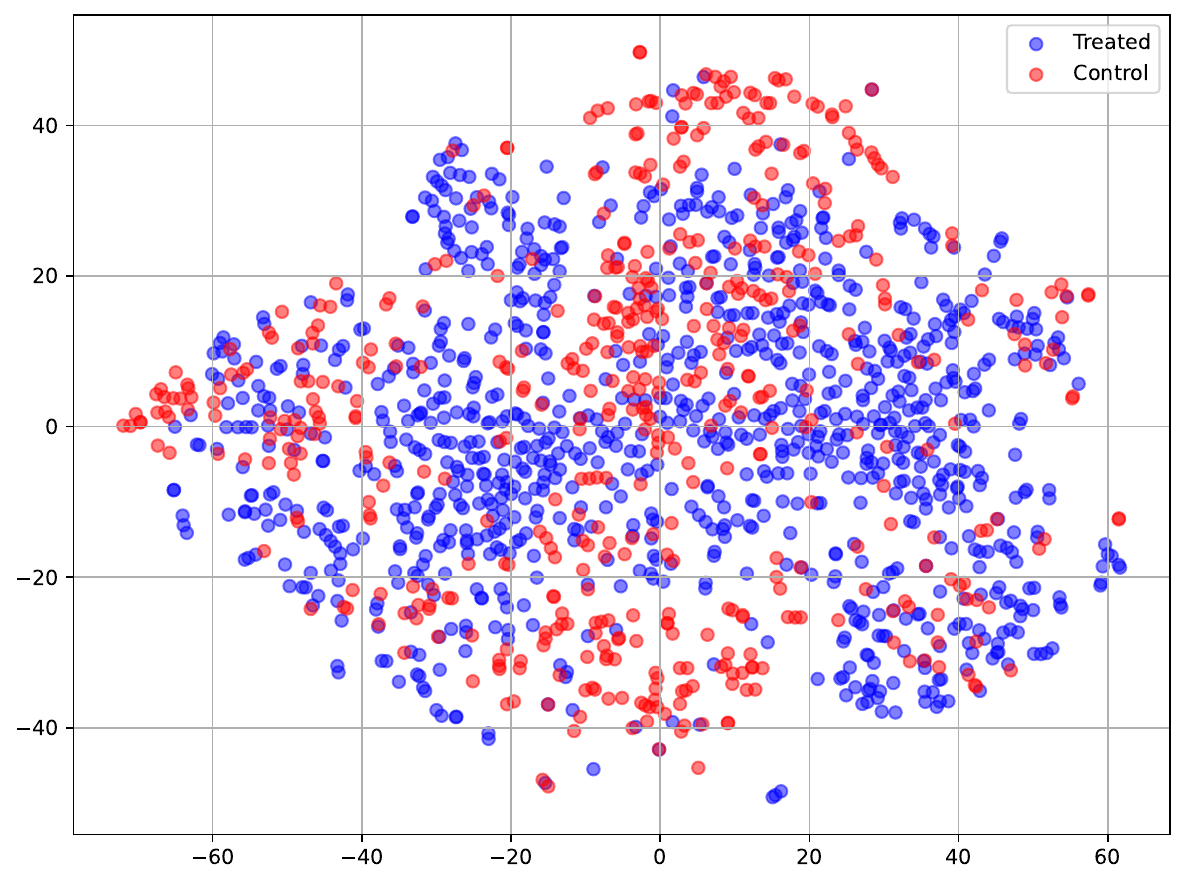}}
  \subfigure[LCMD at Step 50]{\includegraphics[width=0.255\textwidth]{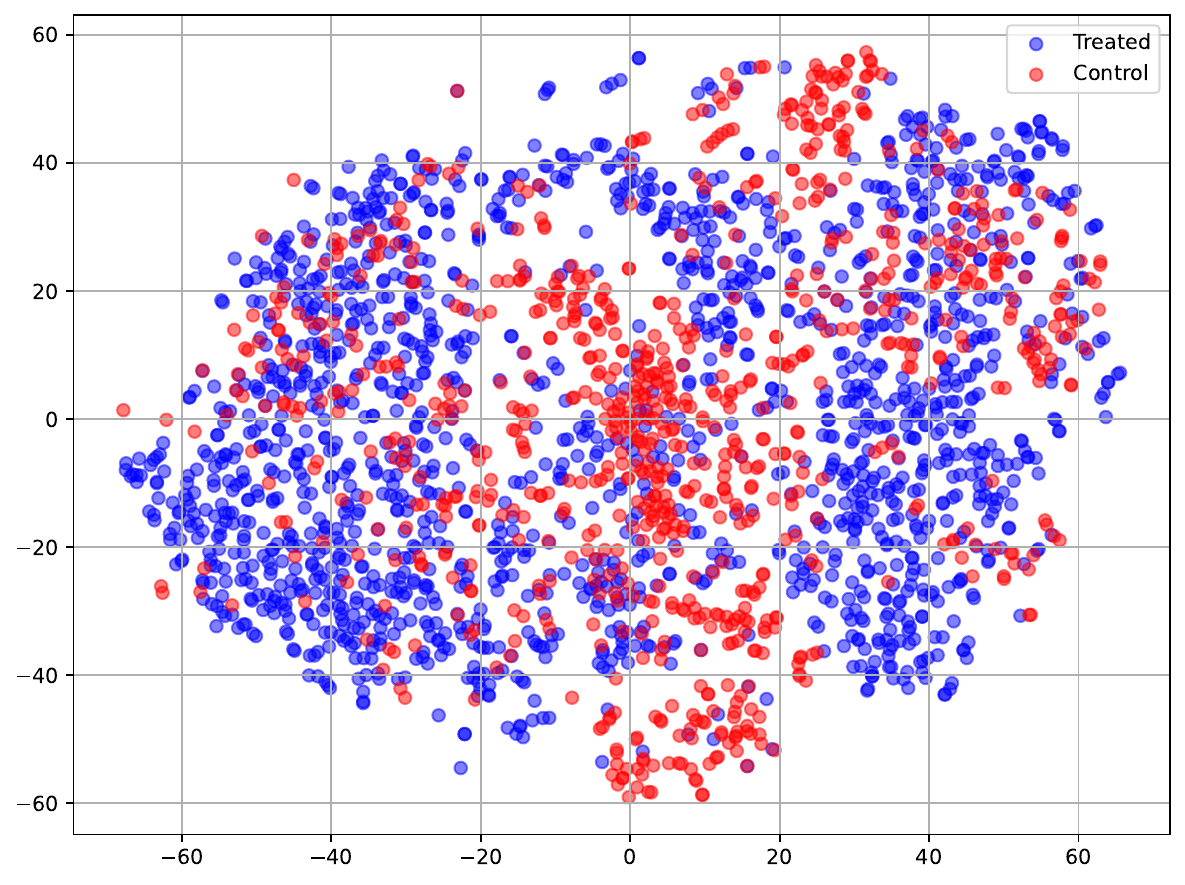}}\\
    \subfigure[QHTE at Step 10]{\includegraphics[width=0.25\textwidth]{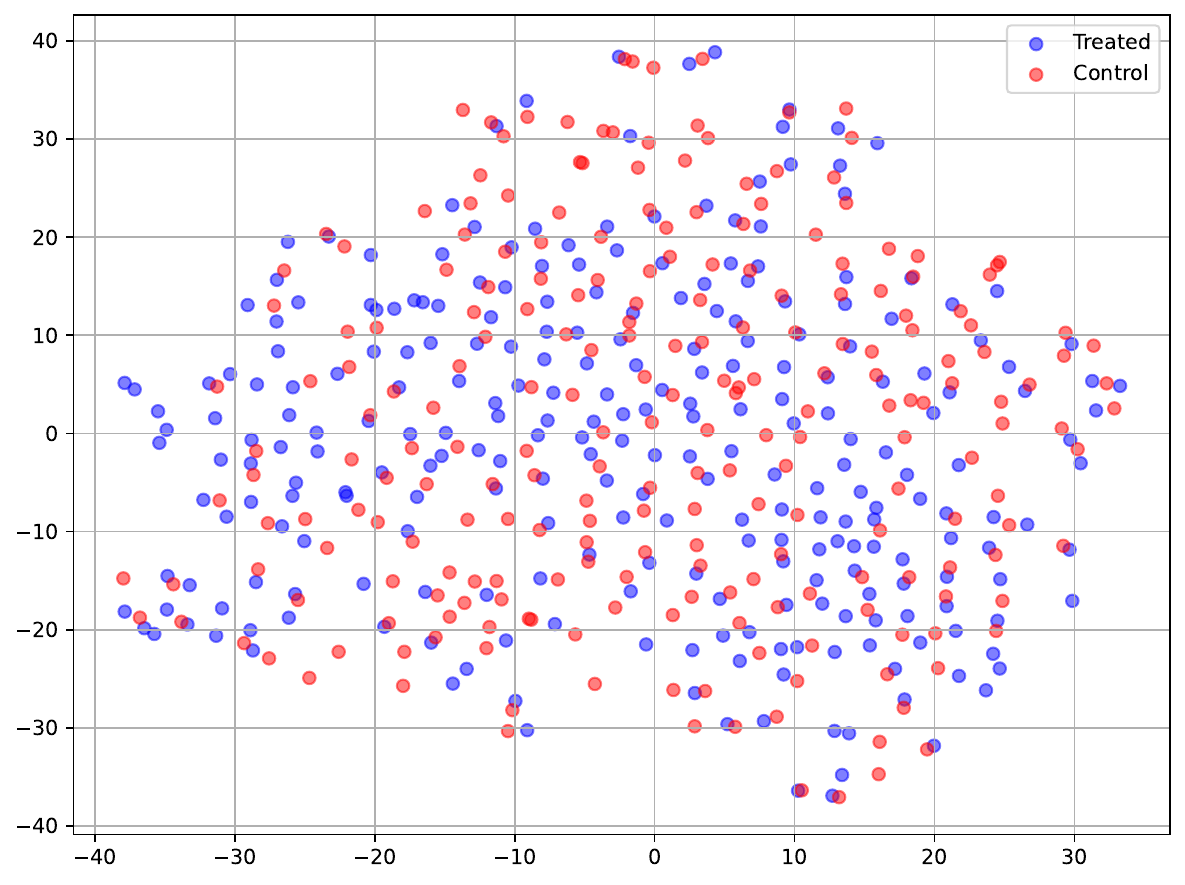}}
  \subfigure[QHTE at Step 30]{\includegraphics[width=0.25\textwidth]{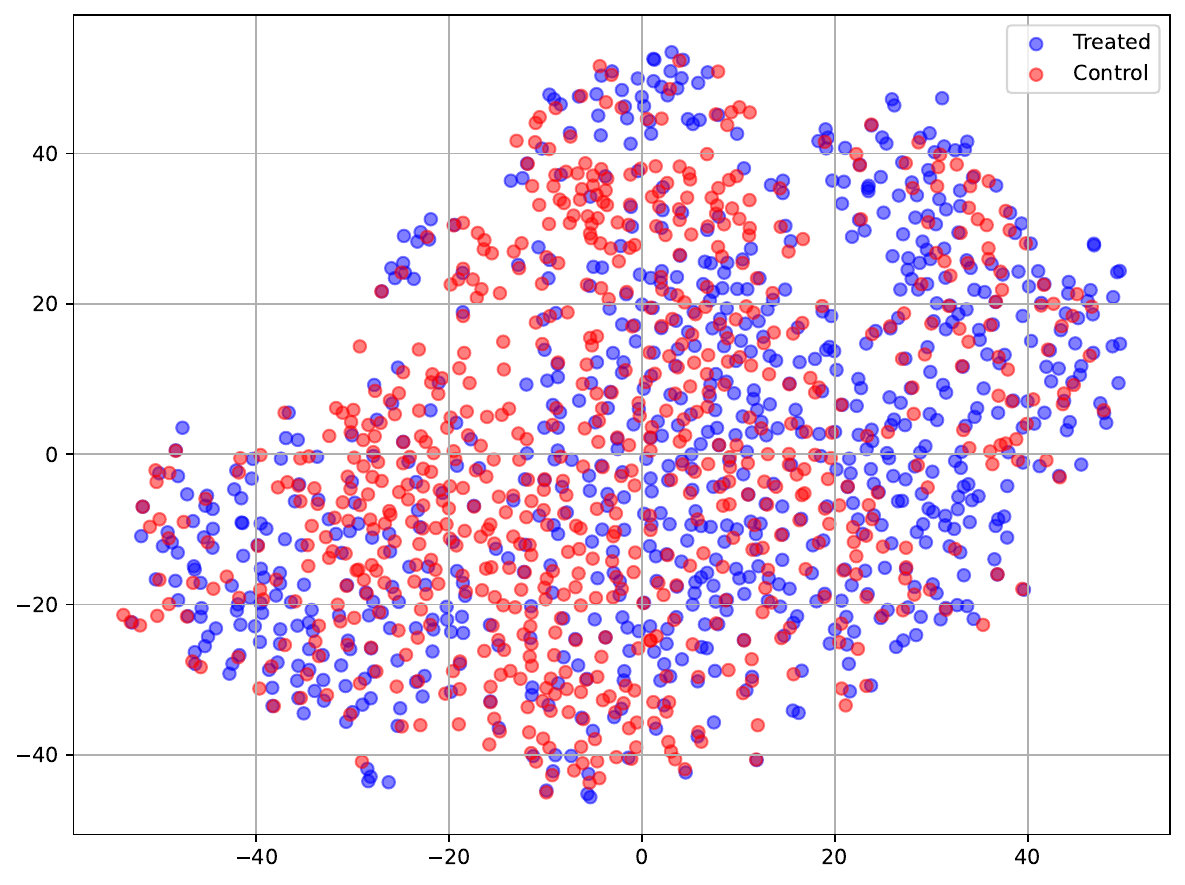}}
  \subfigure[QHTE at Step 50]{\includegraphics[width=0.255\textwidth]{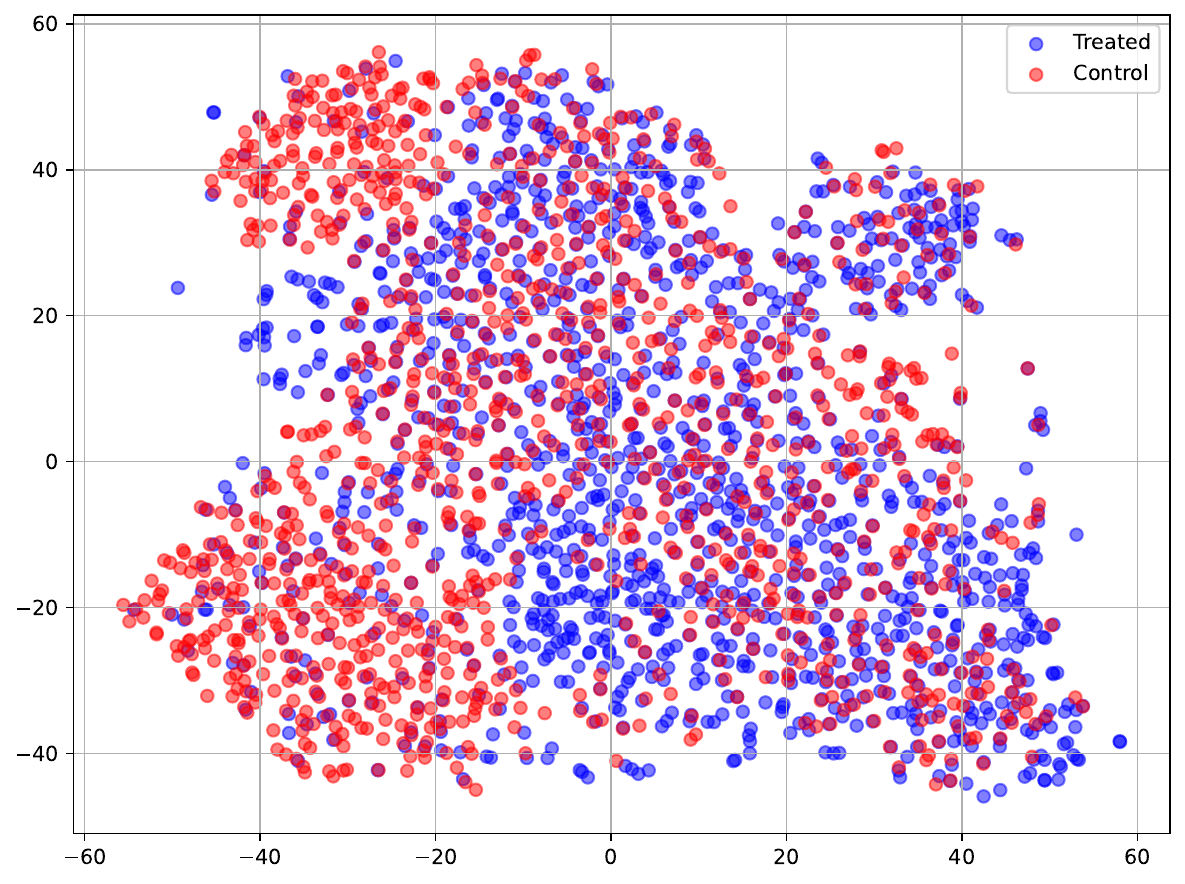}}\\
    \subfigure[$\mu\rho$BALD at Step 10]{\includegraphics[width=0.25\textwidth]{figures/appendix_tsne/truemurho_9.pdf}}
  \subfigure[$\mu\rho$BALD at Step 30]{\includegraphics[width=0.25\textwidth]{figures/appendix_tsne/truemurho_29.pdf}}
  \subfigure[$\mu\rho$BALD at Step 50]{\includegraphics[width=0.255\textwidth]{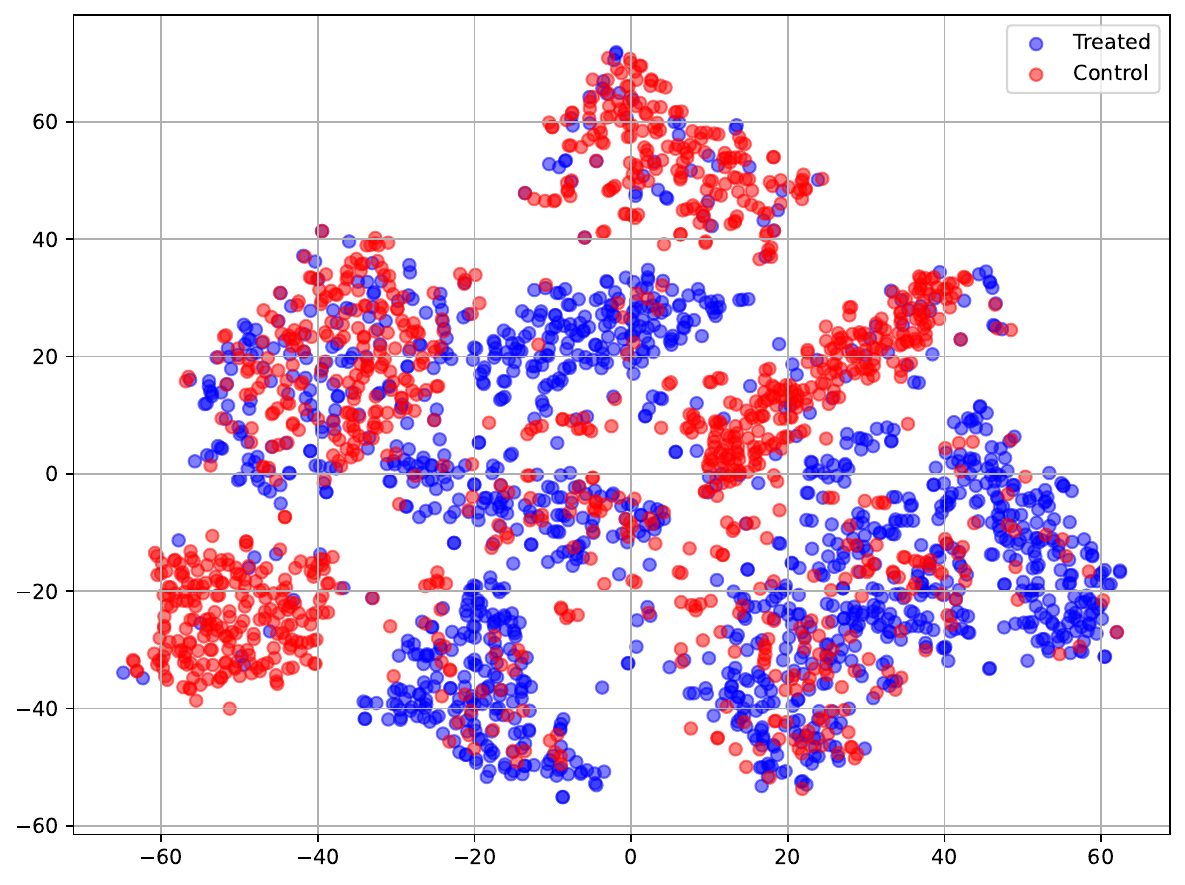}}
  
  \caption{Visualization of the post-acquisition training set at query step 10, 30, and 50 via t-SNE on CMNIST dataset.}.\label{appendix:tsne_cmnist}
\end{figure*}

\begin{figure*}[h]
  \centering
  \subfigure[MACAL at Step 10]{\includegraphics[width=0.25\textwidth]{figures/appendix_tsne/ibm_truesim_2.5_9.pdf}}
  \subfigure[MACAL at Step 30]{\includegraphics[width=0.25\textwidth]{figures/appendix_tsne/ibm_truesim_2.5_29.pdf}}
  \subfigure[MACAL at Step 50]{\includegraphics[width=0.255\textwidth]{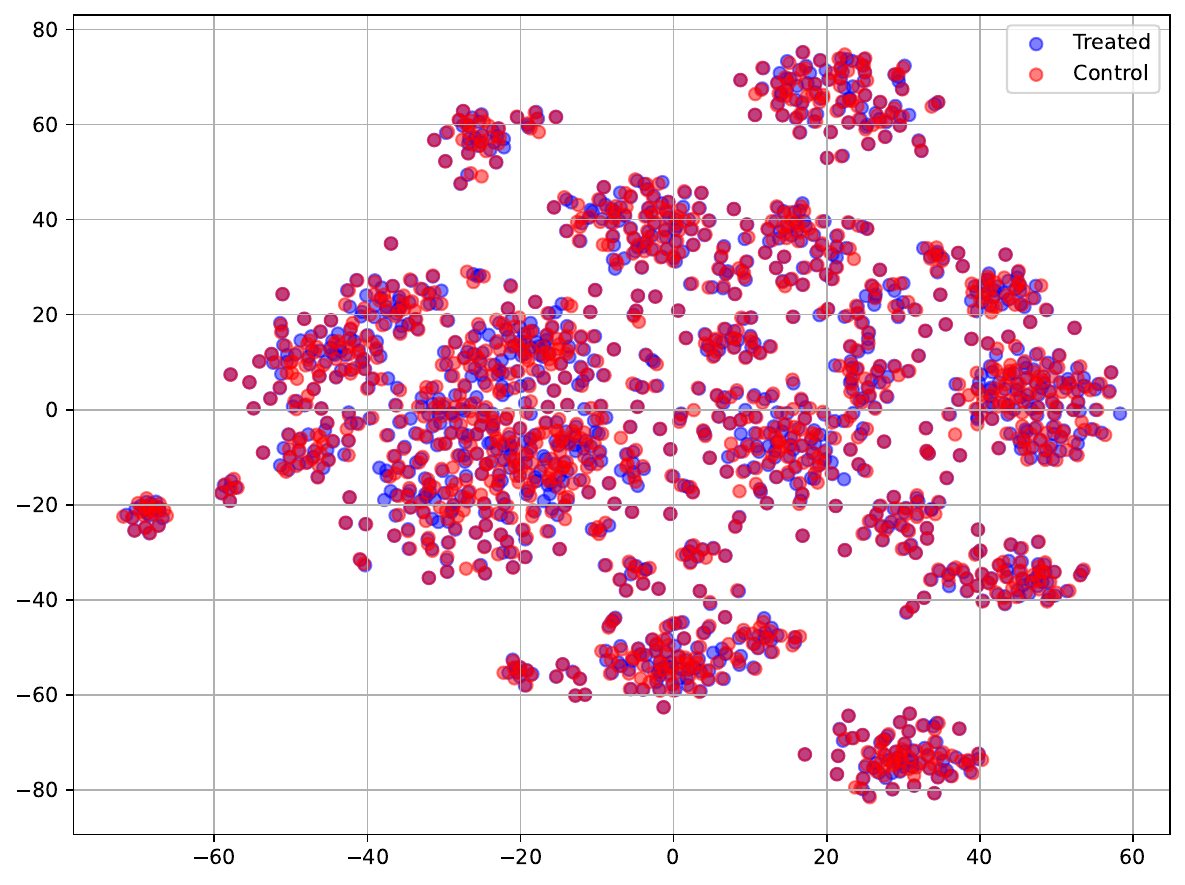}}\\
  \subfigure[Random at Step 10]{\includegraphics[width=0.25\textwidth]{figures/appendix_tsne/ibm_truerandom_9.pdf}}
  \subfigure[Random at Step 30]{\includegraphics[width=0.25\textwidth]{figures/appendix_tsne/ibm_truerandom_29.pdf}}
  \subfigure[Random at Step 50]{\includegraphics[width=0.255\textwidth]{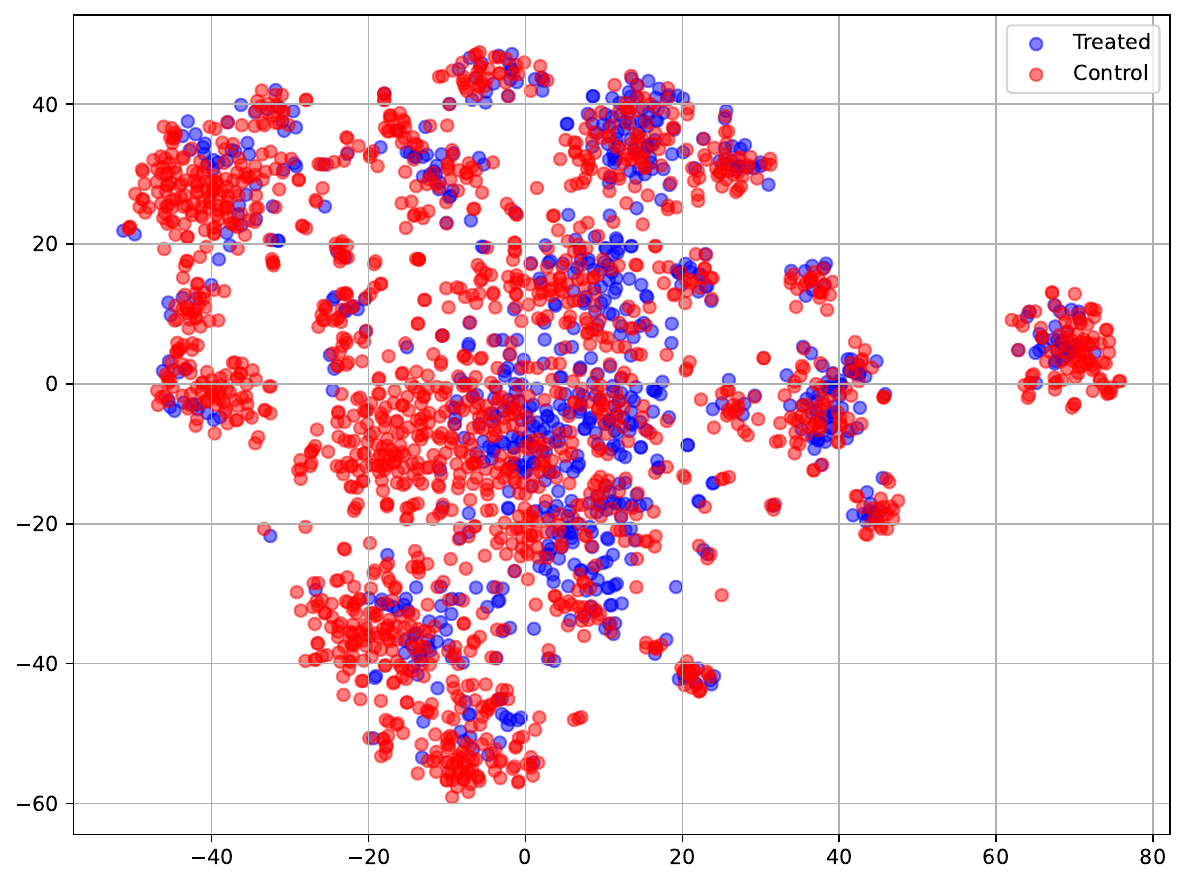}}\\
  \subfigure[LCMD at Step 10]{\includegraphics[width=0.25\textwidth]{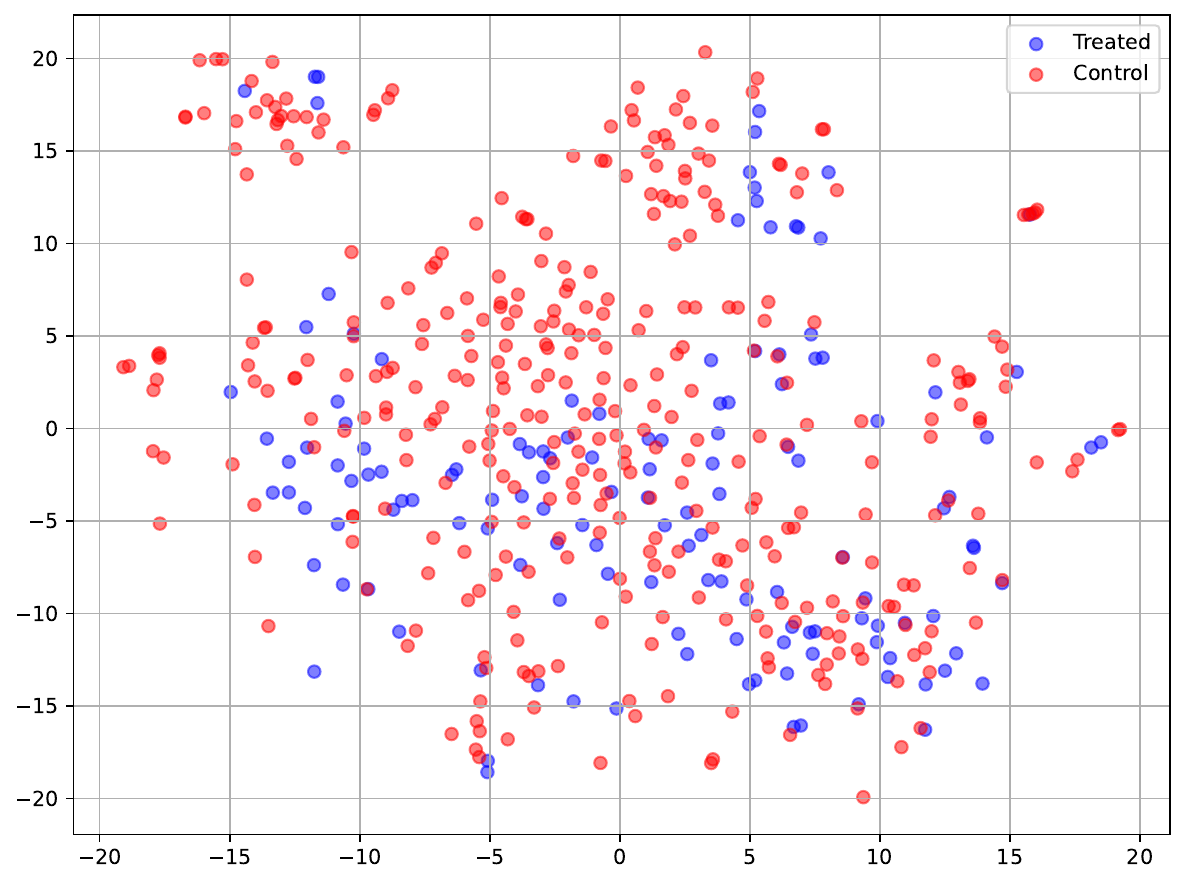}}
  \subfigure[LCMD at Step 30]{\includegraphics[width=0.25\textwidth]{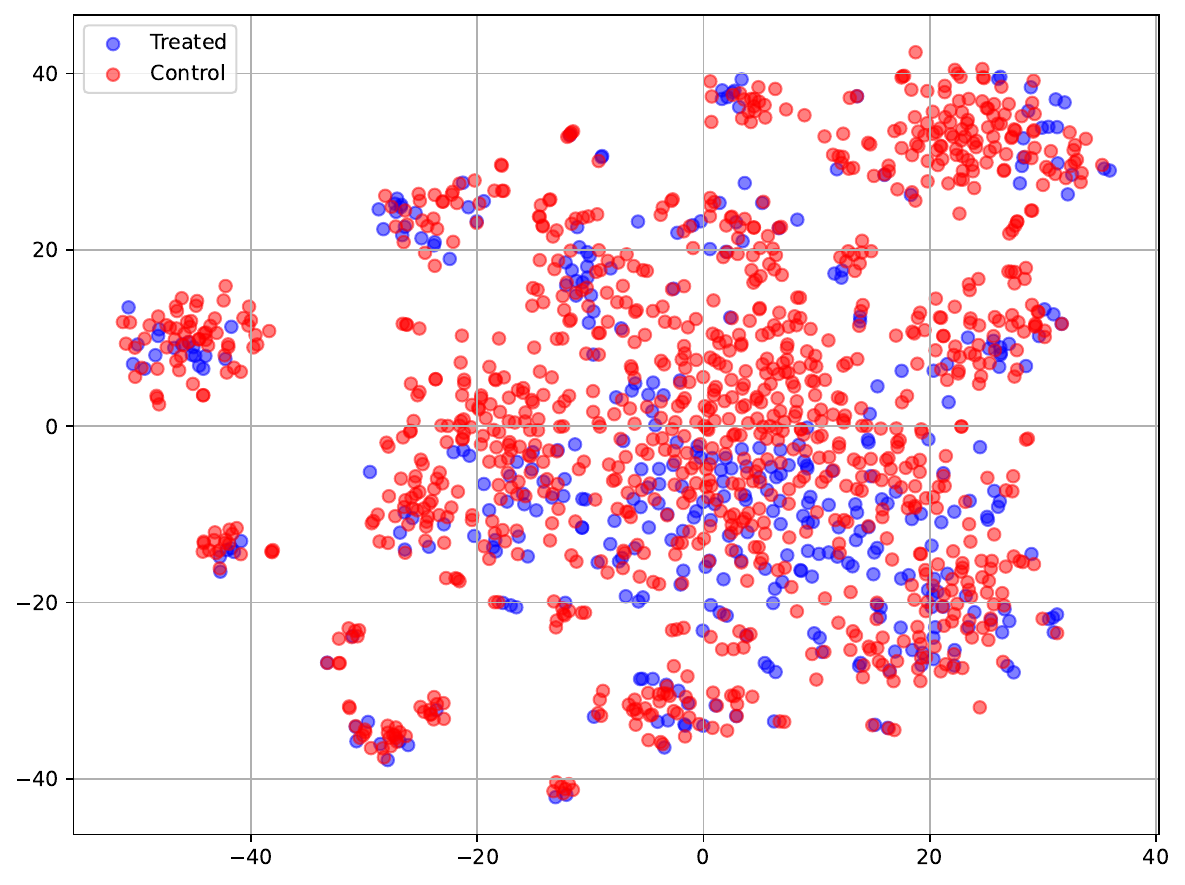}}
  \subfigure[LCMD at Step 50]{\includegraphics[width=0.255\textwidth]{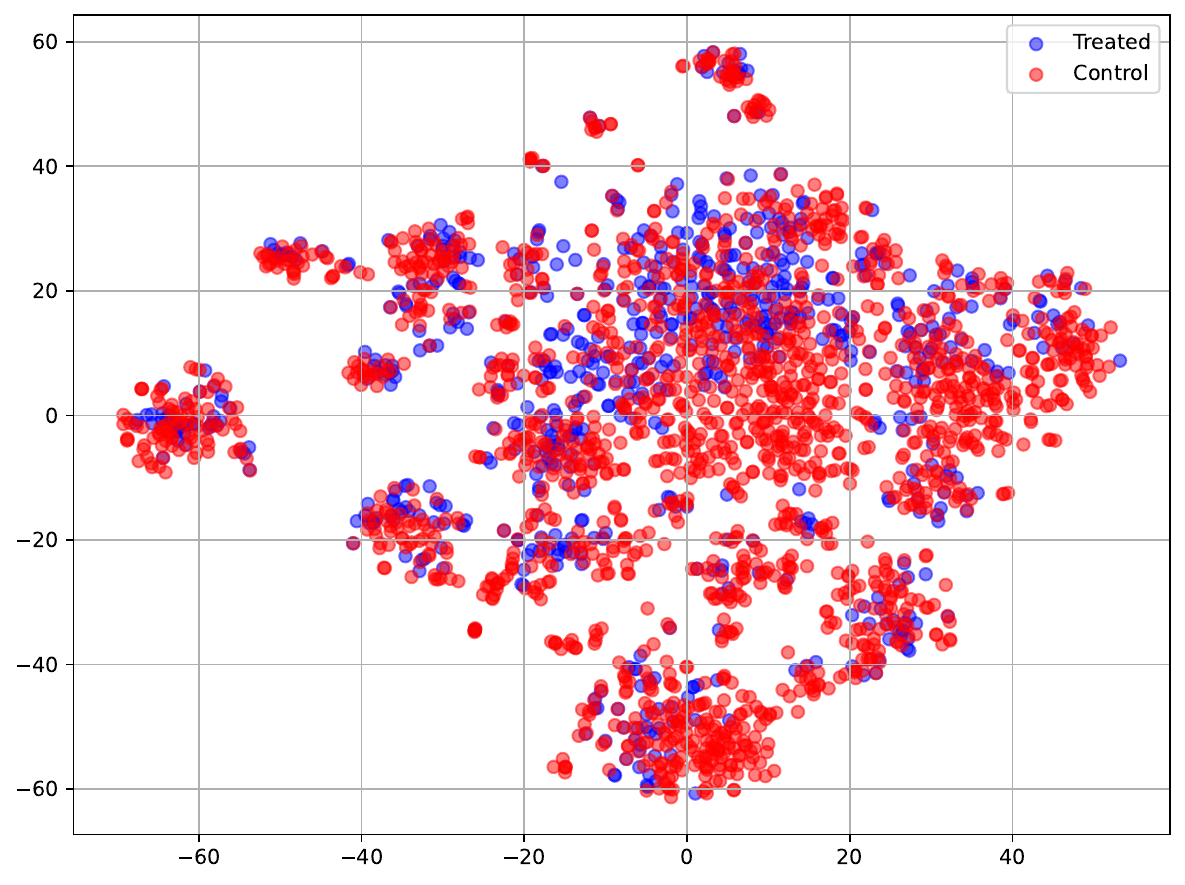}}\\
    \subfigure[QHTE at Step 10]{\includegraphics[width=0.25\textwidth]{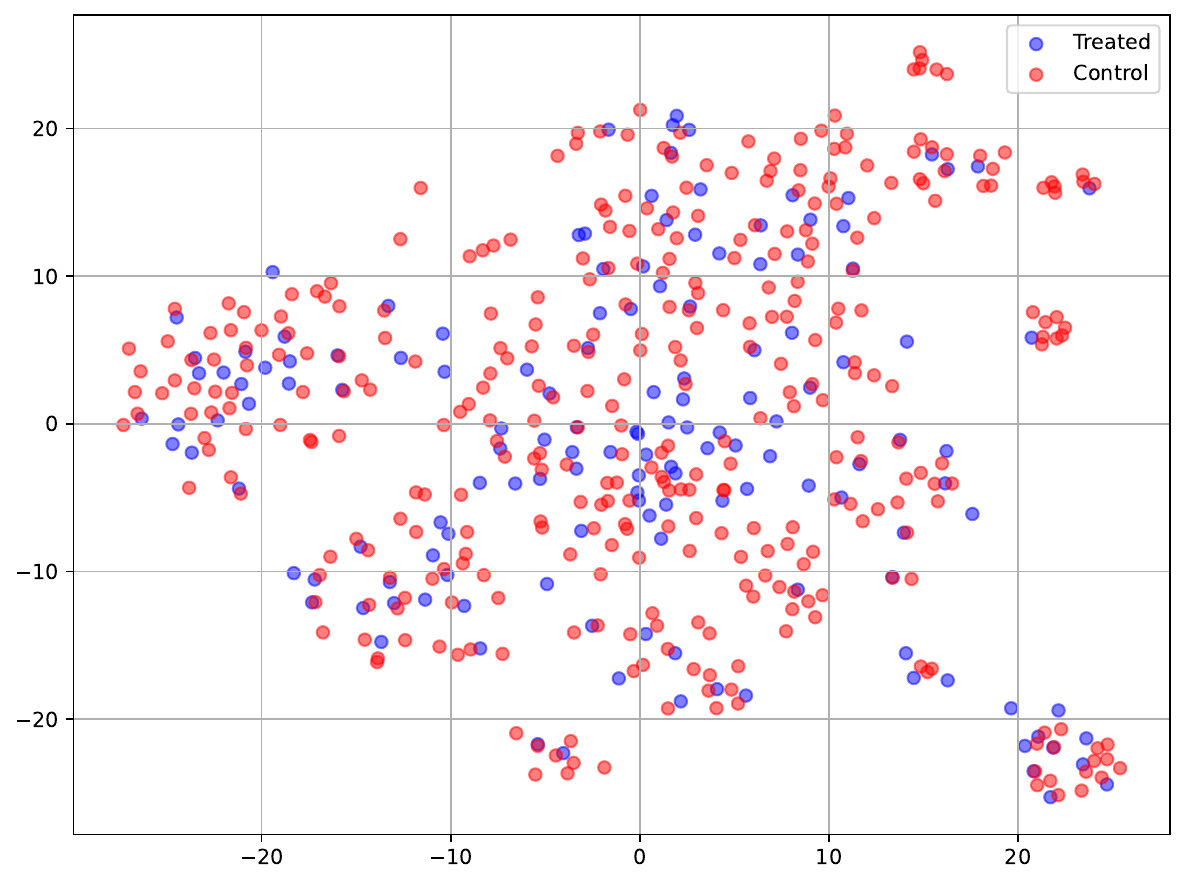}}
  \subfigure[QHTE at Step 30]{\includegraphics[width=0.25\textwidth]{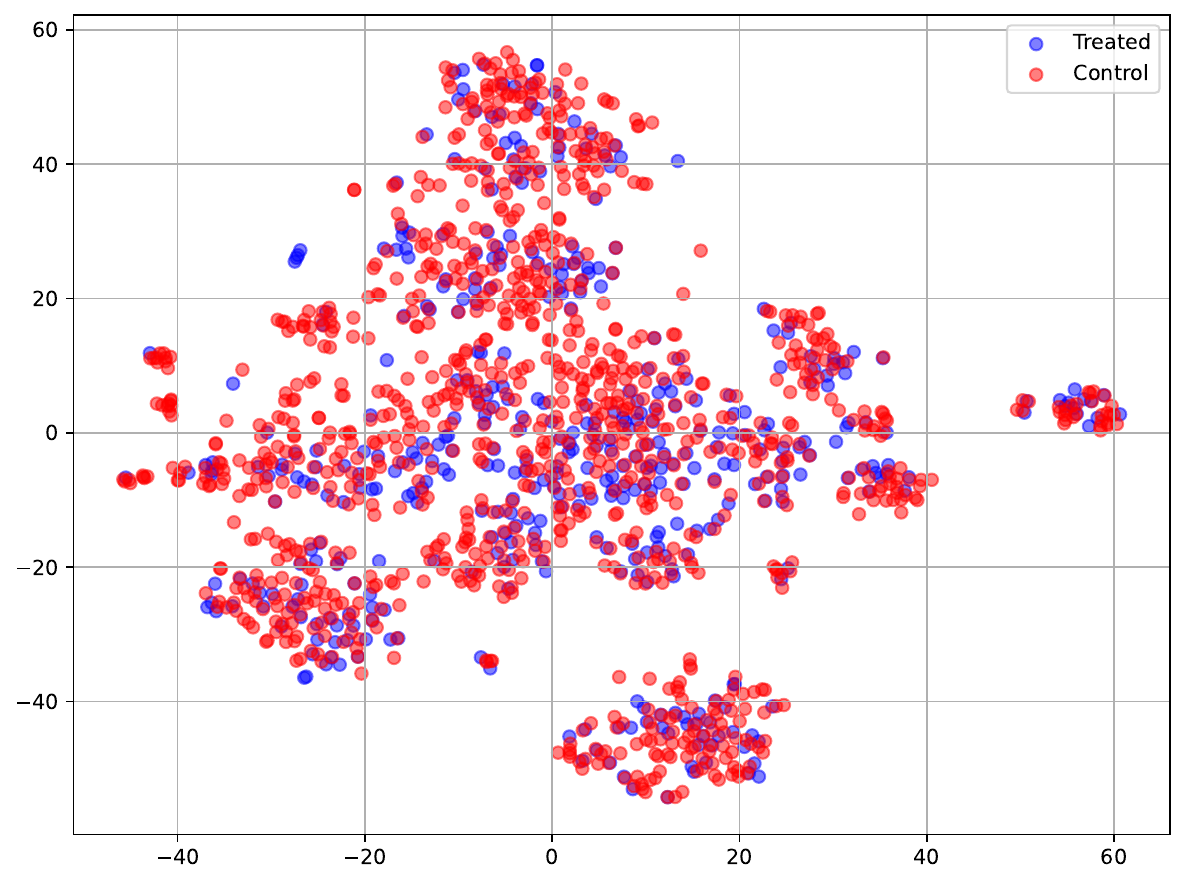}}
  \subfigure[QHTE at Step 50]{\includegraphics[width=0.255\textwidth]{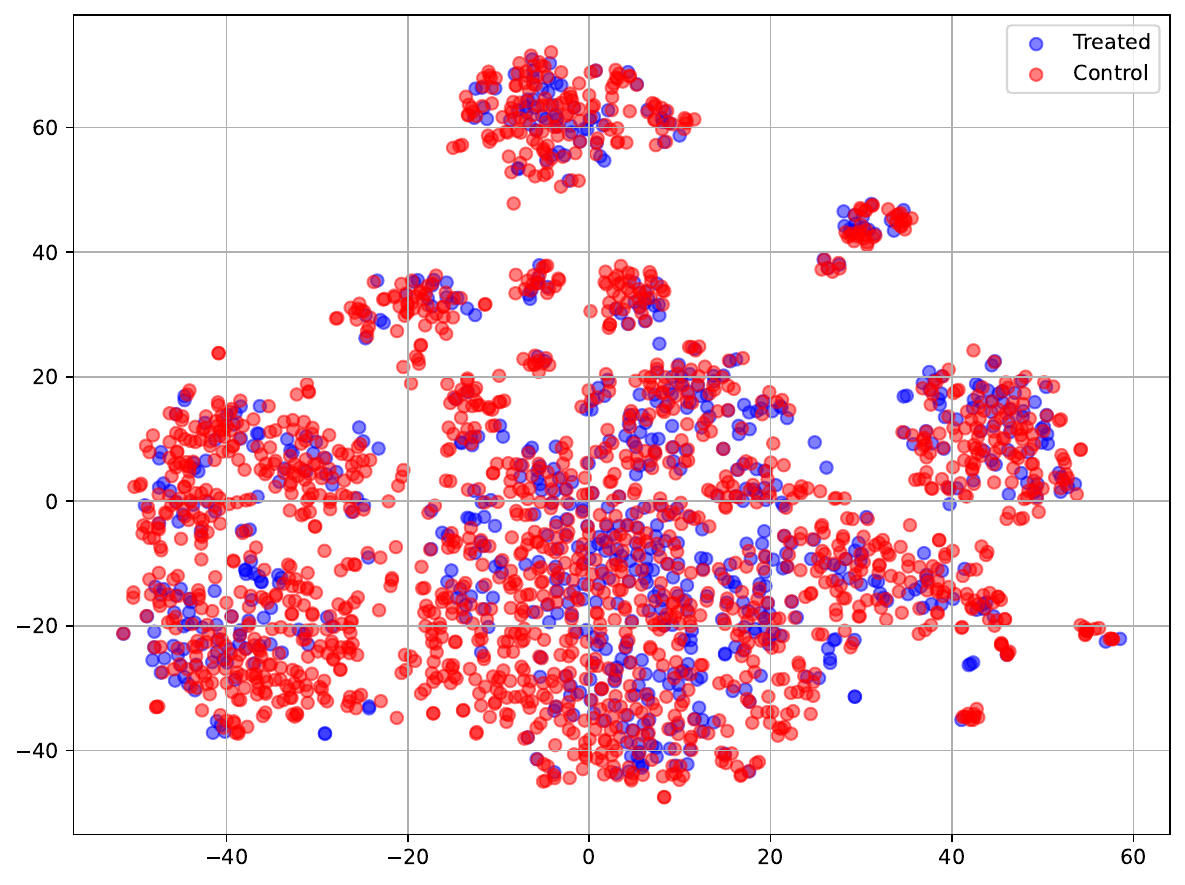}}\\
    \subfigure[$\mu\rho$BALD at Step 10]{\includegraphics[width=0.25\textwidth]{figures/appendix_tsne/ibm_truemurho_9.pdf}}
  \subfigure[$\mu\rho$BALD at Step 30]{\includegraphics[width=0.25\textwidth]{figures/appendix_tsne/ibm_truemurho_29.pdf}}
  \subfigure[$\mu\rho$BALD at Step 50]{\includegraphics[width=0.255\textwidth]{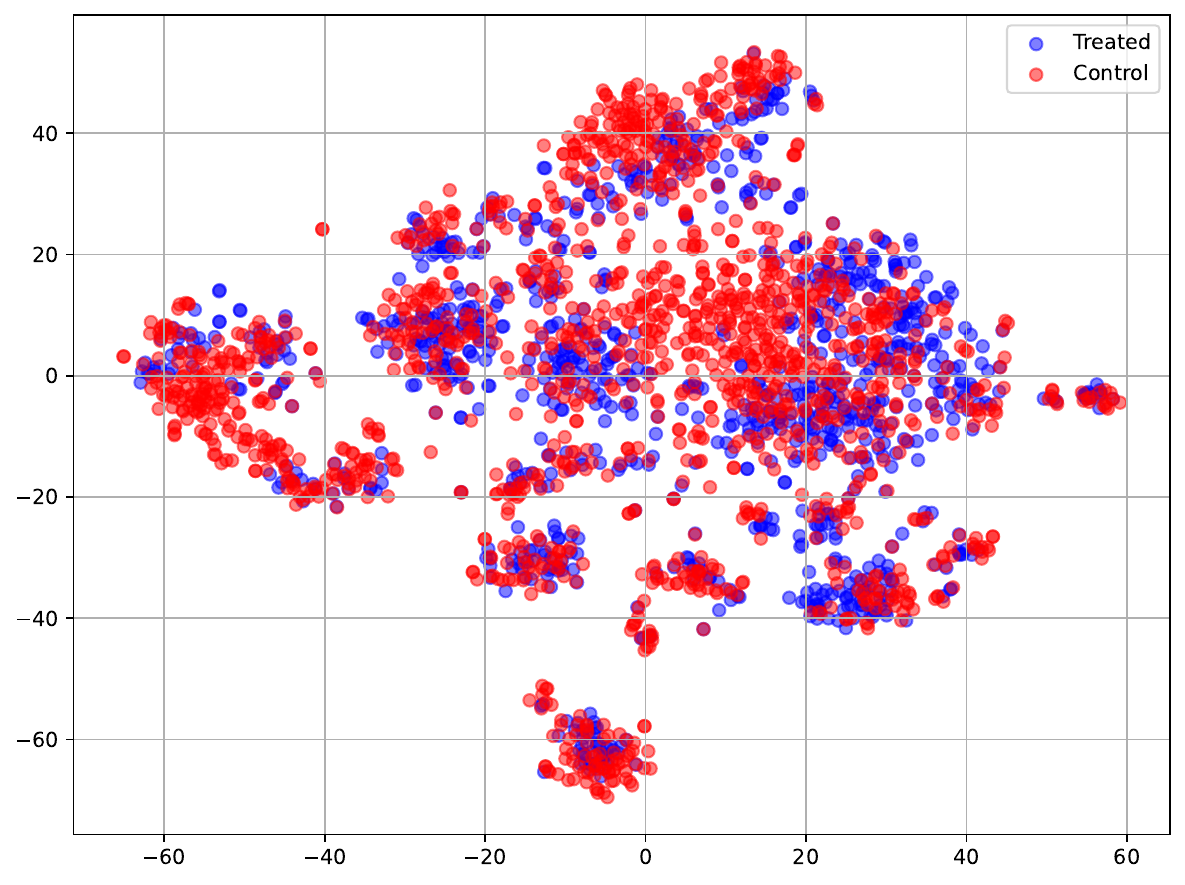}}
  
  \caption{Visualization of the post-acquisition training set at query step 10, 30, and 50 via t-SNE on IBM dataset.}\label{appendix:tsne_ibm}
\end{figure*} 

\begin{figure*}[h]
  \centering
  \subfigure[MACAL at Step 10]{\includegraphics[width=0.25\textwidth]{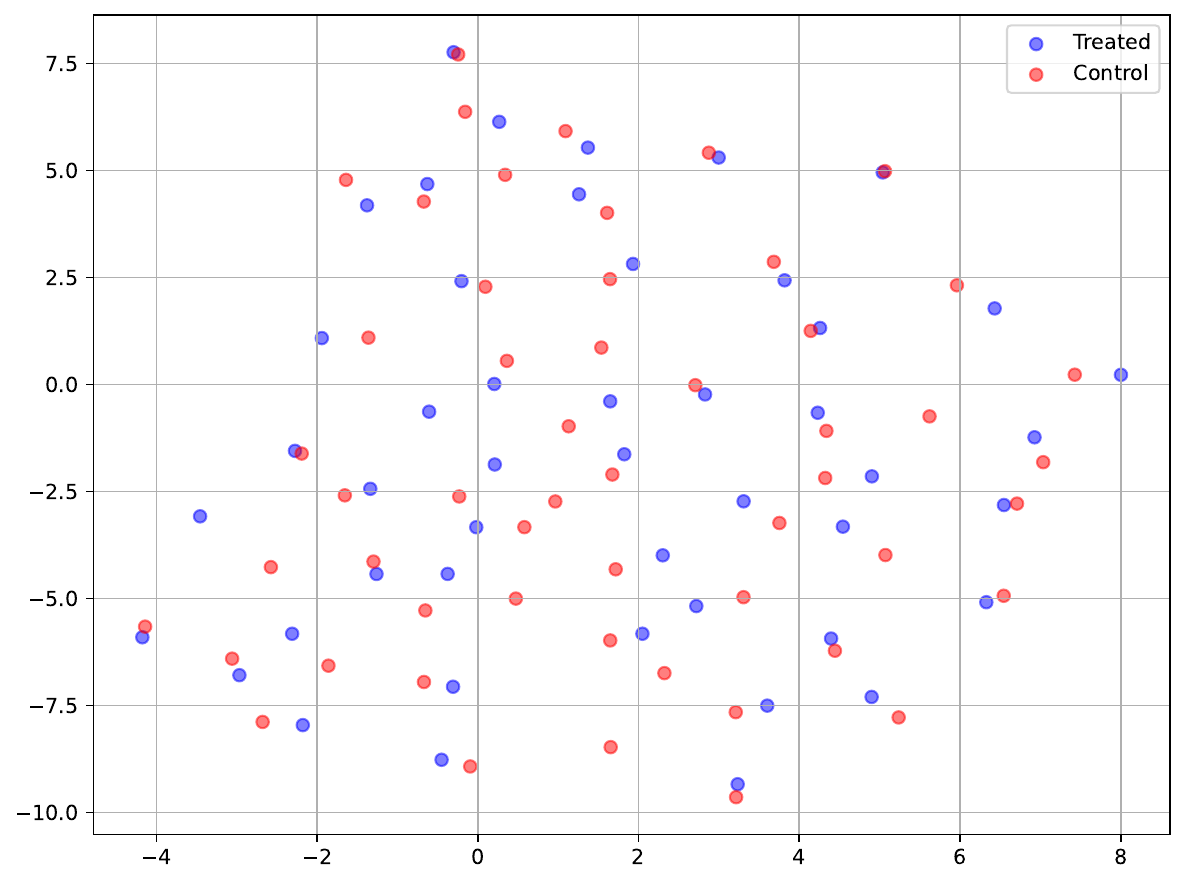}}
  \subfigure[MACAL at Step 15]{\includegraphics[width=0.25\textwidth]{figures/appendix_tsne/ihdp_truesim_2.5_14.pdf}\label{appendix:tsne_ibm_macal_15}}
  \subfigure[MACAL at Step 35]{\includegraphics[width=0.255\textwidth]{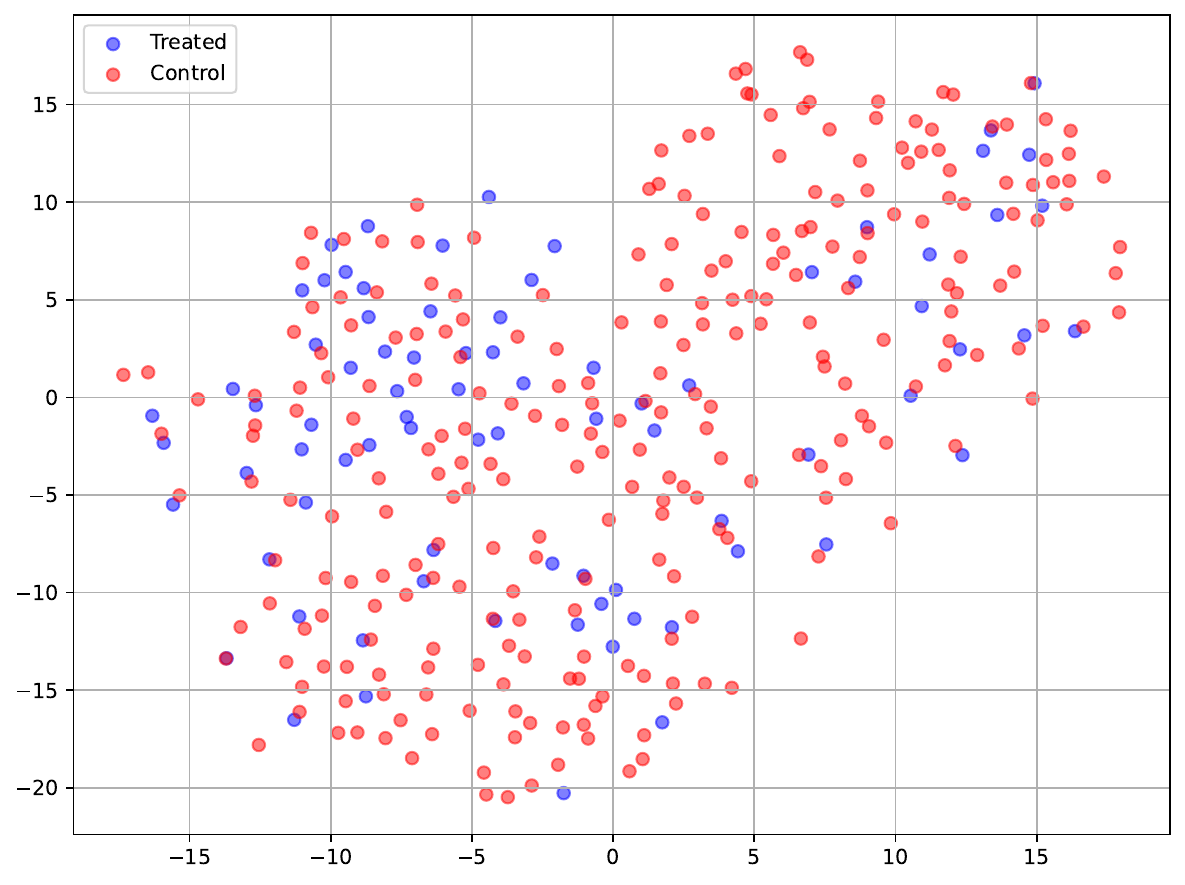}\label{appendix:tsne_ibm_macal_35}}\\
  \subfigure[Random at Step 10]{\includegraphics[width=0.25\textwidth]{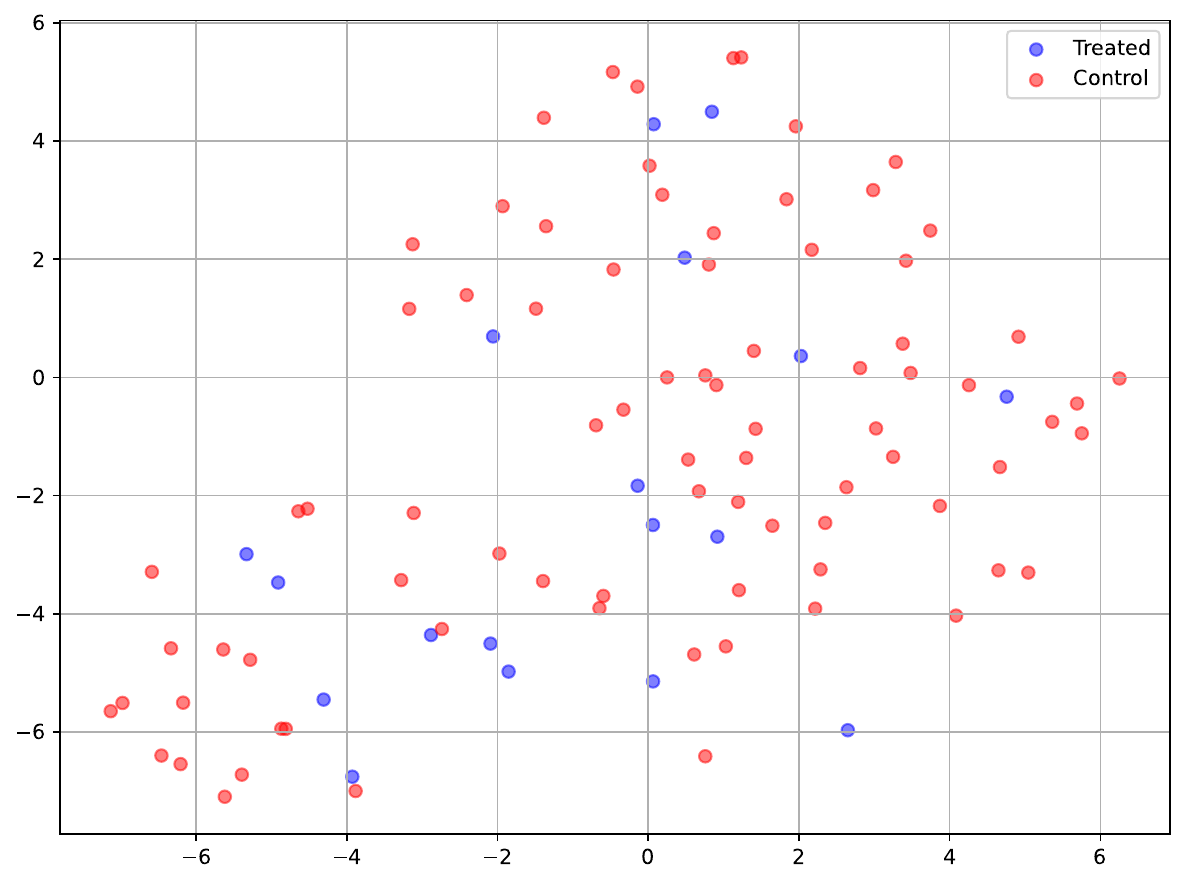}}
  \subfigure[Random at Step 15]{\includegraphics[width=0.25\textwidth]{figures/appendix_tsne/ihdp_truerandom_14.pdf}}
  \subfigure[Random at Step 35]{\includegraphics[width=0.255\textwidth]{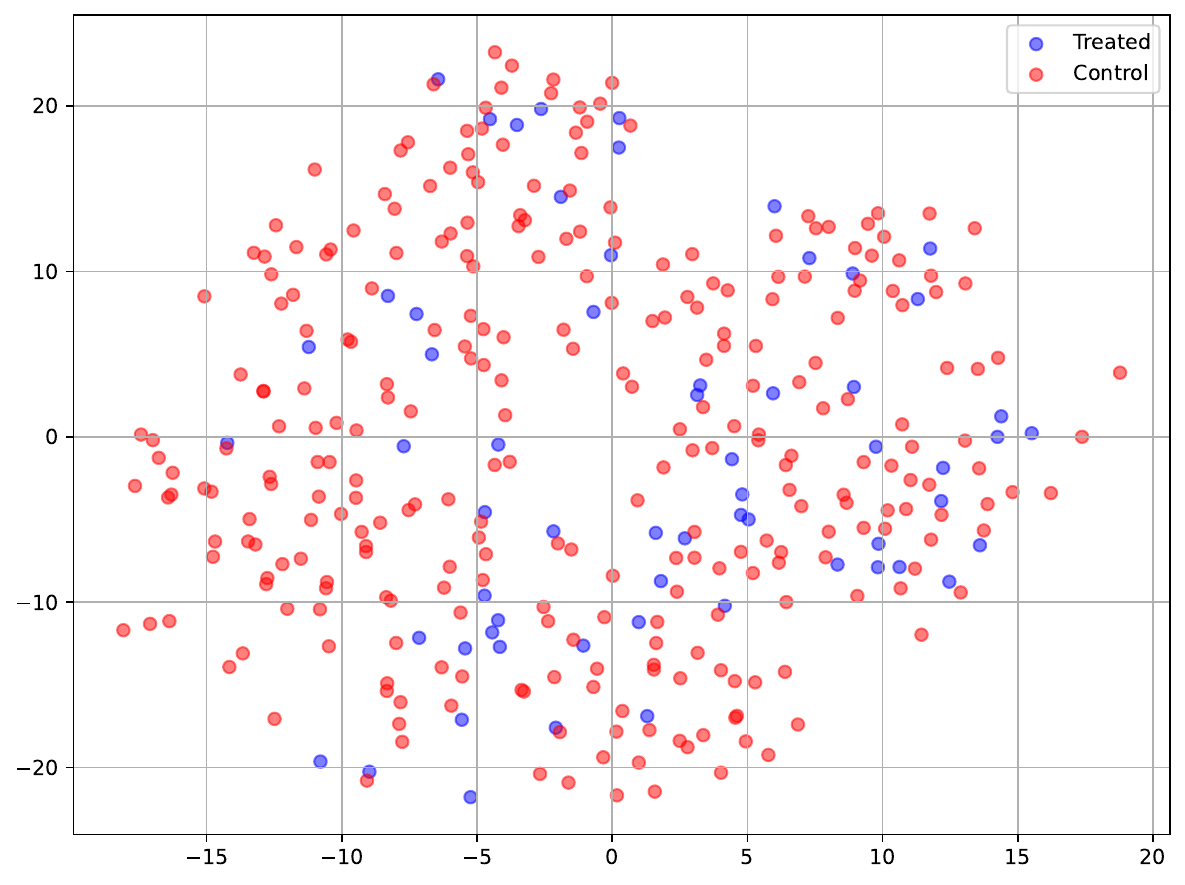}}\\
  \subfigure[LCMD at Step 10]{\includegraphics[width=0.25\textwidth]{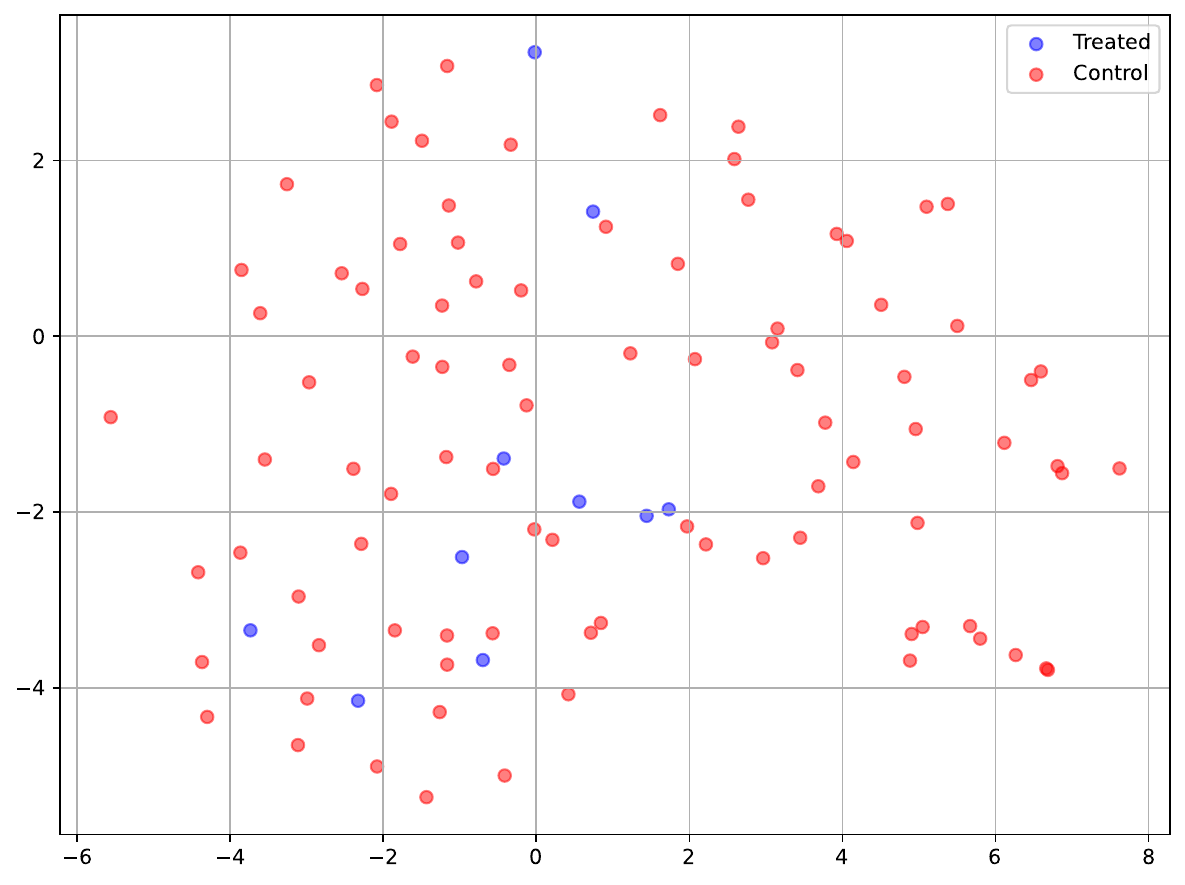}}
  \subfigure[LCMD at Step 15]{\includegraphics[width=0.25\textwidth]{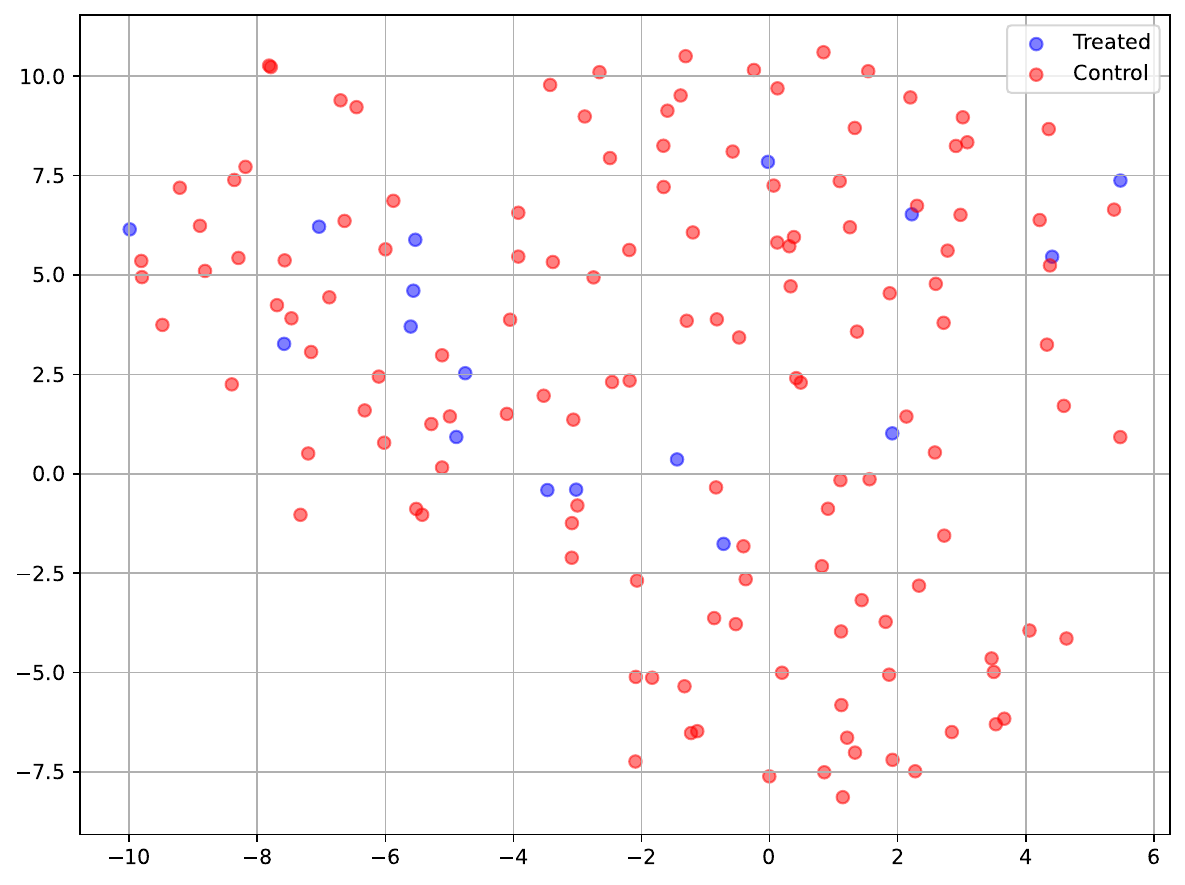}}
  \subfigure[LCMD at Step 35]{\includegraphics[width=0.255\textwidth]{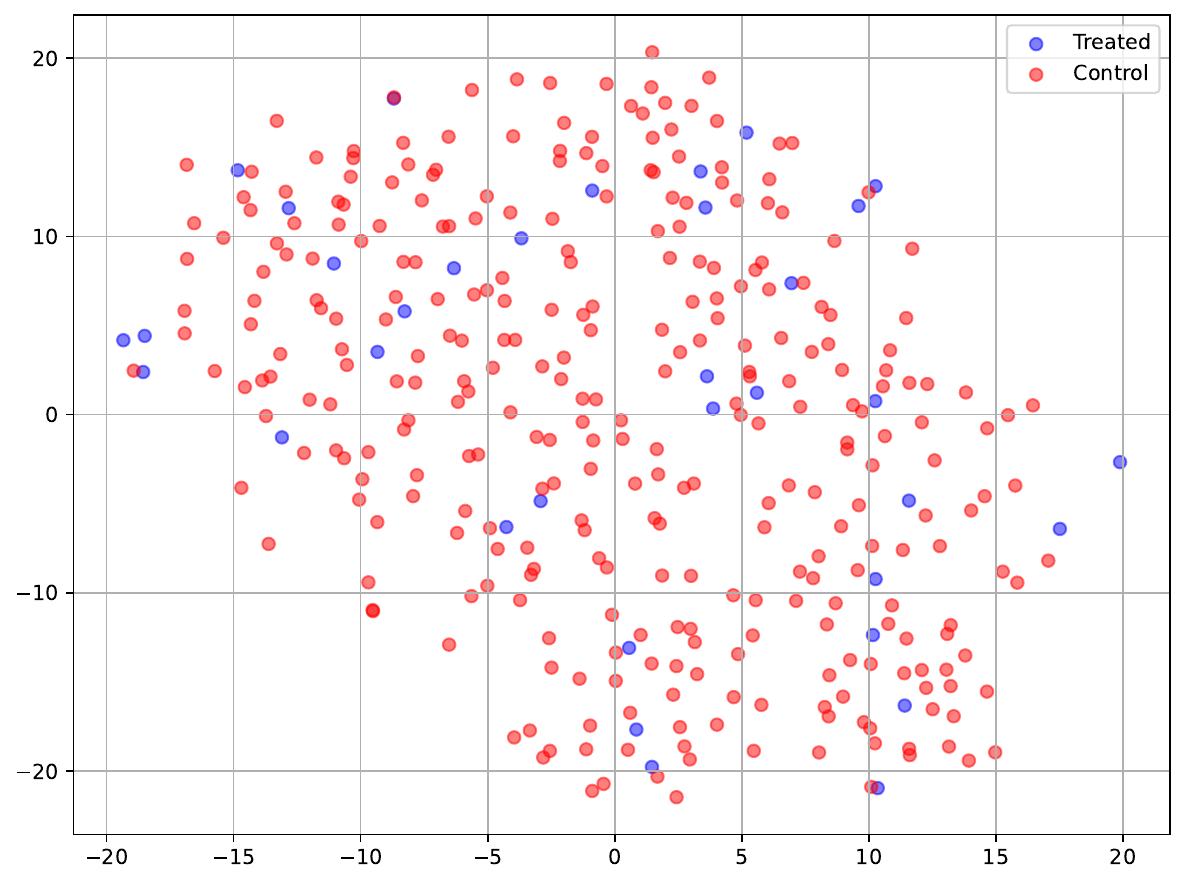}}\\
    \subfigure[QHTE at Step 10]{\includegraphics[width=0.25\textwidth]{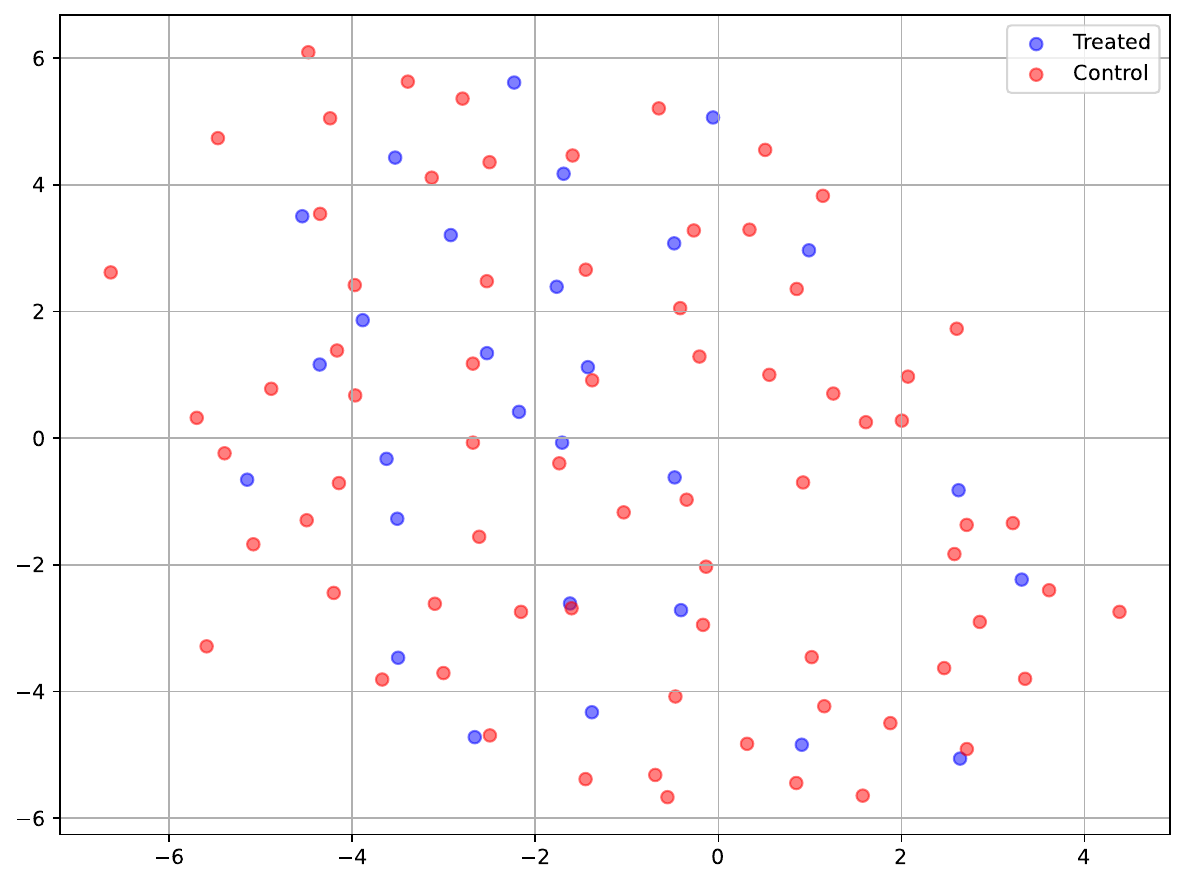}}
  \subfigure[QHTE at Step 15]{\includegraphics[width=0.25\textwidth]{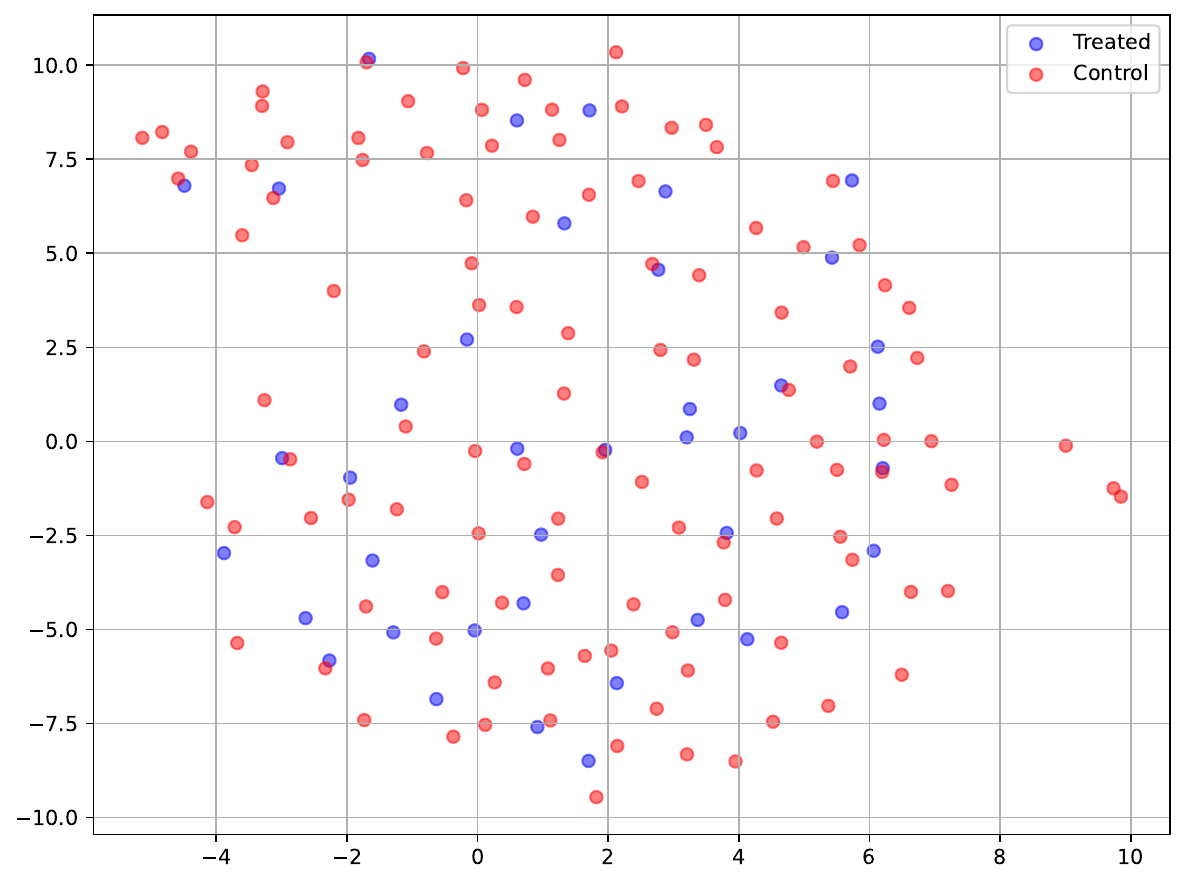}}
  \subfigure[QHTE at Step 35]{\includegraphics[width=0.255\textwidth]{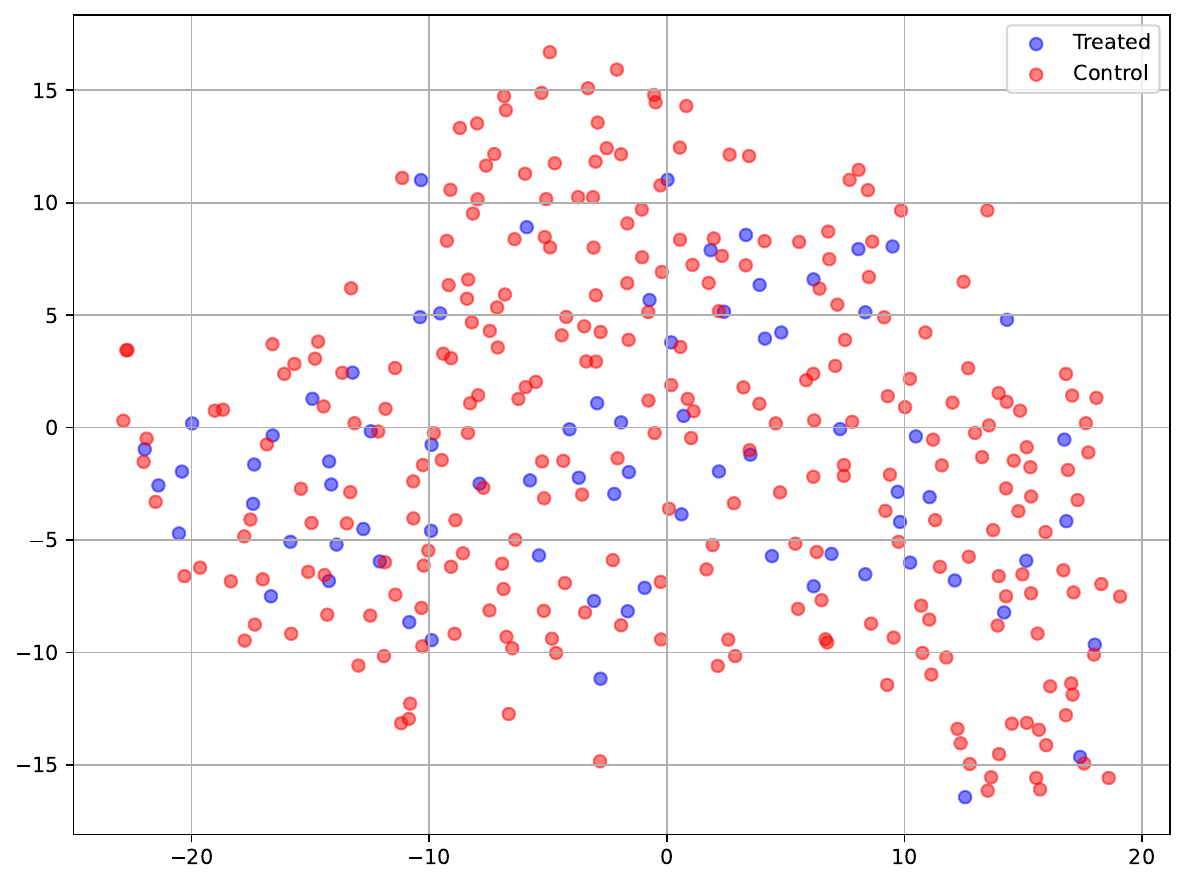}}\\
    \subfigure[$\mu\rho$BALD at Step 10]{\includegraphics[width=0.25\textwidth]{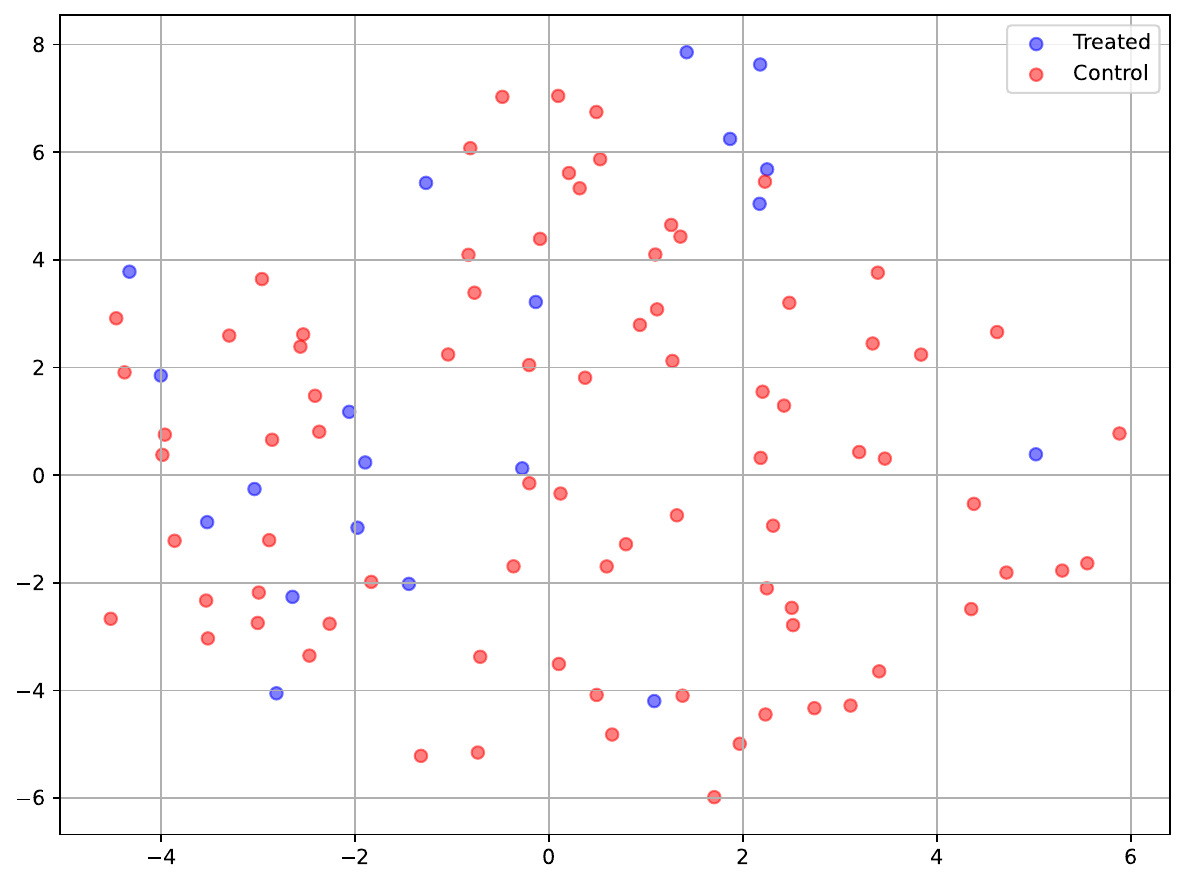}}
  \subfigure[$\mu\rho$BALD at Step 15]{\includegraphics[width=0.25\textwidth]{figures/appendix_tsne/ihdp_truemurho_14.pdf}}
  \subfigure[$\mu\rho$BALD at Step 35]{\includegraphics[width=0.255\textwidth]{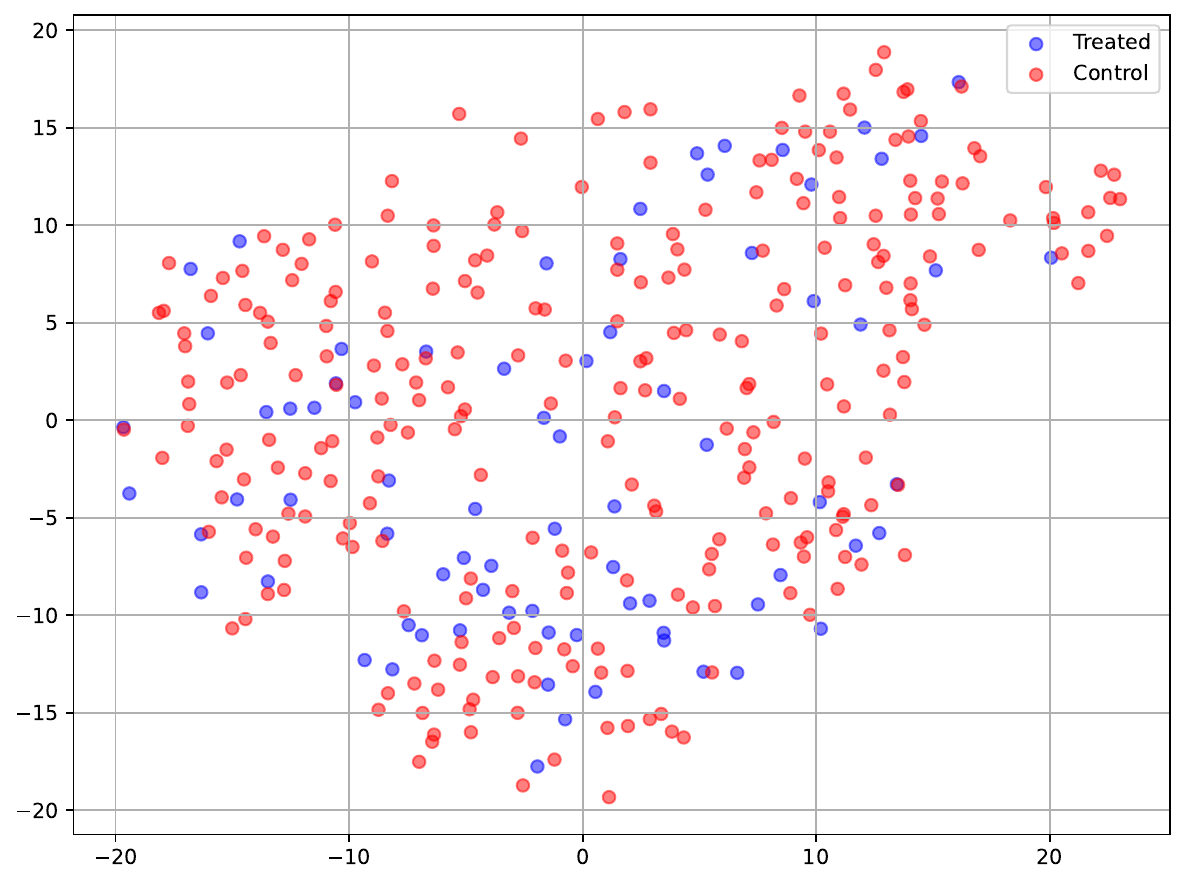}}
  
  \caption{Visualization of the post-acquisition training set at query step 10, 15, and 35 via t-SNE on IHDP dataset.}\label{appendix:tsne_ihdp}
\end{figure*}

\begin{table*}[t!]
\centering
\caption{Search Space and Tuned Hyperparameters for DUE-DNN and DUE-CNN}\label{table:search_space}
\begin{tabularx}{\textwidth}{X *{4}{>{\centering\arraybackslash}X}} 
\toprule
\multicolumn{1}{l}{Hyperparameters} & \multicolumn{1}{c}{Search Space}  & \multicolumn{1}{c}{DUE-DNN} & \multicolumn{1}{c}{DUE-CNN}\\
\midrule
Kernel & [RBF, Matern, RQ] & RBF & Matern\\
Inducing Points & [50, 100, 200] & 100 & 100\\
Hidden Neurons & [100, 200, 500] & 200 & 200\\
Depth & [2,3,5] & 3 & 2\\
Dropout Rate & [0.05, 0.1, 0.25] & 0.1 & 0.05\\
Spectral Norm & [0.95, 1.5, 3.0] & 0.95 & 3.0\\
Batch Size & [64, 100, 200] & 100 & 64\\
Learning Rate & [1e-3, 1e-4] & 1e-3 & 1e-3\\
\bottomrule
\end{tabularx}
\end{table*}

\subsection{Toy Dataset\label{appendix:toy_dataset}}

We simulate the one-dimensional toy dataset for a simple demonstration of the importance of considering minimizing the model variance and distributional discrepancy altogether during label acquisition.

\textbf{For samples with treatment status $t=1$}: the first 100 samples are from the interval of [-12, 10] with equal spacing, and the second 400 samples are from the normal distribution with mean -2.5 and variance 1. 

\textbf{For samples with treatment status $t=0$:} the first 500 samples are from the interval of [-10, 11] with equal spacing, and the second 2000 samples are from the normal distribution with mean 2.5 and variance 1. 

We have in total 500 samples with treatment status $t=1$ and 2500 samples with treatment status $t=0$ to form the imbalanced treatment groups as the entire dataset, then we do train/test split with 3:1 ratio for the model evaluation. The data-generating process is described mathematically as follows:
\begin{subequations}
\begin{align}
    &\begin{cases}
        &x^{t=1}_{i} = -12 + i\cdot\frac{10-(-12)}{100-1}, \text{ for } i\in[1, 100]\\
        &x^{t=1}_{i}\sim\mathcal{N}(-2.5, 1), \text{ for } i\in[101, 400]
    \end{cases}\\
    &\begin{cases}
        &x^{t=0}_{j} = -10 + j\cdot\frac{11-(-10)}{500-1}, \text{ for } j\in[1, 500]\\
        &x^{t=0}_{j}\sim\mathcal{N}(2.5, 1), \text{ for } j\in[101, 2000]
    \end{cases}\\
    &\begin{cases}
        &y^{t=1}_{i} = \sin{(2\cdot x^{t=1}_{i})}, \forall i\\
        &y^{t=0}_{j} = \cos{(2\cdot x^{t=1}_{j})}, \forall j
    \end{cases}
\end{align}
\end{subequations}

\subsection{Hyperparameters\label{appendix:hyperparameters}}

We conduct all the experiments with 48GB NVIDIA A40 on Ubuntu 22.04 LTS platform where GPU training is enabled, otherwise the 12th Gen Intel i7-12700K 12-Core 20-Thread CPU is used. The standard hyperparameter tuning on the validation set which is further split from the train set with 3:1 ratio, the best hyperparameters are selected with the smallest validation loss. Since the DUE models are borrowed from \cite{jesson2021causal}, we acknowledge the model set up from the previous literature and adopt a similar search space as shown in Table \ref{table:search_space}.

\section{Limitation and Future Work\label{appendix:limitation}}

In our proposed risk upper reduction theory, we make further claims for the risk convergence behaviour under two extreme circumstances due to the negligibility of the bounded constant $C_{\phi}$. We believe, the convergence analysis for each of the extreme situations can help justify the algorithm design, i.e., with negligible $C_{\phi}$ the risk upper bound shrinks to the variance term, where keep acquiring the most uncertain samples can enable the rate of convergence is lower-bounded by $\Omega(\beta^{i})$, while, with dominant $C_{\phi}$, the rate of convergence is upper-bounded by $\mathcal{O}(\frac{1}{i+\gamma_{0}})$. We also empirically observe these situations by setting different $C_{\phi}$ via the ablation study in Appendix \ref{section:ablations}, where it is clearly observed that MACAL with dominant $C_{\phi}$ performs the best at the start, but in the mid of the acquisition, a smaller $C_{\phi}$ (not negligible yet) obtains the best performance. However, the limitation of the convergence analysis is, due to technical difficulties, we do not obtain the risk convergence for the entire risk upper bound, i.e., when the $C_{\phi}$ sitting in the middle and making both the variance and the distributional discrepancy comparably important (which can be more realistic). We believe this point of research remains a import direction to be figured out in future work. 

Additionally, our designed algorithm MACAL, even though bring down the NP-hard combinatorial optimization to be approximately solved in polynomial time, i.e., $\mathcal{O}(N^{2}_{pool})$. When facing a significant large pool set with hundreds of millions of samples, the squared time complexity still suffers from considerable computational problems and become undesirable. Thus, future research on how to further reduce the algorithm time complexity is also an important direction to go when facing large real-world datasets.

\section{Broader Impacts\label{appendix:broader_impact}}

Causal effect estimation with active Learning could potentially have broader impacts on society if the algorithm is leveraged to deal with the treatment effect estimation in reality. One of the representative examples can be the hospital scenario, where patients' information is used for the training of the treatment effect estimator. 

When doing the active learning to selectively screen the samples and label them, once the AL algorithm identifies the informative sample to be labelled, the patient's individual information (features), and the corresponding treatment effect would be revealed. Subsequently, by labelling more informative samples, the positive impact is that a more precise treatment effect estimator can be trained on the ongoing growing training set, and help make more precise decision on the patient's treatment plan. However, the negative impact is, that the identified patients need to reveal their treatment information which can introduce privacy concerns and go against their will. Thus, when the causal effect active learning algorithm is used in the real world, the conductors should strictly consider the negative impact on the patient's privacy and its willing during the label acquisition process.


\end{document}